\def\eqref#1{equation~\ref{#1}}
\def\1{\bm{1}}
\def\vd{{\bm{d}}}
\def\ve{{\bm{e}}}
\def\vp{{\bm{p}}}
\def\vq{{\bm{q}}}
\def\vu{{\bm{u}}}
\def\vv{{\bm{v}}}
\def\vw{{\bm{w}}}
\def\vz{{\bm{z}}}
\def\mD{{\bm{D}}}
\def\mH{{\bm{H}}}
\def\mI{{\bm{I}}}
\def\mM{{\bm{M}}}
\def\mT{{\bm{T}}}
\def\mU{{\bm{U}}}
\def\mV{{\bm{V}}}
\def\mX{{\bm{X}}}
\def\mY{{\bm{Y}}}
\DeclareMathAlphabet{\mathsfit}{\encodingdefault}{\sfdefault}{m}{sl}
\SetMathAlphabet{\mathsfit}{bold}{\encodingdefault}{\sfdefault}{bx}{n}
\newcommand{\R}{\mathbb{R}}
\newcommand{\Var}{\mathrm{Var}}
\DeclareMathOperator{\Tr}{Tr}
\newtheorem{definition}{Definition}[section]
\newtheorem{theorem}{Theorem}
\newtheorem{lemma}{Lemma}
\theoremstyle{remark}
\newtheorem*{remark}{Remark}
	\theoremstyle{plain}
	\newtheorem{assumption}{Assumption}
\theoremstyle{definition}
\newcommand{\printfnsymbol}[1]{%
  \textsuperscript{\@fnsymbol{#1}}%
}
\title{Deep Curvature Suite}
\author{ {\bf Diego Granziol\thanks{equal contribution}} \\
Oxford University \\
\And
{\bf Xingchen Wan\printfnsymbol{1}}  \\
Oxford University          \\
\And
{\bf Timur Garipov\printfnsymbol{1}}   \\
Samsung AI Moscow \\
}
\begin{document}

\maketitle

\begin{abstract}

Despite providing rich information into neural networks geometry and applications in Bayesian neural networks and second order optimisation, accessing curvature information is still a daunting engineering challenge and hence inaccessible to most practitioners. In some cases, proxy diagonal approximations, which we show on both real and synthetic examples can be arbitrarily bad. 
We hence provide an open-source software package to the community, to enable easy access to curvature information for real networks and datasets, not just toy examples and is typically an order of magnitude faster than competing packages. We address and disprove many common misconceptions in the literature, namely that the Lanczos algorithm learns eigenvalues from the top down. We also prove using high dimensional concentration inequalities that for specific classes of matrices a single random vector is sufficient for accurate spectral estimation. 
We showcase our package practical utility on a series of examples based on realistic modern neural networks tested on CIFAR-$10$/$100$ datasets.

\end{abstract}

\section{Introduction}

The success of deep learning models trained with gradient based optimizers on a range of tasks, from image classification/segmentation, natural language processing to reinforcement learning, often beating human level performance, has led to an explosion in the availability and ease of use of high performance software for their implementation. Automatic differentiation packages such as TensorFlow \citep{abadi2016tensorflow} and PyTorch \citep{paszke2017automatic} have become widely adopted, with higher level packages allowing users to state their model, dataset and optimiser in a few lines of code \citep{chollet2015}, effortlessly achieving state of the art performance.
\newline
\newline
However, the pace of development of software packages extracting second order information, representing the curvature at a point in weight space, has not kept abreast. Researchers aspiring to evaluate or use curvature information need to implement their own libraries, which are rarely shared or kept up to date. Naive implementations are computationally intractable for all but the smallest of models. Hence, researchers typically either completely ignore curvature information or use highly optimistic approximations, such as the diagonal elements of the matrix or of a surrogate matrix, with limited justification or empirical analysis of the harshness of the aforementioned approximations. 
\newline
\newline
Although the combination of fast Hessian vector products \citep{pearlmutter1994fast}, advanced linear algebraic techniques \citep{golub1994matrices} and high dimensional geometry \citep{hutchinson1990stochastic} holds the key to solving to making significant process in this space, the lack of focus given to these areas, means that implementations of these methods (to the extent that they exist at all), are either inefficient or incorrect. In this paper we
\begin{itemize}
    \item Motivate the use of curvature information, for both optimization, generalization, understanding the quality of optima, the effect of normalization techniques and evaluating theoretical assumptions
    \item Provide a primer on iterative methods, particularly the combination of the Lanczos algorithm with random seed vectors, typical pitfalls and errors in its implementation and interpretation. We highlight its link to orthogonal polynomials, the problem of moments, high dimensional concentration theorems and how it can be used to generate a highly representative spectral approximation with minimal computational or memory overhead
    \item Provide a primer on typical matrix assumptions used for evaluating spectra in deep learning, such as the diagonal or diagonal generalised Gauss-Newton approximation and evaluate their efficacy on both small random matrices corresponding to known eigenvalue distributions and deep neural networks
    \item We provide an open-source $2$nd-order PyTorch based software package, the  \textbf{Deep Curvature} suite\footnote{Available at \url{https://github.com/xingchenwan/MLRG_DeepCurvature}}, which allows for spectral visualisation of the Hessian and Generalised Gauss Newton, loss surface eigen-traversal, gradient, hessian and loss variance calculation on large expressive modern networks. We present examples of its usage on VGG networks \citep{simonyan2014very} and ResNets \citep{he2016deep} in a matter minutes on a single GPU. We further provide an implementation of iterative stochastic Newton methods for deep learning algorithms.
\end{itemize}

We use the GPytorch implementation \citep{gardner2018gpytorch} of the \textit{Lanczos algorithm} \citep{meurant2006lanczos}, which we introduce in Section \ref{sec:lanczos} and discuss the most common misconceptions in the literature in Section \ref{sec:misconceptionslanczos}. We also note that the GPyTorch implementation \citep{gardner2018gpytorch} has been used in a similar way to efficiently compute Hessian eigenspectra in \citet{izmailov2019subspace} and \citet{maddox2019simple}.

\section{The Importance of Curvature in Deep Learning}
The curvature at a point in weight-space informs us about the local conditioning of the problem, which determines the rate of convergence for first order methods and informs us about the optimal learning and momentum rates \citep{nesterov2013introductory}. The most common areas where curvature information is employed are analyses of the \textbf{Loss Surface} and \textbf{Newton} type methods in optimization. 

\subsection{Loss Surfaces}
Loss surface visualization of deep neural networks, have often focused on two dimensional slices of random vectors \citep{li2017visualizing} or the changes in the loss traversing a set of random vectors drawn from the $d$-dimensional Gaussian distribution \citep{izmailov2018averaging}. 
Recent empirical analyses of the neural network loss surfaces invoking full eigen-decomposition \citep{sagun2016eigenvalues,sagun2017empirical} have been limited to toy examples with $<5000$ parameters. Other works have used the diagonal of the Fisher information matrix \citep{chaudhari2016entropy}, an assumption we will challenge in this paper. Theoretical analysis relating the loss surface to spin-glass models from condensed matter physics and random matrix theory \citep{choromanska2015open,choromanska2015loss} rely on a number of unrealistic assumptions\footnote{Such as input independence.}.
Given that the spectra of many classes of random matrices are known \citep{tao2012topics,akemann2011oxford}, it may be helpful to visualise the spectra of large real networks and commonly used datasets to evaluate whether they match theoretical predictions. Other important areas of loss surface investigation include understanding the effectiveness of batch normalization \citep{ioffe2015batch}. Recent convergence proofs \citep{santurkar2018does} bound the maximal eigenvalue of the Hessian with respect to the activations and bounds with respect to the weights on a per layer basis. Bounds on a per layer basis do not imply anything about the bounds of the entire Hessian and furthermore it has been argued that the full spectrum must be calculated to give insights on the alteration of the landscape \citep{kohler2018exponential}. Recent work making curvature information more available, again through diagonal approximations, explicitly disallows the use of batch normalization \citep{dangel2019backpack}. Our software package extends seamlessly to batch normalization, allowing for evaluation in both train and eval mode.


\subsection{Newton Methods in Deep Learning}
All second order methods solve the minimisation problem for the loss, $L$ associated with parameters $\vp$ and perturbation $\vd$ to the second order in Taylor expansion,
\begin{eqnarray}
&\vd^{*} = \text{argmin}_{\vd} L(\vp+\vd) \nonumber\\
&L(\vp+\vd) =  L(\vp) + \nabla L^{T}\vd + \frac{1}{2}\vd^{T}\bar{\mH}\vd
\end{eqnarray}
Where instead of the true Hessian $\mH  = \nabla \nabla L(\vp) \in \mathbb{R}^{n\times n}$, a surrogate positive definite approximation to the Hessian $\bar{\mH}$, such as the Gauss-Newton, is employed so to make sure the minimum is lower bounded;  its solution is
\begin{equation}
\label{eq:update}
\vd = -\bar{\mH}^{-1}\nabla L(\vp) = -\sum_{i}^{N}\frac{1}{\lambda_{i}}\vu_{i}\vu_{i}^{T}\nabla L(\vp)
\end{equation}
where $\vu_{i}$ correspond to the generalised Hessian eigenvectors. The parameters are updated with $\vp = \vp - \alpha \vd$, in which $\alpha$ is the global learning rate. 

Despite the success of second order optimisation for difficult problems on which SGD is known to stall, such as recurrent neural networks \citep{martens2012training}, or auto-encoders \citep{Martens2016}. Researchers wanting to implement second order methods such as \citep{vinyals2012krylov,martens2012training,dauphin2014identifying} face the aforementioned problems of difficult implementation. As a minor contribution, we also include two stochastic Lanczos based optimisers in our code. We plot the training error of the VGG-$16$ network on CIFAR-$100$ dataset against epoch in Figure \ref{fig:secondorder}. We keep the ratio of damping constant to learning rate constant, where $\delta = 10 \alpha$, for a variety of learning rates in $\{1,0.1,0.01,0.001,0.0001\}$ with a batch size of $128$ for both the gradient and the curvature, all of which post almost identical performance. We also compare against different learning rates of Adam, both of which converge significantly slower per iteration compared to our stochastic Newton methods, and we in black plot SGD, with a typical learning rate of $0.05$ and momentum $0.9$, which has an unstable optimisation trajectory.
\begin{figure}
    \centering
    \includegraphics[trim=0cm 0cm 0cm 0cm, clip, width=1\linewidth]{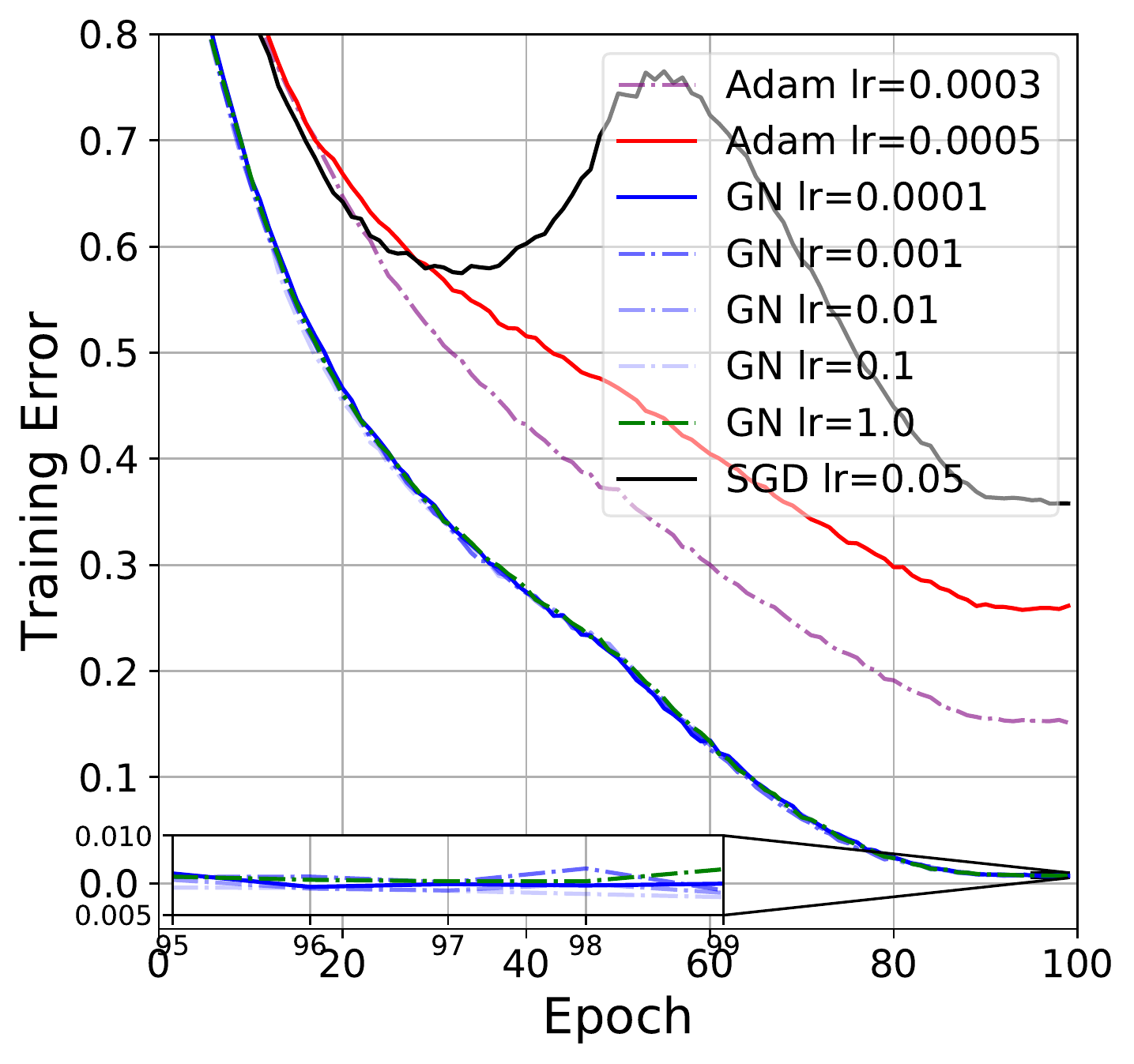}
    \caption{Training Error of stochastic Lanczos Newton methods on CIFAR-$100$ VGG-$16$ against baselines.}
    \label{fig:secondorder}
\end{figure}


\paragraph{Bayesian Neural Networks} As a minor remark, we note that Bayesian neural networks use the Laplace approximation, featuring the inverse of the Hessian multiplied by a vector \citep{bishop2006pattern}. Our code allows for an estimation of this quantity, which may also be of use to the community and serve as an alternative for KFAC-Laplace \citep{ritter2018a}.


\section{Learning to love Lanczos}
\label{sec:lanczos}



The Lanczos Algorithm (Algorithm \ref{alg:lanczos}) is an iterative algorithm for learning a subset of the eigenvalues/eigenvectors of any Hermitian matrix, requiring only matrix vector products. It can be regarded as a far superior adaptation of the power iteration method, where the Krylov subspace $\mathscr{K}(\mH,\vv) = \mathrm{span}\{\vv, \mH^\vv,\mH^{2}\vv...\}$ is orthogonalised using Gram-Schmidt. 
Beyond having improved convergence to the power iteration method \citep{bai1996some} by storing the intermediate orthogonal vectors in the corresponding Krylov subspace, 
Lanczos produces estimates of the eigenvectors and eigenvalues of smaller absolute magnitude, known as Ritz vectors/values. Despite its known superiority to the power iteration method and relationship to orthogonal polynomials and hence when combined with random vectors the ability to estimate the entire spectrum of a matrix, these properties are often ignored or forgotten by practitioners, we hence include a full primer in Appendix \ref{sec:whateveryoneshouldknowlanczos}. We also explicitly debunk some key persistent myths in the next section.


\section{Common Misconceptions}
\label{sec:misconceptionslanczos}

\begin{itemize}
    \item We can learn the negative and interior eigenvalues by shifting and inverting the matrix sign $\mH \rightarrow -\mH +\mu\mI$
        \item Lanczos learns the largest $m$ $[\lambda_{i},\vu_{i}]$ pairs of $\mH \in \mathbb{R}^{P\times P}$ with high probability \citep{dauphin2014identifying}
\end{itemize}
Since these two related beliefs are prevalent, we disprove them explicitly in this section, with Theorems \ref{theorem:idiotsoflanczosmyth1} and \ref{theorem:idiotsoflanczosmyth2}.
\begin{theorem}
\label{theorem:idiotsoflanczosmyth1}
The shift and invert procedure $\mH \rightarrow -\mH + \mu \mI$, changes the Eigenvalues of the Tri-diagonal matrix $\mT$ (and hence the Ritz values) to $\lambda_{i} = -\lambda_{i}+\mu$
\end{theorem}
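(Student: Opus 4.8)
The plan is to lean on the single structural fact behind Lanczos: run on $\mH$ with seed $\vv$, after $m$ steps it returns an orthonormal basis $\mathbf{Q} = [\vq_1,\dots,\vq_m]$ with $\vq_1 = \vv/\|\vv\|$ of the Krylov subspace $\mathscr{K}_m(\mH,\vv) = \mathrm{span}\{\vv,\mH\vv,\dots,\mH^{m-1}\vv\}$, and the reported tridiagonal matrix is $\mT = \mathbf{Q}^{T}\mH\mathbf{Q}$, whose eigenvalues are the Ritz values. So it suffices to track what the affine substitution $\mH \mapsto \tilde{\mH} := -\mH + \mu\mI$ does to $\mathbf{Q}$ and to $\mathbf{Q}^{T}\tilde{\mH}\mathbf{Q}$.

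First I would establish that the Krylov subspace is invariant under this substitution: $\mathscr{K}_m(\tilde{\mH},\vv) = \mathscr{K}_m(\mH,\vv)$ for every $m$. A short induction (or a binomial expansion of $(-\mH+\mu\mI)^k$) shows $\tilde{\mH}^{k}\vv \in \mathrm{span}\{\vv,\mH\vv,\dots,\mH^{k}\vv\}$ and, symmetrically, $\mH^{k}\vv \in \mathrm{span}\{\vv,\tilde{\mH}\vv,\dots,\tilde{\mH}^{k}\vv\}$, so the two nested families of subspaces coincide; in particular Lanczos neither breaks down earlier nor later on $\tilde{\mH}$ than on $\mH$. Since Gram--Schmidt applied to the same flag of subspaces yields the same orthonormal vectors up to signs, the Lanczos basis produced from $\tilde{\mH}$ agrees with $\mathbf{Q}$ up to a diagonal $\pm 1$ matrix; as conjugation by such a matrix does not move eigenvalues, I may as well take the basis to be exactly $\mathbf{Q}$.

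Then, using $\mathbf{Q}^{T}\mathbf{Q} = \mI$,
\[
\tilde{\mT} \;=\; \mathbf{Q}^{T}\tilde{\mH}\,\mathbf{Q} \;=\; -\,\mathbf{Q}^{T}\mH\mathbf{Q} + \mu\,\mathbf{Q}^{T}\mathbf{Q} \;=\; \mu\mI - \mT ,
\]
which is again tridiagonal. Finally, if $\mT\vw = \lambda_i\vw$ then $(\mu\mI - \mT)\vw = (\mu - \lambda_i)\vw$, so each Ritz value $\lambda_i$ of $\mH$ is sent to $-\lambda_i + \mu$, which is the claim. I would close with the interpretive remark that this is precisely why ``flipping the sign and shifting'' buys nothing: the Ritz set is merely relabelled by the very same affine map that acts on the true spectrum, so no interior or negative eigenvalue becomes accessible --- in contrast to genuine shift-and-invert $\mH\mapsto(\mH-\sigma\mI)^{-1}$, which is not affine and really does reorder magnitudes.

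The only thing requiring care --- and hence the ``main obstacle'', such as it is --- is the sign/normalization bookkeeping of the Lanczos recurrence (the signs of the off-diagonal entries $\beta_j$, equivalently of the $\vq_j$); the clean way to sidestep it is to argue entirely at the level of subspaces, as above, and let any sign ambiguity be absorbed into a diagonal $\pm1$ conjugation that is invisible to the spectrum. There is no genuine analytic difficulty: the whole content is the affine-invariance of the Krylov subspace.
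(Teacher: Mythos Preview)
Your argument is correct, but it is genuinely different in style from the paper's. The paper works \emph{inside} the Lanczos recurrence: it plugs $-\mH+\mu\mI$ into the update equations of Algorithm~\ref{alg:lanczos}, computes the first couple of $\alpha_j,\beta_j,\vv_j$ explicitly, observes that each $\vv_j$ and $\beta_j$ is literally unchanged while each $\alpha_j$ becomes $-\alpha_j+\mu$, and then closes by induction to get $\tilde{\mT}=-\mT+\mu\mI$. You instead work \emph{outside} the recurrence: you note that the Krylov flag is invariant under any affine map $\mH\mapsto a\mH+b\mI$, invoke the uniqueness (up to column signs) of the orthonormal basis adapted to a fixed flag with fixed first vector, and then read off $\tilde{\mT}=\mathbf{Q}^{T}(-\mH+\mu\mI)\mathbf{Q}=\mu\mI-\mT$ directly from $\mT=\mathbf{Q}^{T}\mH\mathbf{Q}$ and $\mathbf{Q}^{T}\mathbf{Q}=\mI$.

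What each buys: the paper's computation is self-contained from the algorithm listing and, as a by-product, shows the stronger pointwise statement that the Lanczos vectors and off-diagonals are \emph{exactly} unchanged (no sign flips in fact occur). Your subspace argument is cleaner, avoids all recurrence bookkeeping, makes the generalisation to arbitrary affine shifts $a\mH+b\mI$ immediate, and highlights the conceptual reason the trick is vacuous---affine maps preserve the Krylov flag---while simultaneously explaining why true shift-and-invert $(\mH-\sigma\mI)^{-1}$ is not vacuous. Your handling of the sign ambiguity via a diagonal $\pm1$ conjugation is the right way to dispose of it at this level of abstraction.
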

\begin{proof}
Following the equations from Algorithm \ref{alg:lanczos}
\begin{equation}
\begin{aligned}
    & \vw_{1}^{T} = (-\mH+\mu\mI)\vv_{1} \thinspace \thinspace \&~\alpha_{1} = \vv_{1}^{T}\mH\vv_{1} + \mu\mI \\
    & \vw_{2} = \vw_{1}-\alpha_{1}\vv_{1} 
    = (\mH + \mu\mI)\vv_{1}-(\vv_{1} ^{T}\mH\vv_{1}  + \mu\mI)\vv_{1} \\
    & \vw_{2} = (\mH - \vv_{1}^{T}\mH\vv_{1})\vv_{1} \thinspace \thinspace \&~\vv_{2} = \vw_{2}/||\vw_{2}|| \\
    & \alpha_{2} = \vv_{2}^{T}(-\mH+\mu \mI)\vv_{2} = -\vv_{2}^{T}\mH\vv_{2} + \mu\\
    & \beta_{2} = ||\vw_{2}||
\end{aligned}
\end{equation}
Assuming this for $m-1$, and repeating the above steps for $m$ we prove by induction and finally arrive at the modified tridiagonal Lanczos matrix $\Tilde{\mT}$
\begin{equation}
\begin{aligned}
    & \Tilde{\mT} = -\mT + \mu \mI\\
    & \tilde{\lambda_{i}} = -\lambda_{i}+\mu \thinspace \thinspace \forall 1 \leq i \leq m
\end{aligned}
\end{equation}
\end{proof}
\begin{remark}
 No new Eigenvalues of the matrix $\mH$ are learned. Although it is clear that the addition of the identity does not change the Krylov subspace, such procedures are commonplace in code pertaining to papers attempting to find the \textit{smallest eigenvalue}. This disproves the first misconception.
\end{remark}

\begin{theorem}
\label{theorem:idiotsoflanczosmyth2}
For any matrix $\mH \in \mathbb{R}^{P \times P}$ such that $\lambda_{1} > \lambda_{2} > ..... >\lambda_{P}$ and $\sum_{i=1}^{m}\lambda_{i} < \sum_{i=m+1}^{P}\lambda_{i}$ in expectation over the set of random vectors $\vv$ the $m$ eigenvalues of the Lanczos Tridiagonal matrix $\mT$ do not correspond to the top $m$ eigenvalues of $\mH$
\end{theorem}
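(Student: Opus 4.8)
The plan is to characterise the Ritz values (the eigenvalues of $\mT$) through the orthogonal-polynomial / Gauss-quadrature picture of the Lanczos process recalled in Appendix~\ref{sec:whateveryoneshouldknowlanczos}. Diagonalising $\mH = \sum_{i=1}^{P}\lambda_i\vu_i\vu_i^{T}$ and expanding the seed as $\vv = \sum_i c_i\vu_i$ with $c_i = \vv^{T}\vu_i$, the $m$-step run of Algorithm~\ref{alg:lanczos} produces a tridiagonal $\mT$ whose eigenvalues $\theta_1 \geq \dots \geq \theta_m$ are exactly the nodes of the $m$-point Gauss quadrature rule for the discrete spectral measure $\mu_{\vv} = \sum_{i=1}^{P} c_i^{2}\,\delta_{\lambda_i}$; equivalently they are the roots of the degree-$m$ polynomial orthogonal with respect to $\mu_{\vv}$, and $\sum_{j} w_j \theta_j^{k} = \vv^{T}\mH^{k}\vv$ for $0 \leq k \leq 2m-1$, where $w_j$ is the squared first entry of the $j$-th eigenvector of $\mT$.

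First I would discard the degenerate seeds. The event $\{c_i = 0 \text{ for some } i\}$ is a finite union of hyperplanes, so it has probability zero under any absolutely continuous law for $\vv$; hence almost surely $\mu_{\vv}$ charges all $P$ distinct atoms $\lambda_1 > \dots > \lambda_P$. A classical property of Gauss quadrature is then that for $m < P$ the $m$ nodes are simple and lie strictly inside the convex hull of the support, so $\lambda_P < \theta_m < \dots < \theta_1 < \lambda_1$. In particular $\lambda_1$ is never a Ritz value, so $\{\theta_1,\dots,\theta_m\}$ cannot equal $\{\lambda_1,\dots,\lambda_m\}$; moreover Cauchy interlacing for $\mT = \mQ^{T}\mH\mQ$ (with $\mQ$ the matrix of Lanczos vectors) gives $\theta_j \leq \lambda_j$ for every $j$, with $\theta_1 < \lambda_1$ a.s., hence $\Tr\mT = \sum_j \theta_j < \sum_{i=1}^{m}\lambda_i$ strictly. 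This already yields the conclusion for almost every $\vv$, and therefore in expectation.

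To see that the hypothesis $\sum_{i=1}^{m}\lambda_i < \sum_{i=m+1}^{P}\lambda_i$ makes the failure drastic rather than marginal, I would take expectations in the moment identities. For an isotropic unit seed, $\E[c_i^{2}] = 1/P$, so $\E[\mu_{\vv}]$ is the uniform spectral measure $\tfrac1P\sum_i\delta_{\lambda_i}$ and the first-moment identity gives $\E\big[\sum_j w_j\theta_j\big] = \tfrac1P\Tr\mH = \bar\lambda$. The hypothesis says precisely that $\bar\lambda$ is a weighted average in which the $P-m$ sub-outlier eigenvalues (each $< \lambda_m$) carry more than half of the total mass, so the expected quadrature centre of mass sits inside the bulk; combined with the strict interlacing bound, which gives the explicit gap $\sum_{i=1}^{m}\lambda_i - \E\big[\sum_j\theta_j\big] \geq \lambda_1 - \E[\theta_1] > 0$, this pins a definite fraction of the Ritz budget below $\lambda_m$, so the $\theta_j$ populate the bulk rather than tracking the top $m$ eigenvalues.

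The main obstacle is exactly this last quantitative step. Turning "the averaged spectral measure is bulk-heavy" into a clean, regime-independent statement about where the individual Gauss nodes sit requires care: the ratio $m/P$ enters, and for indefinite $\mH$ the signed-sum hypothesis behaves differently from the positive-definite case, so a naive argument (e.g. deducing $\bar\lambda < \lambda_m$ from the hypothesis) fails in general. I would therefore keep the bulk conclusion at the level of the moment identity $\E[\sum_j w_j\theta_j] = \bar\lambda$ together with the interlacing gap above, rather than attempting a pointwise claim that $\theta_j \approx \lambda_j$ fails for each $j$ — the former is robust and already contradicts the "learns the top $m$ pairs" folklore, which is all the statement requires.
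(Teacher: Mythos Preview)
Your argument is correct but proceeds along a genuinely different line from the paper's. You invoke the standard Gauss--quadrature fact that the $m$ nodes lie strictly in the interior of the support of $\mu_{\vv}$ (so $\theta_1<\lambda_1$ a.s.), together with Cauchy interlacing $\theta_j\le\lambda_j$, and conclude $\{\theta_j\}\neq\{\lambda_1,\dots,\lambda_m\}$ almost surely, hence in expectation. The paper instead applies the shift $\tilde\mH=\mH-\tfrac{\lambda_m+\lambda_{m+1}}{2}\mI$ (legitimate by Theorem~\ref{theorem:idiotsoflanczosmyth1}) so that the top $m$ eigenvalues become positive and the remaining $P-m$ negative; it then uses the expected first-moment identity $\E\!\big[\sum_j w_j\theta_j\big]=\tfrac1P\Tr\tilde\mH$ from stochastic trace estimation and argues that the hypothesis forces $\Tr\tilde\mH<0$, which is incompatible with any choice of positive weights $w_j$ on the (positive) top-$m$ nodes. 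The key difference is the role of the hypothesis: in the paper it is load-bearing, supplying the sign contradiction directly, whereas in your route it is superfluous for the bare ``$\neq$'' conclusion (your a.s.\ interiority argument works for \emph{any} $\mH$ with distinct eigenvalues and $m<P$) and only enters in your informal second paragraph to explain why the Ritz values are bulk-dominated rather than merely not equal. Your approach thus proves a slightly stronger almost-sure statement, while the paper's makes transparent why the stated trace condition is the natural obstruction in the moment picture.
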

\begin{proof}
Let us consider the matrix $\tilde{\mH} = \mH - \frac{\lambda_{m+1}+\lambda_{m}}{2}\mI$, 
\begin{equation}
     \begin{cases}
    \lambda_{i} > 0 ,& \forall i \leq m \\
    \lambda_{i} < 0 ,& \forall i > m \\
    \end{cases}
\end{equation}
Under the assumptions of the theorem, $\text{Tr}(\tilde{\mH})<0$ and hence by Theorem \ref{theorem:lanczosspectrum} and Equation \ref{eq:stochtrace} there exist no $w_{i}>0$ such that
\begin{equation}
\label{eq:idiotsoflanczosp1}
\sum_{i=1}^{m}w_{i}\lambda_{i}^{k} = \frac{1}{P} \sum_{i=1}^{P}\lambda_{i}^{k} \forall \thinspace \thinspace \thinspace \thinspace \thinspace 1 \leq k \leq m
\end{equation}
is satisfied for $k=1$, as the LHS is manifestly positive and the RHS is negative. By Theorem \ref{theorem:idiotsoflanczosmyth1} this holds for the original matrix $\mH$.
\end{proof}
\begin{remark}
Given that Theorem \ref{theorem:idiotsoflanczosmyth2} is satisfied over the expectation of the set of random vectors, which by the CLT is realised by Monte Carlo draws of random vectors as $d \rightarrow \infty$ the only way to really span the top $m$ eigenvectors is to have selected a vector which lies in the $m$ dimensional subspace of the $P$ dimensional problem corresponding to those vectors, which would correspond to knowing those vectors \emph{a priori}, defeating the point of using Lanczos at all.
\end{remark}
Another way to see this is Theorem \ref{theorem:lanczoseigenvalues}, which gives a bound on the distance between the smallest Lanczos-Ritz value and the minimal eigenvalue. 
Intuitively, as the Ritz values and weights form a discrete $m$-moment spectral approximation to the Hessian spectrum, hence the support of the discrete density, cannot approximately match the largest $m$ eigenvalues. 
This can be seen in Figure \ref{subfig:wignerstem10000} where we run Lanczos with $m=30$ steps and capture the shape of the spectral density of a $\mH \in \mathbb{R}^{10000\times 10000}$ matrix including the negative eigenvalue of largest magnitude. 

\section{Deep Curvature}

Based on the Lanczos algorithm, we are in a position to introduce to our package, the \textbf{Deep Curvature suite}, a software package that allows analysis and visualisation of deep neural network curvature. The main features and functionalities of our package are:

\begin{itemize}
    \item \textbf{Network training and evaluation} we provide a range of pre-built modern popular neural network structures, such as VGG and variants of ResNets, and various optimisation schemes in addition to the ones already present in the PyTorch frameworks, such as K-FAC and SWATS. These facilitates faster training and evaluation of the networks (although it is worth noting that any PyTorch-compatible optimisers or architectures can be easily integrated into our analysis framework).
    
    \item \textbf{Eigenspectrum analysis of the curvature matrices} Powered by the Lanczos techniques implemented in
GPyTorch \citep{gardner2018gpytorch} and outlined in Section 3, \textit{with a single random vector} we use the Pearlmutter matrix-vector product trick for fast inference of the eigenvalues and eigenvectors of the common curvature matrices of the deep neural networks. In addition to the standard Hessian matrix, we also include the feature for inference of the eigen-information of the Generalised Gauss-Newton matrix, a commonly used positive-definite surrogate to Hessian\footnote{The computation of the GGN-vector product is similar with the computational cost of two backward passes in the network. Also, GGN uses \textit{forward-mode automatic differentiation} (FMAD) in addition to the commonly employed \textit{backward-mode automatic differentiation} (RMAD). In the current PyTorch framework, the FMAD operation can be achieved using two equivalent RMAD operations.}. 
    
    \item \textbf{Advanced Statistics of Networks} In addition to the commonly used statistics to evaluate network training and performance such as the training and testing losses and accuracy, we support computations of more advanced statistics: For example, we support squared mean and variance of gradients and Hessians (and GGN), squared norms of Hessian and GGN, L2 and L-inf norms of the network weights and etc. These statistics are useful and relevant for a wide range of purposes such as the designs of second-order optimisers and network architecture.
    
    \item \textbf{Visualisations} For all main features above, we include accompanying visualisation tools. In addition, with the eigen-information obtained, we also feature visualisations of the \textbf{loss landscape} by studying the sensitivity of the neural network to perturbations of weights. While similar tools have been available, we would like to emphasise that one key difference is that, instead of the \textit{random} directions as featured in some other packages, we explicitly perturb the weights in the \textit{eigenvector} directions, which should yield more informative results. 

\end{itemize}
\paragraph{Package Structure} The main interface functions are organsed as followed:
\begin{itemize}
    \item \textbf{./core} The functions under \texttt{core} directories are the main analysis tools of the package. \texttt{train\_network} allows network training and saving of the required statistics for subsequent spectrum learning. Based on the output of it, we additionally include tools for spectrum analysis (\texttt{compute\_eigenspectrum}) and advanced loss statistics (such as covariance of gradients and second order information like Hessian variance) in \texttt{compute\_loss\_stats} and \texttt{build\_loss\_landscape}. 
    
    We provide some pre-built network architectures (such as VGG and ResNet architectures) and optimizers apart from PyTorch natives (such as K-FAC, SWATS optimizers). We additionally support Stochastic Weight Averaging proposed in \citep{izmailov2018averaging}. However, it is worth noting that any PyTorch compatible networks and optimizers can be easily integrated in our framework. 
    
    \item \textbf{./visualise} This directory defines the various pre-defined visualisation functions for different purposes, including the visualisation of training, spectrum and the loss landscape.
\end{itemize} 

To facilitate a quick start of our package, we have included an illustrated example of analysis on the VGG-16 network on CIFAR-100 dataset in Appendix \ref{appendix:example}.

\section{Examples on Small Random Matrices}
In this section, we use some examples on small random matrices to showcase the power of our package that uses the Lanczos algorithm with random vectors to learn the spectral density. Here, we look at known random matrices with elements drawn from specific distributions which converge to known spectral densities in the asymptotic limit. Here we consider \textbf{Wigner Ensemble} \citep{wigner1993characteristic} and the \textbf{Marcenko Pastur} \citep{marchenko1967distribution}, both of which are extensively used in simulations or theoretical analyses of deep neural network spectra \citep{pennington2017geometry,choromanska2015loss,anonymous2020towards}.

\subsection{Wigner Matrices}
Wigner matrices can be defined in Definition \ref{def:wigner}, and their distributions of eigenvalues are governed by the semi-circle distribution law (Theorem \ref{def:wignersemicircle}).

\begin{theorem}
\label{def:wignersemicircle}
Let $\{M_{P}\}^{\infty}_{P=1}$ be a sequence of Wigner matrices, and for each $P$ denote $X_{P} = M_{P}/\sqrt{P}$. Then $\mu_{X_{P}}$, converges weakly, almost surely to the semi circle  distribution,
\begin{equation}
    \sigma(x)dx = \frac{1}{2\pi}\sqrt{4-x^{2}}\mathbf{1}_{|x|\leq 2}
\end{equation}
\end{theorem}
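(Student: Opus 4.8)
The plan is to prove this by the classical \textbf{method of moments}. Since the semicircle law $\sigma$ is compactly supported, it is determined by its moments $m_k := \int_{\R} x^k\,\sigma(x)\,dx$ (Carleman's condition holds trivially because $m_k \le 2^k$). Hence it suffices to show that for each fixed $k \in \mathbb{N}$ the random $k$-th moment $\widehat m_k(P) := \int_{\R} x^k\, d\mu_{X_P}(x) = \frac{1}{P}\Tr\!\big(X_P^k\big)$ converges almost surely to $m_k$, and then to intersect the resulting probability-one events over the countable family $\{k \in \mathbb{N}\}$: on that intersection all moments converge, which (by moment-determinacy of $\sigma$) forces $\mu_{X_P} \to \sigma$ weakly.

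First I would record the target: by the substitution $x = 2\sin\theta$ (or a generating-function argument) one gets $m_{2l+1} = 0$ and $m_{2l} = C_l = \frac{1}{l+1}\binom{2l}{l}$, the $l$-th Catalan number. Then I would expand the expectation. Writing $M_P = (\xi_{ij})$ with the $\xi_{ij}$ independent up to symmetry, $\E \xi_{ij} = 0$ and $\E \xi_{ij}^2 = 1$ for $i \ne j$,
\[
\E\,\widehat m_k(P) = \frac{1}{P^{1+k/2}} \sum_{i_1,\dots,i_k = 1}^{P} \E\big[\xi_{i_1 i_2}\,\xi_{i_2 i_3}\cdots \xi_{i_k i_1}\big] .
\]
Associate to each index tuple the closed walk $i_1 \to i_2 \to \cdots \to i_k \to i_1$ on the complete graph on $P$ vertices. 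By independence and mean-zero, a tuple contributes only if every edge of its walk is traversed at least twice; a counting argument then shows the number of walks visiting exactly $r$ distinct vertices is $O(P^r)$, and the "each edge used $\ge 2$ times" constraint forces $r \le k/2 + 1$, with equality only when $k = 2l$ and the walk is a \emph{doubled plane tree} (each of its $l$ edges traversed exactly twice). Such walks are enumerated, after rooting, exactly by $C_l$, and each contributes $\prod(\text{edge-variance}) \to 1$. Hence $\E\,\widehat m_k(P) \to m_k$. If the entries are only assumed to have finite variance rather than all moments finite, one inserts here the standard truncation step (replace $\xi_{ij}$ by $\xi_{ij}\mathbf{1}_{|\xi_{ij}| \le \epsilon\sqrt P}$, bound the resulting operator-norm / rank perturbation, let $\epsilon \to 0$).

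For the almost-sure upgrade I would bound $\Var\big(\widehat m_k(P)\big) = O(P^{-2})$: the second moment minus the squared first moment is a sum over \emph{pairs} of closed walks, and the "joint" configurations that do not factor through the product of two independent walks are suppressed by an extra factor $P^{-2}$ (the two walks must share an edge, which reduces the vertex count by at least two relative to the factorized count). Then $\sum_P \Var\big(\widehat m_k(P)\big) < \infty$, so Chebyshev plus Borel–Cantelli gives $\widehat m_k(P) \to m_k$ almost surely; intersecting over $k$ finishes the proof.

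The main obstacle is the combinatorial core of the expectation computation — proving that among all length-$k$ closed walks exactly the doubled trees survive the $P \to \infty$ limit, and that they are enumerated by Catalan numbers — together with the analogous (but heavier) bookkeeping for the $O(P^{-2})$ variance bound; the moment-determinacy of $\sigma$ and the Borel–Cantelli step are routine by comparison, and the truncation reduction, if needed, is standard.
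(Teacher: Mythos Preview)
Your proposal is the classical moment-method proof of Wigner's semicircle law and is correct as sketched: expand $\E\,\widehat m_k(P)$ as a sum over closed walks, show that only doubled-tree walks survive in the limit and are counted by Catalan numbers, then use an $O(P^{-2})$ variance bound plus Borel--Cantelli for the almost-sure upgrade. This is exactly Wigner's original approach and the textbook argument.

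However, the paper does \emph{not} prove this statement at all. Theorem~\ref{def:wignersemicircle} is quoted as a classical background result (attributed implicitly to \citet{wigner1993characteristic}) and is used only to validate the Lanczos-based spectral estimator on synthetic matrices whose limiting density is known in closed form. So there is no ``paper's own proof'' to compare against: the paper treats the semicircle law as an input, not an output. Your sketch is therefore correct but goes well beyond what the paper attempts; if anything, it would serve as a self-contained appendix the paper chose to omit.
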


For our experiments, we generate random matrices $\mH \in \mathbb{R}^{P \times P}$ with elements drawn from the distribution $\mathcal{N}(0,1)$ for $P = \{225,10000\}$ and plot histogram of the spectra found by eigendecomposition, along with the predicted Wigner density (scaled by a factor of $\sqrt{P}$) in Figures \ref{subfig:wignerhist225} \& \ref{subfig:wignerhist10000} and compare them along with the discrete spectral density approximation learned by lanczos in $m=30$ steps using a single random vector $d=1$ in Figures \ref{subfig:wignerstem225} \& \ref{subfig:wignerstem10000}. It can be seen that even for a small number of steps $m \ll P$ and a single random vector, Lanczos impressively captures not only the support of the eigenvalue spectral density but also its shape. We note as discussed in section \ref{sec:misconceptionslanczos} that the $30$ Ritz values here do not span the top $30$ eigenvalues even approximately. 
\begin{figure}
	\centering
	\begin{subfigure}{0.45\linewidth}
	\centering
    \includegraphics[trim=2.5cm 8.1cm 2.5cm 8.4cm, clip, width=1\linewidth]{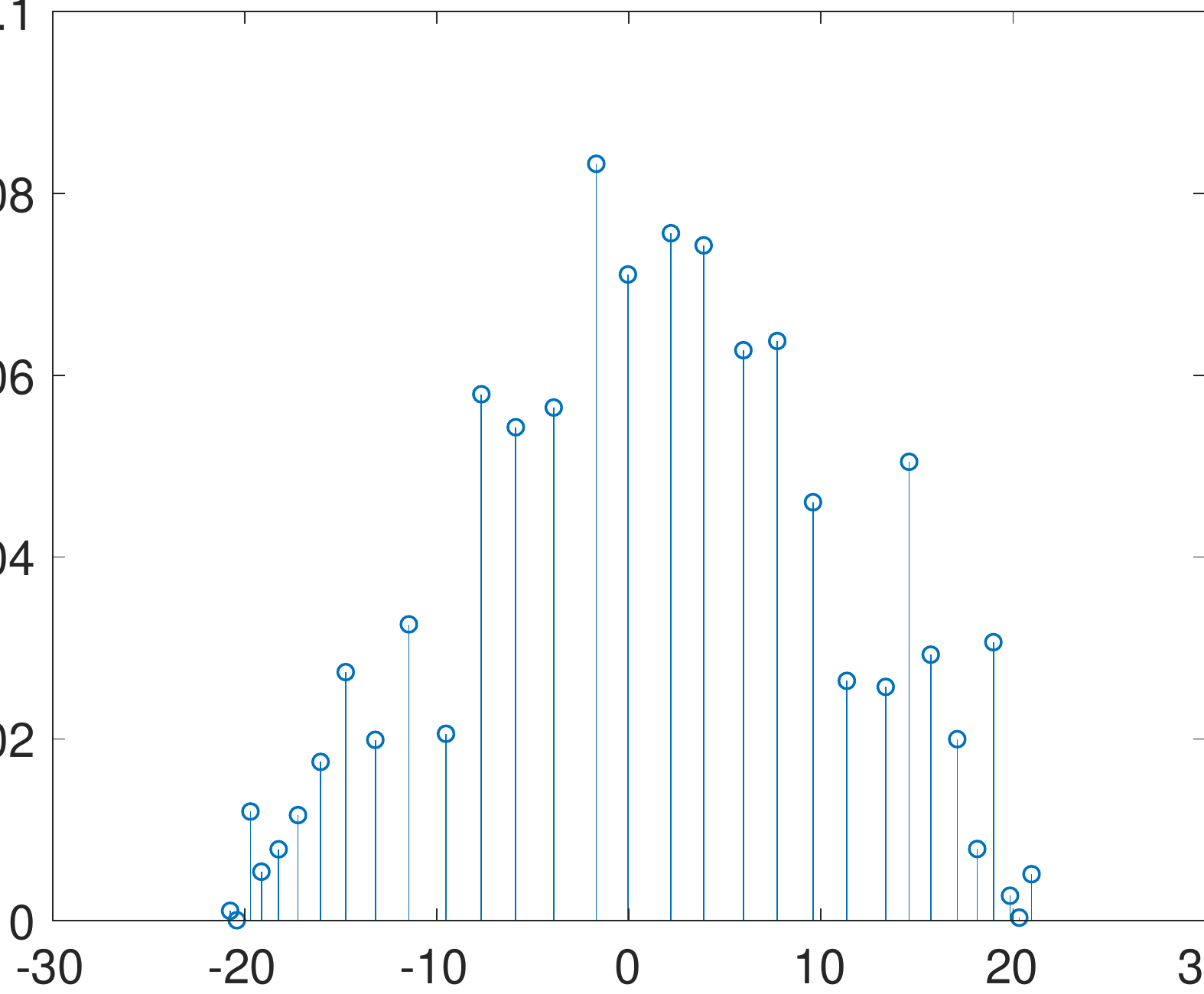}
	\caption{Stem $m=30, P=225$}
	\label{subfig:wignerstem225}	
	\end{subfigure}
	\hspace{5pt}
	\begin{subfigure}{0.45\linewidth}
	\centering
	\includegraphics[trim=2.5cm 8.1cm 2.5cm 8.4cm, clip, width=1\linewidth]{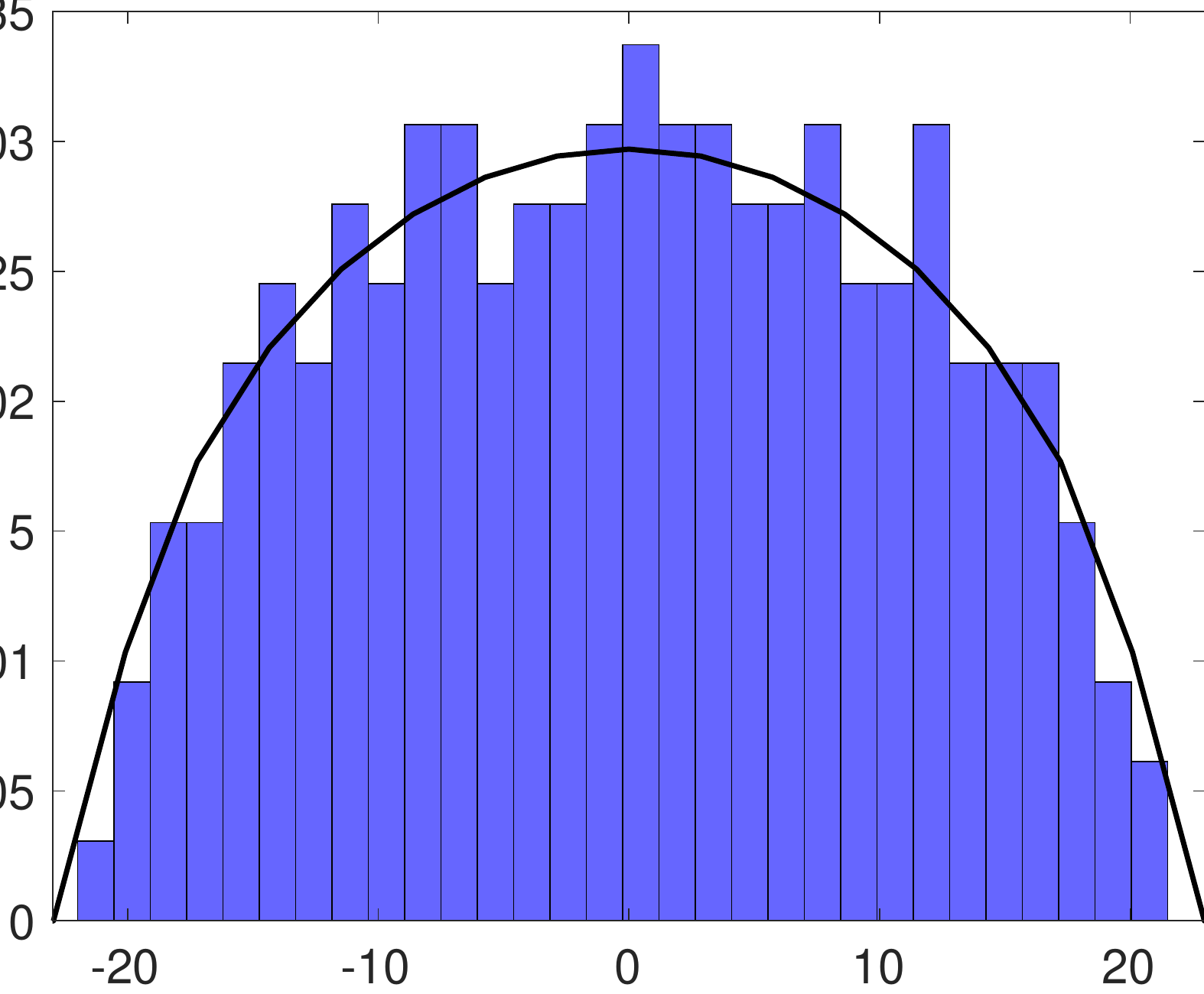}
	\caption{Histogram,$P=225$}
	\label{subfig:wignerhist225}		
	\end{subfigure}
	\begin{subfigure}{0.45\linewidth}
	\centering
	\includegraphics[trim=2.5cm 8.1cm 2.5cm 8.4cm, clip, width=1\linewidth]{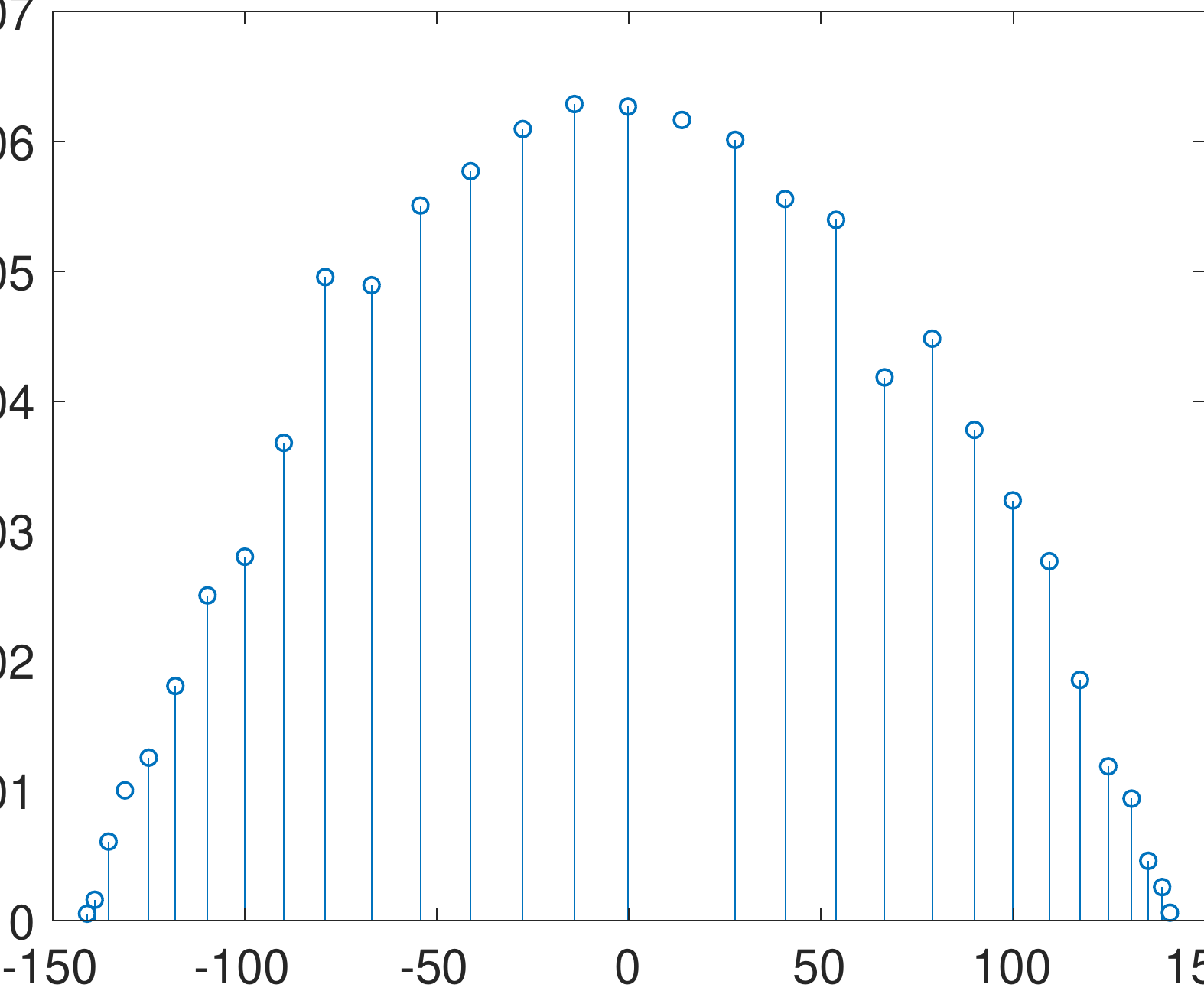}
	\caption{Stem $m=30, P=10^{4}$}
	\label{subfig:wignerstem10000}		
	\end{subfigure}
	\begin{subfigure}{0.45\linewidth}
	\centering
	\includegraphics[trim=2.5cm 8.1cm 2.5cm 8.4cm, clip, width=1\linewidth]{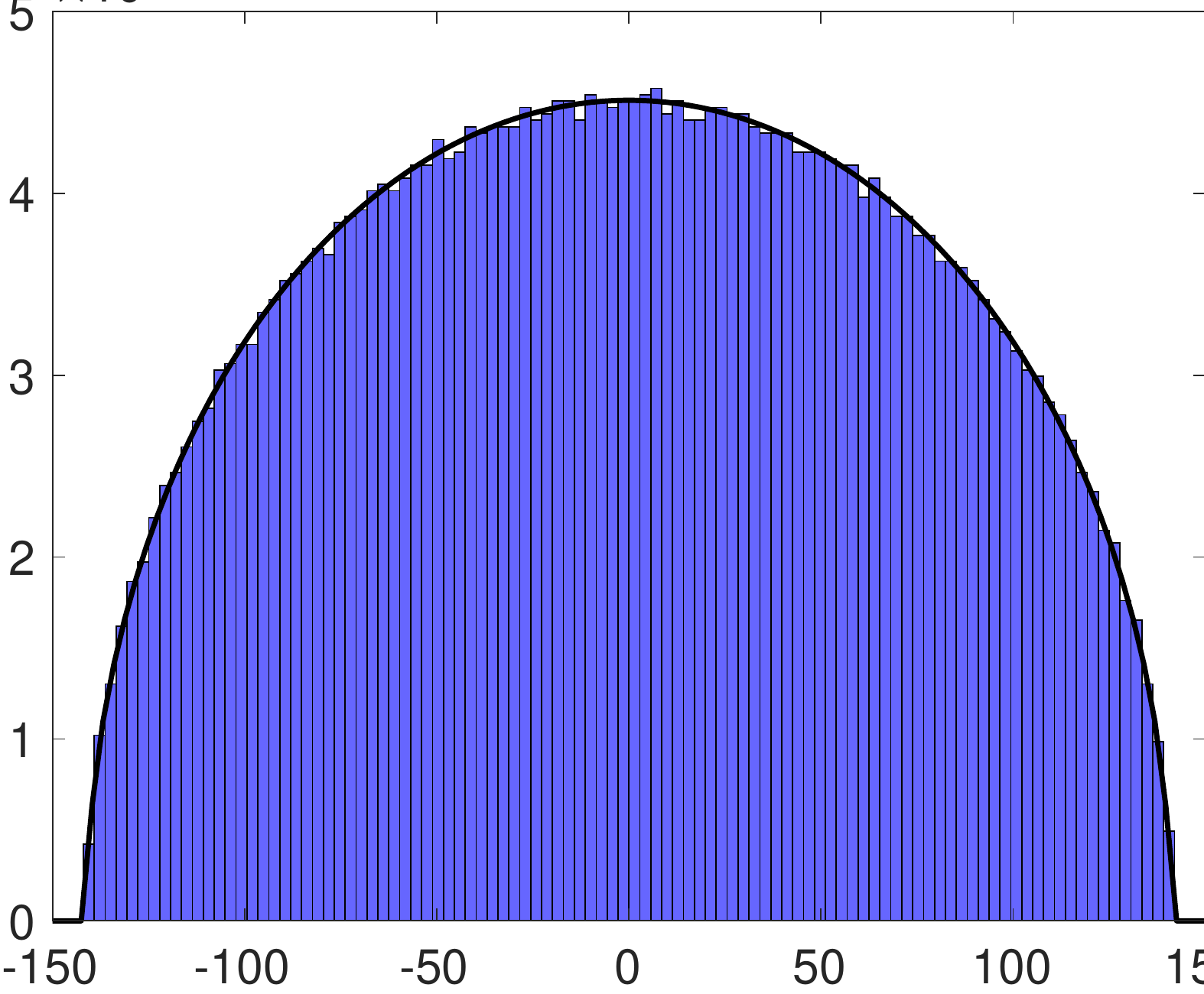}
	\caption{Histogram, $P=10^{4}$}
	\label{subfig:wignerhist10000}		
	\end{subfigure}
	\caption{Lanczos stem plot for a single random vector with $m=30$ steps compared to actual eigenvalue histogram for matrices of the form $\mH \in \mathbb{R}^{P \times P}$, where each element is a drawn from a normal distribution with unit variance, converging to the Wigner semi circle.}
	\label{fig:wignerlanchist}
\end{figure}
\label{sec:lanczosstemhist}

\subsection{Marcenko-Pastur}
An equally important limiting law for the limiting spectral density of many classes of matrices constrained to be positive definite, such as covariance matrices, is the Marcenko-Pastur law \citep{marchenko1967distribution}. Formally, given a matrix $\mX \in \mathbb{R}^{P\times T}$ with i.i.d zero mean entires with variance $\sigma^{2}<\infty$. Let $\lambda_{1}\geq \lambda_{2},...\geq\lambda_{P}$ be eigenvalues of $\mY_{n} = \frac{1}{T}\mX\mX^{T}$. The random measure $\mu_{P}(A) = \frac{1}{P}\#\{\lambda_{j} \in A\}$, $A \in \mathbb{R}$
\begin{theorem}
\label{theorem:mplaw}
Assume that $P,N \rightarrow \infty$ and the ratio $P/N \rightarrow q \in (0,\infty)$ (this is known as the Kolmogorov limit) then $\mu_{P}\rightarrow \mu$ in distribution where
\begin{equation}
  \begin{cases}
    (1-\frac{1}{q})\mathbbm{1}_{0\in A}+\nu_{1/q}(A) ,& \text{if } q> 1 \\
    \nu_{q}(A) ,& \text{if } 0\leq q \leq 1 
\end{cases} 
\end{equation}
\begin{equation}
\begin{aligned}
    & d\nu_{q} = \frac{\sqrt{(\lambda_{+}-x)(x-\lambda_{-})}}{\lambda x 2\pi\sigma^{2}}, \lambda_{\pm} = \sigma^{2}(1\pm \sqrt{q})^{2}
    \end{aligned}
\end{equation}
\end{theorem}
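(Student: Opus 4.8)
The plan is to prove this by the classical method of moments. I will show that for every fixed integer $k\ge 1$ the $k$-th moment of the empirical spectral measure,
\begin{equation}
\int x^{k}\,d\mu_{P}(x)=\frac{1}{P}\Tr\big(\mY_{n}^{k}\big),
\end{equation}
converges to the $k$-th moment $m_{k}$ of the candidate limit $\mu$ (first in expectation, then in probability after a variance estimate), and then conclude weak convergence from the fact that $\mu$ is compactly supported and hence determined by its moments. Before that, a standard reduction deals with the mere finite-variance hypothesis: replacing each entry $X_{ij}$ by $X_{ij}\mathbbm{1}_{\{|X_{ij}|\le\epsilon\sqrt{T}\}}$, re-centring and re-normalising, and bounding the change in $\mu_{P}$ via the rank inequality for empirical distribution functions together with the Hoffman--Wielandt bound, lets me assume the entries are bounded uniformly in $P,T$ (so all moments exist), and then send $\epsilon\downarrow 0$ at the very end.

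For the moment computation I expand
\begin{equation}
\E\!\left[\frac{1}{P}\Tr\big(\mY_{n}^{k}\big)\right]=\frac{1}{P\,T^{k}}\sum\E\big[X_{i_{1}j_{1}}X_{i_{2}j_{1}}X_{i_{2}j_{2}}X_{i_{3}j_{2}}\cdots X_{i_{k}j_{k}}X_{i_{1}j_{k}}\big],
\end{equation}
the sum running over all closed walks alternating between ``row'' vertices $i_{1},\dots,i_{k}\in\{1,\dots,P\}$ and ``column'' vertices $j_{1},\dots,j_{k}\in\{1,\dots,T\}$. Grouping the terms by the partition the walk induces on its $2k$ edges and using independence and zero mean, only walks in which every edge is used exactly twice survive to leading order; a standard graph argument (such a closed bipartite walk must be a tree-like double traversal) puts the surviving configurations in bijection with non-crossing pair partitions of $\{1,\dots,2k\}$, each carrying a factor $\sigma^{2k}$ and a factor $q^{r-1}$, where $r$ counts the blocks of one distinguished colour. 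Summing yields $m_{k}=\sigma^{2k}\sum_{r=1}^{k}\frac{1}{r}\binom{k}{r-1}\binom{k-1}{r-1}q^{r-1}$, the Narayana-type polynomial, while all non-double-traversal walks contribute $O(P^{-1})$ and disappear in the Kolmogorov limit $P/T\to q$.

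It then remains to identify $m_{k}$ with the moments of the measure in the statement. For $0\le q\le 1$ one checks $\int x^{k}\,d\nu_{q}(x)=m_{k}$ directly, e.g. by the substitution $x=\sigma^{2}(1+q+2\sqrt{q}\cos\theta)$, or by verifying that $s(z)=\sum_{k\ge 0}m_{k}z^{-k-1}$ is the Stieltjes transform of $\nu_{q}$, i.e. the branch of $qz\,s^{2}-(1-q-z)s+1=0$ with $\mathrm{Im}\,s>0$ for $\mathrm{Im}\,z>0$, after which $d\nu_{q}$ is recovered by Stieltjes inversion. For $q>1$ the atom $(1-1/q)\delta_{0}$ is forced: $\mY_{n}$ has rank at most $T$, so at least $P-T$ eigenvalues vanish and the mass at $0$ is at least $1-T/P\to 1-1/q$, while the non-zero spectra of $\frac{1}{T}\mX\mX^{T}$ and $\frac{1}{T}\mX^{T}\mX$ coincide and the latter falls under the $q<1$ case with ratio $1/q$. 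Since $\mu$ has compact support ($[\lambda_{-},\lambda_{+}]$ together with a possible atom at $0$), $m_{k}\le\lambda_{+}^{k}$ and Carleman's condition holds, so $\mu$ is the unique measure with these moments; combined with a variance bound $\Var\big(\tfrac{1}{P}\Tr(\mY_{n}^{k})\big)=O(P^{-2})$ (a second, slightly longer walk count), this gives $\mu_{P}\to\mu$ weakly in probability.

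The main obstacle is the combinatorial core of the second step: organising the sum over alternating walks, proving that only non-crossing double traversals contribute in the limit, and — the genuinely fiddly part — pinning down the exponent of $q$ so that the limiting moment is exactly the Narayana polynomial and not merely a degree-$(k-1)$ polynomial in $q$. An alternative that avoids the combinatorics is the resolvent method: set $s_{P}(z)=\frac{1}{P}\Tr\big((\mY_{n}-z\mathbf{I})^{-1}\big)$, use the Schur complement formula together with concentration of quadratic forms (diagonal resolvent entries concentrating about their mean) to derive the self-consistent equation $s(z)=\big(-z+\sigma^{2}(1-q)-\sigma^{2}q\,z\,s(z)\big)^{-1}$ in the limit, solve the quadratic, and apply Stieltjes inversion; the difficulty there shifts to the concentration estimates and the stability of the fixed point. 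Either route disposes of the $q>1$ atom through the same rank argument.
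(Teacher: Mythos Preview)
Your proposal outlines a correct and standard proof of the Marcenko--Pastur law via the method of moments (with a resolvent alternative sketched as well), and the reduction to bounded entries, the walk combinatorics leading to Narayana polynomials, the rank argument for the atom at $0$ when $q>1$, and the Carleman/variance finish are all the right ingredients.

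However, there is nothing to compare against: the paper does \emph{not} prove Theorem~\ref{theorem:mplaw}. It is stated as a classical result (attributed to \citet{marchenko1967distribution}) and used only to validate the Lanczos-based spectral estimator on synthetic matrices whose limiting density is known. No proof, sketch, or argument for it appears anywhere in the body or the appendix. So your proposal is not an alternative to the paper's proof; it is simply a proof of a background theorem that the paper quotes without justification.
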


Here, we construct a random matrix $\mX \in \mathbb{P\times T}$ with independently drawn elements from the distribution $\mathcal{N}(0,1)$ and then form the matrix $\frac{1}{T}\mX\mX^{T}$, which is known to converge to the Marcenko-Pastur distribution. We use $P = \{225,10000\}$ and $T = 2P$ and plot the associated histograms from full eigendecomposition in Figures \ref{subfig:MPhist225} \& \ref{subfig:MPhist10000} along with their $m=30, d=1$ Lanczos stem counterparts in Figures \ref{subfig:MPstem225} \& \ref{subfig:MPstem10000}. Similarly we see a faithful capturing not just of the support, but also of the general shape. 
We note that both for Figure \ref{fig:wignerlanchist} and Figure \ref{fig:mplanchist}, the smoothness of the discrete spectral density for a single random vector increases significantly, even relative to the histogram. 
\begin{figure}
	\centering
	\begin{subfigure}{0.45\linewidth}
	\centering
    \includegraphics[trim=2.5cm 8.1cm 2.5cm 8.4cm, clip, width=1\linewidth]{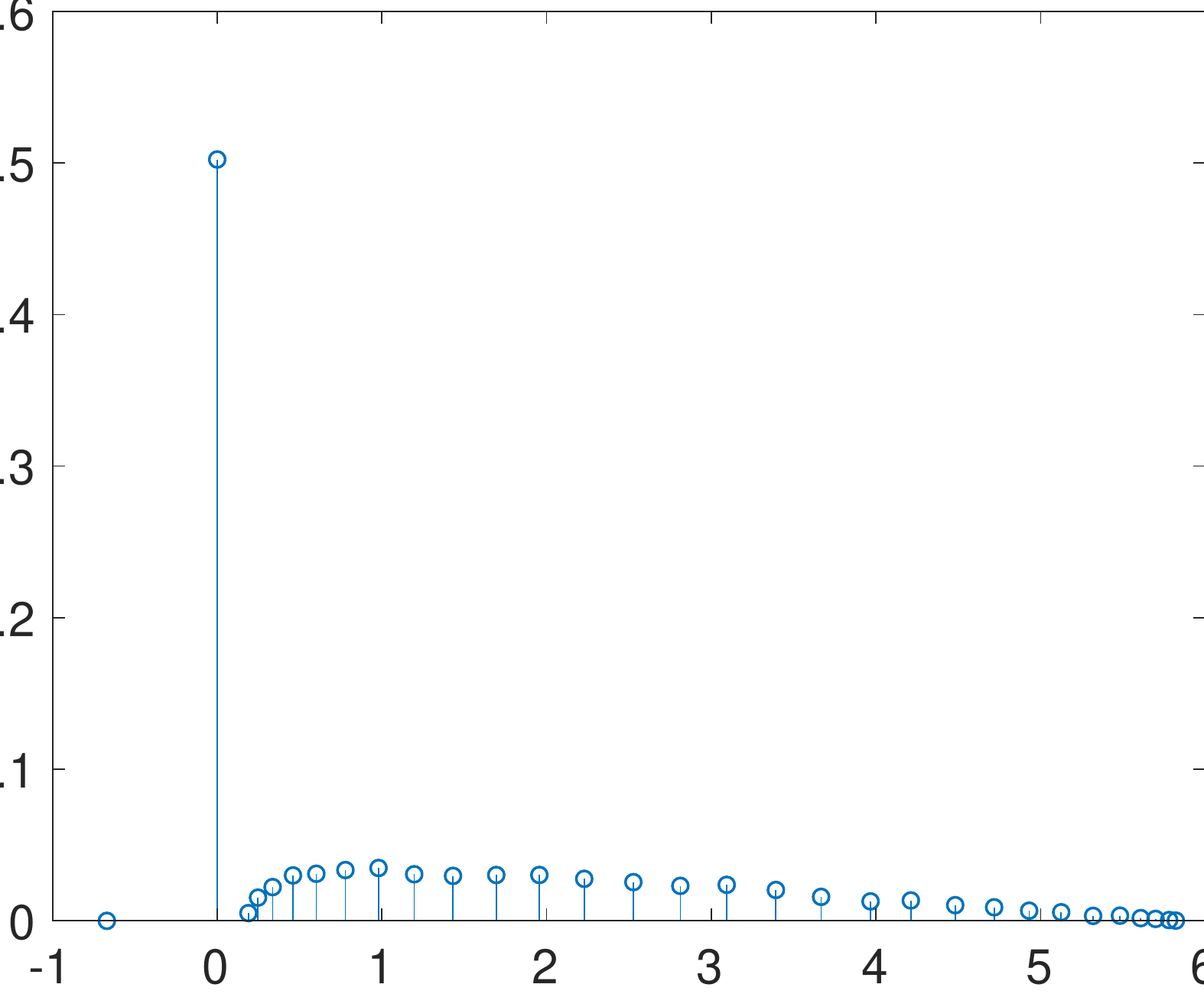}
	\caption{Stem $m=30, P=10^{4}$}
	\label{subfig:MPstem_q0p5}	
	\end{subfigure}
	\hspace{5pt}
	\begin{subfigure}{0.45\linewidth}
	\centering
	\includegraphics[trim=2.5cm 8.1cm 2.5cm 8.4cm, clip, width=1\linewidth]{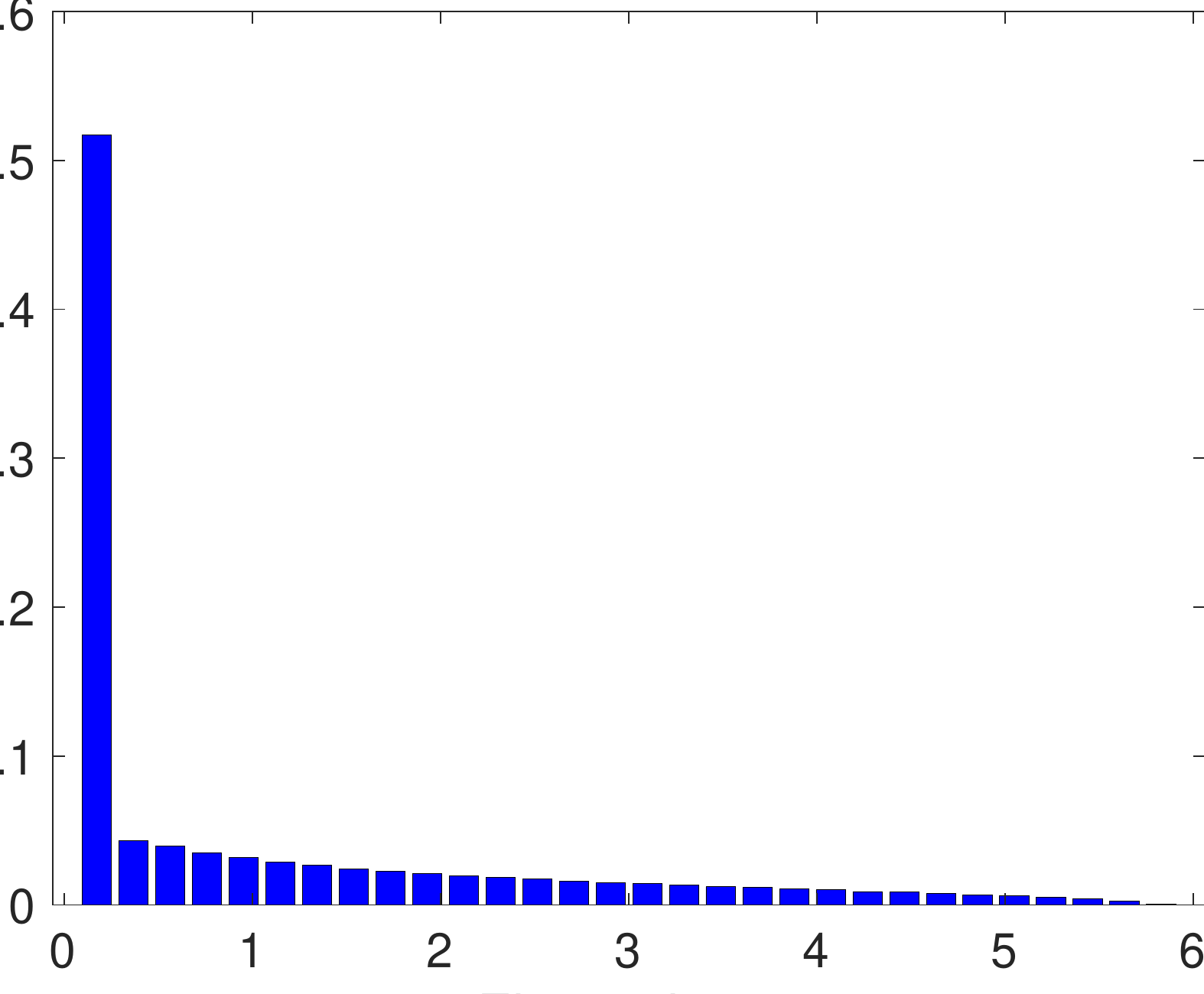}
	\caption{Histogram,$P=10^{4}$}
	\label{subfig:MPhist_q0p5}		
	\end{subfigure}
	\caption{Lanczos stem plot for a single random vector with $m=30$ steps compared to actual eigenvalue histogram for matrices of the form $\mH \in \mathbb{R}^{P \times P}$, where $\mH = \mX \mX^{T}/k$, where each element of $\mX^{P \times k}$, $k = 0.5P$ is a drawn from a normal distribution with unit variance, converging to the Marcenko-Pastur distribution with $q=0.5$.}
	\label{fig:mpandwignerlanchist2}
\end{figure}
\label{sec:mpstemq=0}

We also run the same experiment for $P=10000$ but this time with $T=0.5P$ so that exactly half of the eigenvalues will be $0$. We compare the Histogram of the eigenvalues in Figure \ref{subfig:MPhist_q0p5} against its $m=30, d=1$ Lanczos stem plot in Figure \ref{subfig:MPstem_q0p5} and find both the density at the origin, along with the bulk and support to be faithfully captured.


\subsection{Comparison to Diagonal Approximations}
\begin{figure}
	\centering
	\begin{subfigure}{0.45\linewidth}
	\centering
    \includegraphics[trim=2.5cm 8.1cm 2.5cm 8.4cm, clip, width=1\linewidth]{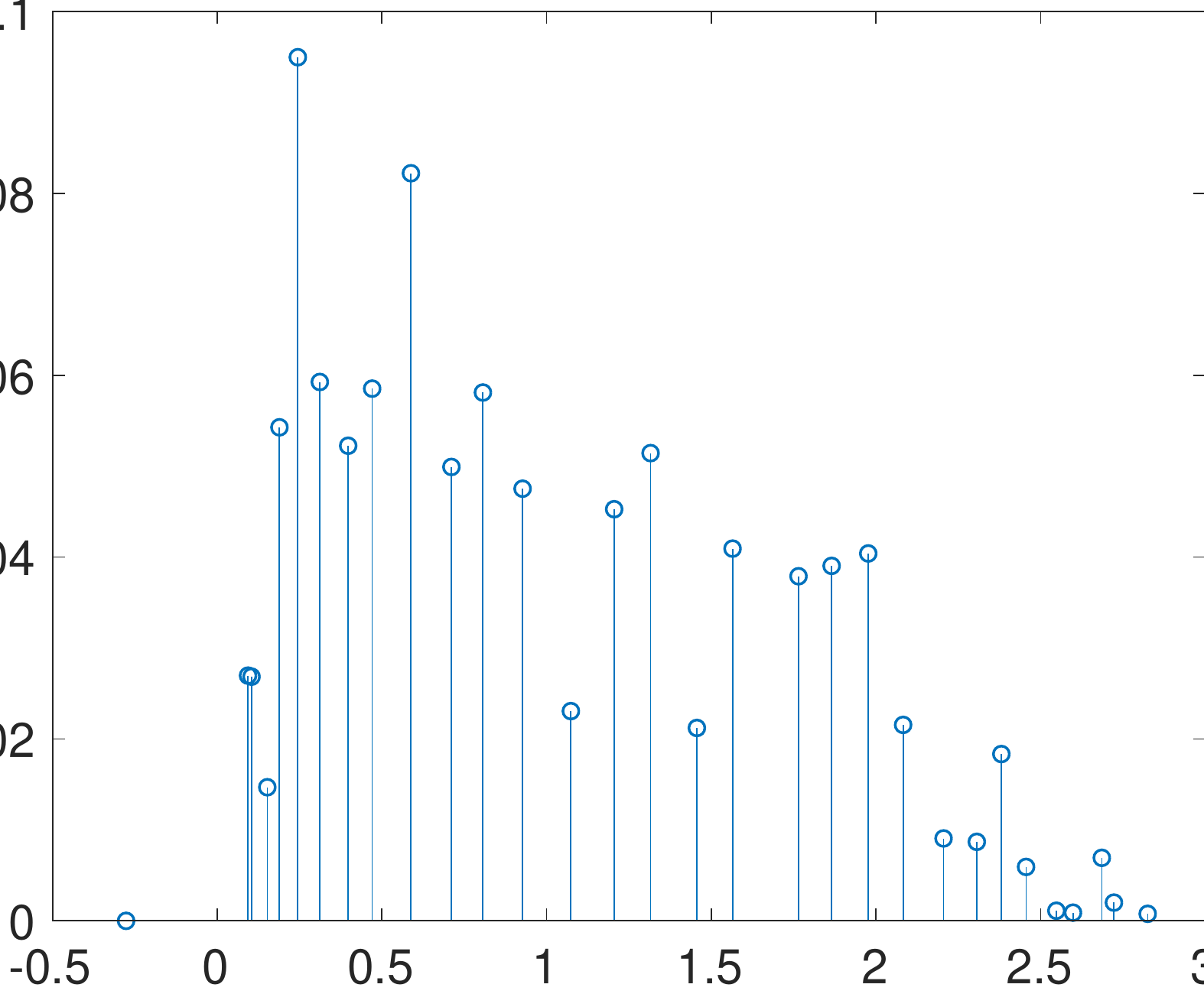}
	\caption{Stem $m=30, P=225$}
	\label{subfig:MPstem225}	
	\end{subfigure}
	\hspace{5pt}
	\begin{subfigure}{0.45\linewidth}
	\centering
	\includegraphics[trim=2.5cm 8.1cm 2.5cm 8.4cm, clip, width=1\linewidth]{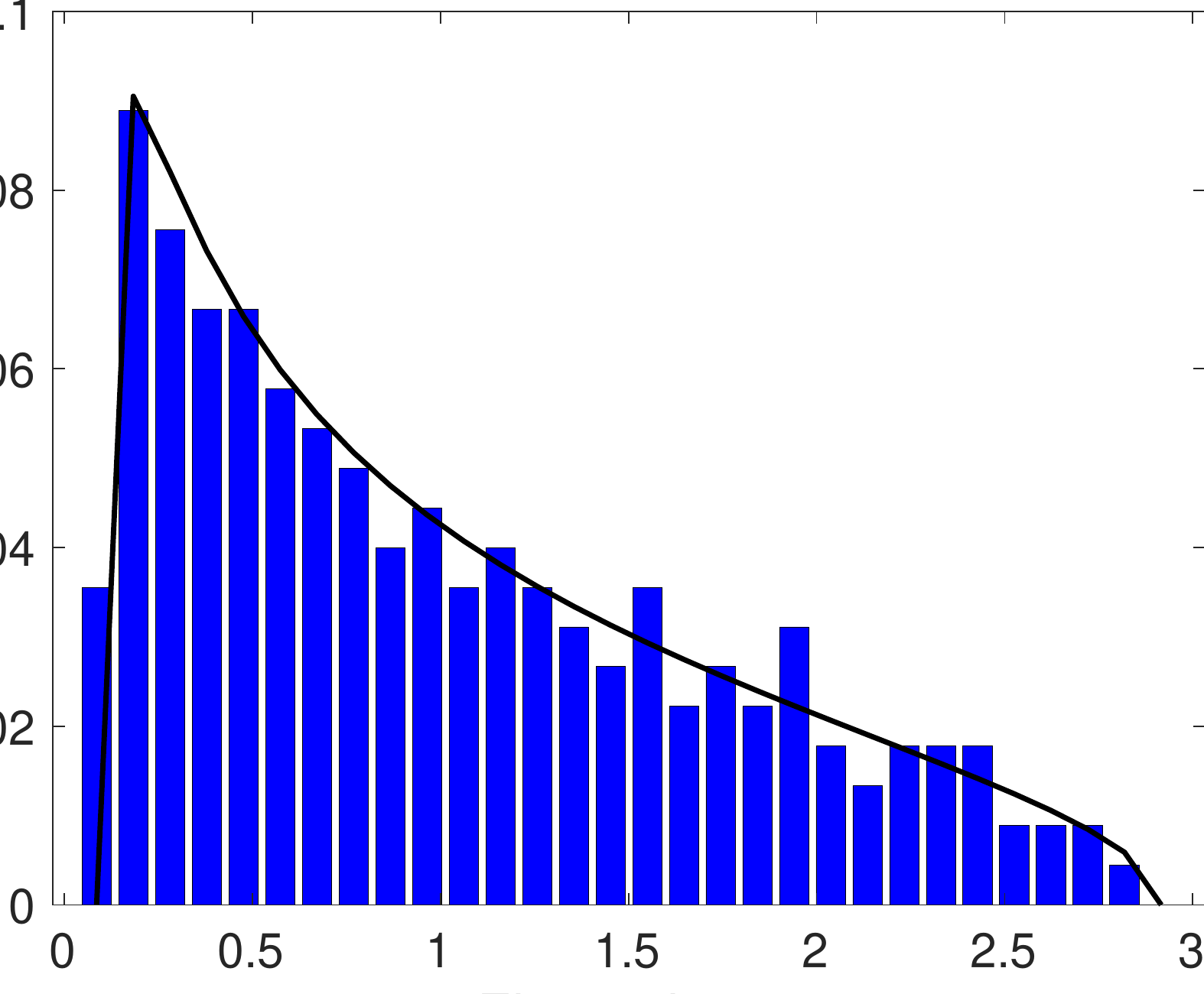}
	\caption{Histogram,$P=225$}
	\label{subfig:MPhist225}		
	\end{subfigure}
	\begin{subfigure}{0.45\linewidth}
	\centering
	\includegraphics[trim=2.5cm 8.1cm 2.5cm 8.4cm, clip, width=1\linewidth]{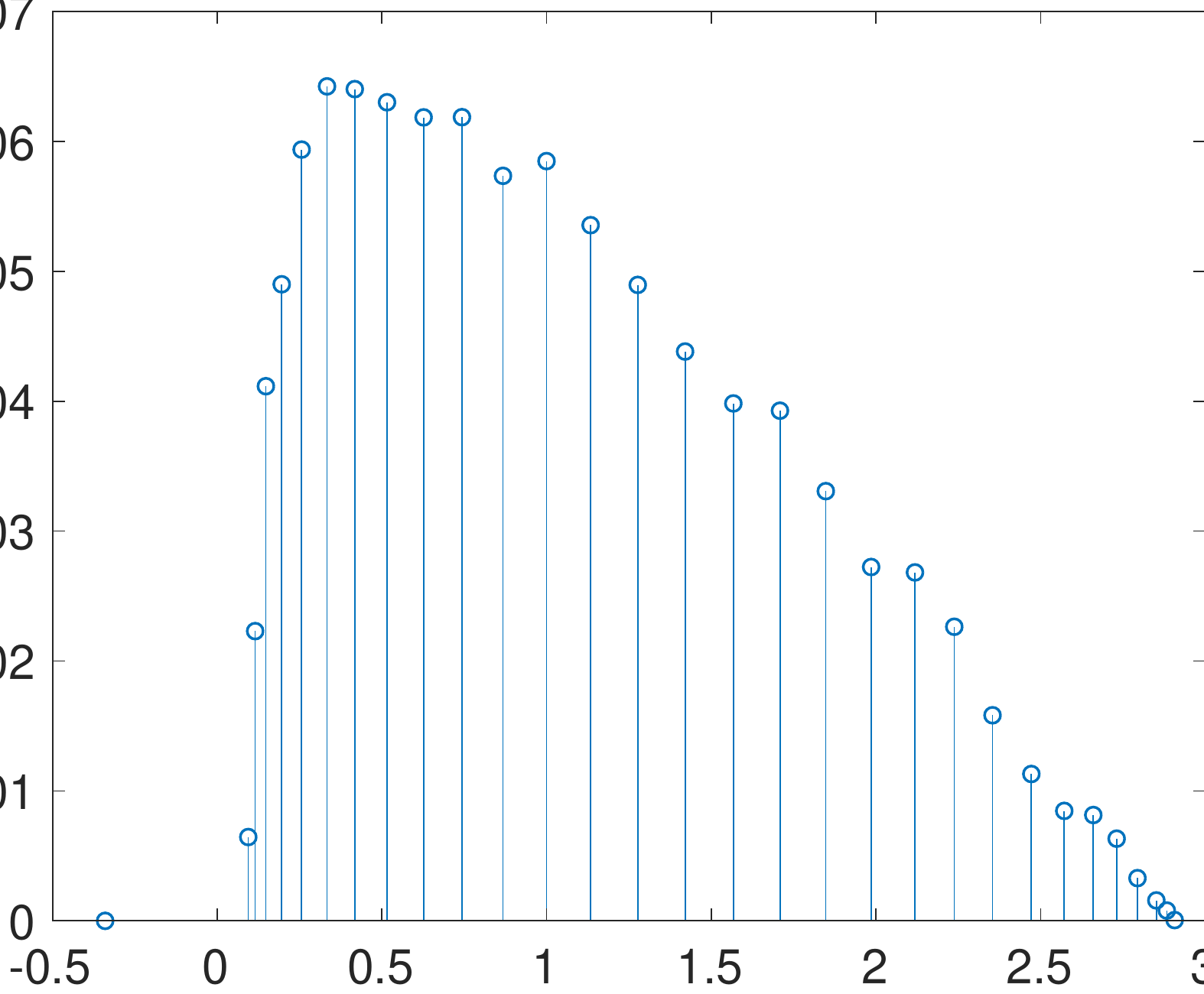}
	\caption{Stem $m=30, P=10^{4}$}
	\label{subfig:MPstem10000}		
	\end{subfigure}
	\begin{subfigure}{0.45\linewidth}
	\centering
	\includegraphics[trim=2.5cm 8.1cm 2.5cm 8.4cm, clip, width=1\linewidth]{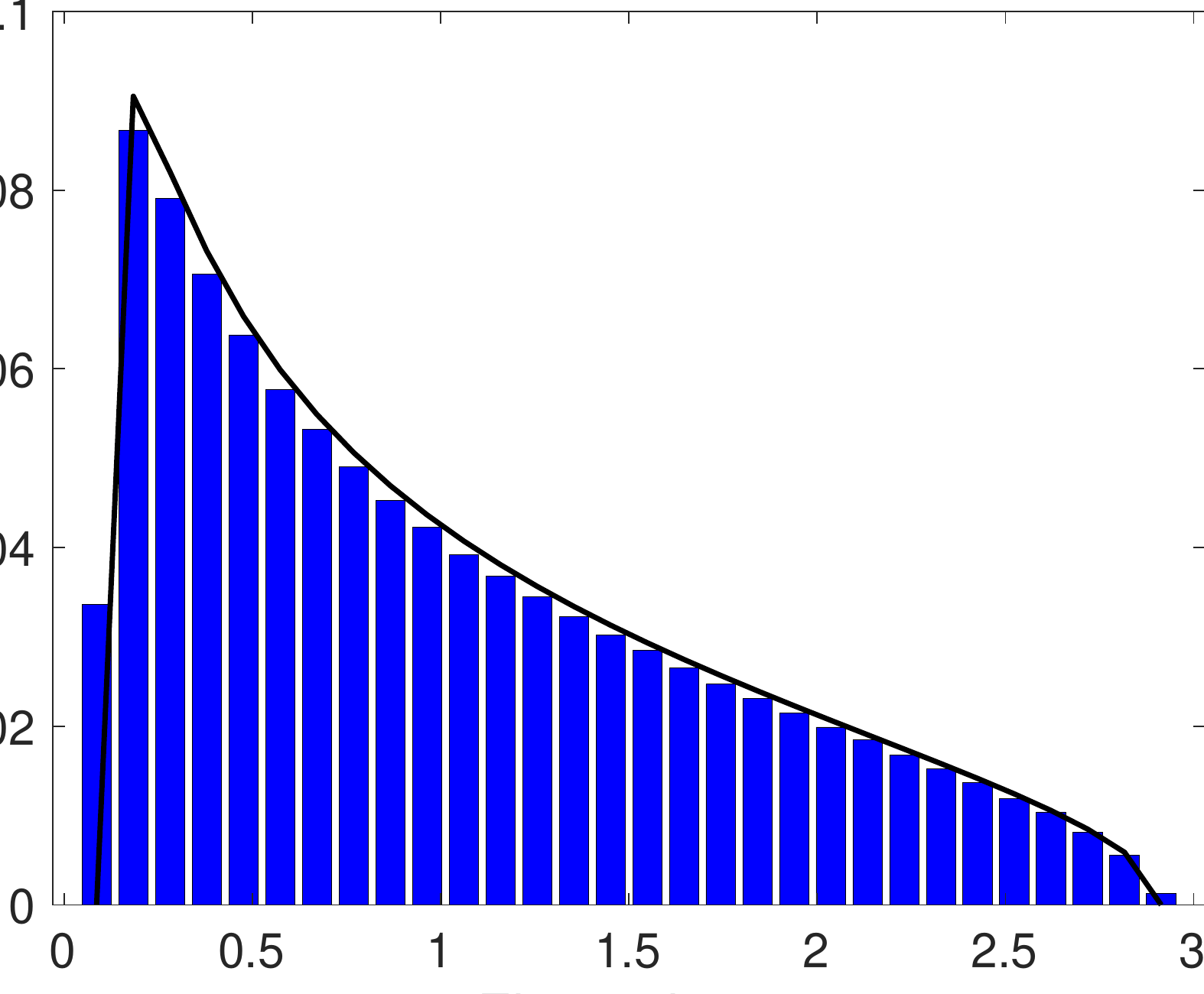}
	\caption{Histogram, $P=10^{4}$}
	\label{subfig:MPhist10000}		
	\end{subfigure}
	\caption{Lanczos stem plot for a single random vector with $m=30$ steps compared to actual eigenvalue histogram for matrices of the form $\mH \in \mathbb{R}^{P \times P}$, where $\mH = \mX \mX^{T}/k$, where each element of $\mX^{P \times k}$, $k = 2P$ is a drawn from a normal distribution with unit variance, converging to the Marcenko-Pastur distribution with $q=2$.}
	\label{fig:mplanchist}
\end{figure}
\label{sec:mpstemandhist}
\label{subsec:diagonalapprox}
As a proxy for deep neural network spectra, often the diagonal of the matrix \citep{bishop2006pattern} or the diagonal of a surrogate matrix, such as the Fisher information, or that implied by the values of the Adam Optimizer \citep{chaudhari2016entropy} is used. We plot the true eigenvalue estimates for random matrices pertaining to both the Marcenko-Pastur (Fig. \ref{subfig:MPdiaghist}) and the Wigner density (Fig. \ref{subfig:wignerdiaghist}) in blue, along with the Lanczos estimate in red and the diagonal approximation in yellow. We see here that the diagonal approximation in both cases, fails to adequately the support or accurately model the spectral density, whereas the lanczos estimate is nearly indistinguishable from the true binned eigen-spectrum. This is of-course obvious from the mathematics of the un-normalised Wigner matrix. The diagonal elements are simply draws from the normal distribution $\mathcal{N}(0,1)$ and so we expect the diagonal histogram plot to approximately follow this distribution (with variance $1$). However the second moment of the Wigner Matrix can be given by the Frobenius norm identity 
\begin{equation}
    \mathbb{E}\bigg(\frac{1}{P}\sum_{i}^{P}\lambda_{i}^{2}\bigg) = \mathbb{E}\bigg(\frac{1}{P}\sum_{i,j=1}^{P}\mH_{i.j}^{2}\bigg) = \mathbb{E}\bigg(\frac{1}{P}\chi^{2}_{P^{2}}\bigg) = P
\end{equation}
Similarly for the Marcenko-Pastur distribution, We can easily see that each element of $\mH$ follows a chi-square distribution of $1/T\chi^{2}_{T}$, with mean $1$ and variance $2/T$.
\begin{figure}
	\centering
	\begin{subfigure}{0.45\linewidth}
	\centering
    \includegraphics[trim=2.5cm 8.1cm 2.5cm 8.4cm, clip, width=1\linewidth]{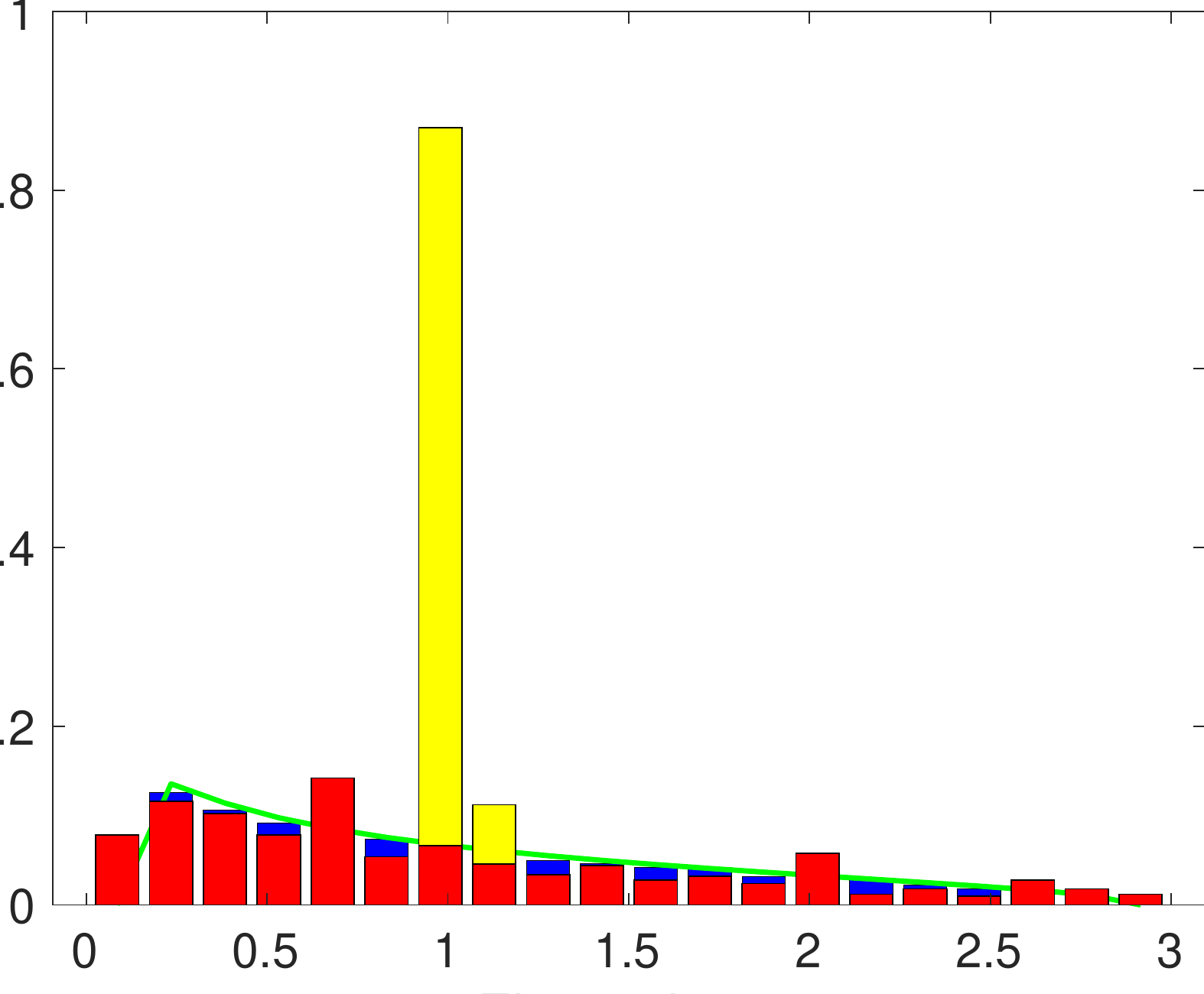}
	\caption{Marcenko-Pastur}
	\label{subfig:MPdiaghist}	
	\end{subfigure}
	\hspace{5pt}
	\begin{subfigure}{0.45\linewidth}
	\centering
	\includegraphics[trim=2.5cm 8.1cm 2.5cm 8.4cm, clip, width=1\linewidth]{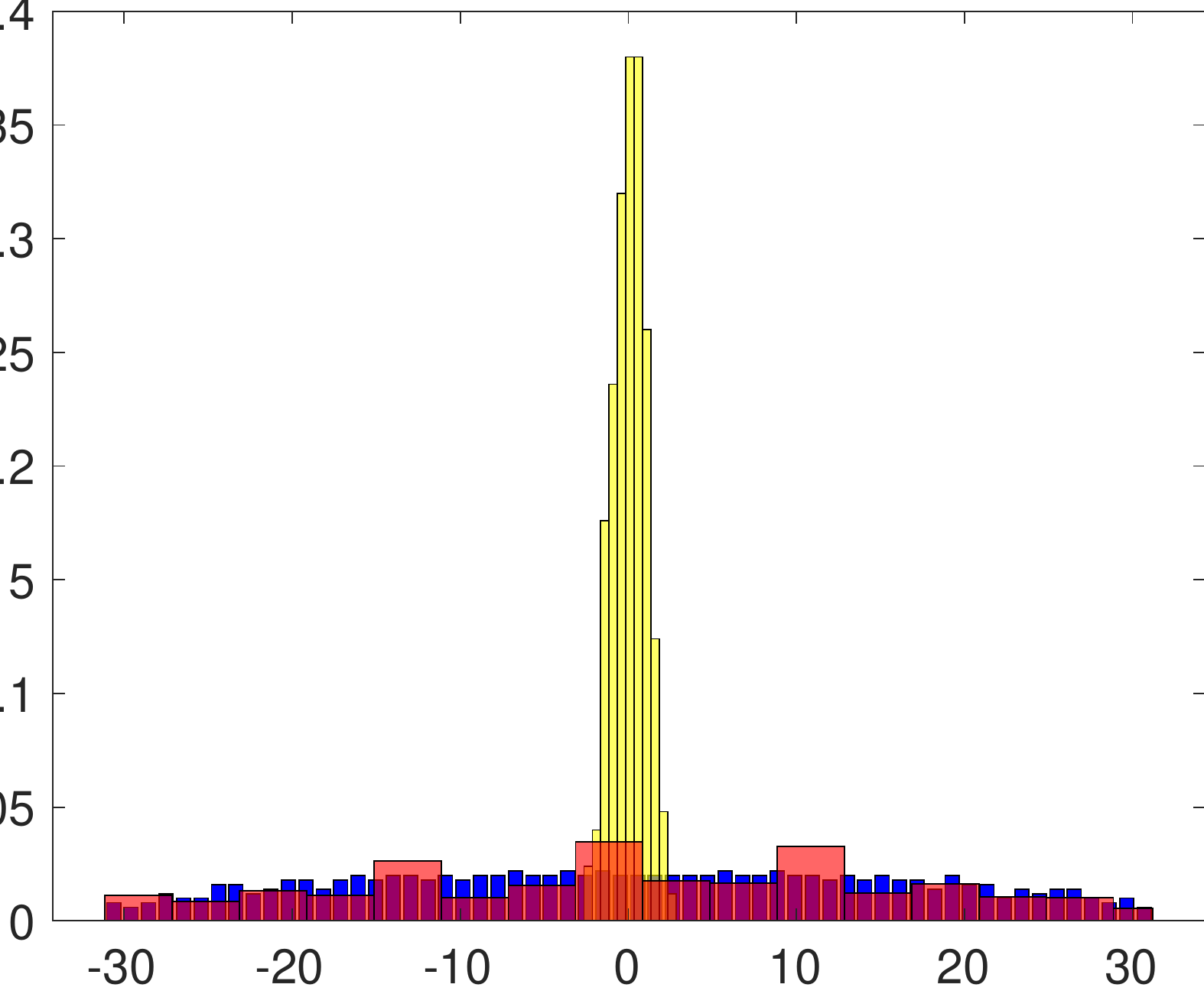}
	\caption{Wigner-Semi Circle}
	\label{subfig:wignerdiaghist}		
	\end{subfigure}
	\caption{Two randomly generated matrices $\mH \in \mathbb{R}^{500\times 500}$ with the histogram of the true eigenvalues in blue, the Lanczos estimate $m=30, d=1$ in red and the diagonal approximation in yellow}
	\label{fig:mpandwignerdiagvslanc}
\end{figure}
\label{sec:lanczoshist}
\subsection{Synthetic Example}

The curvature eigenspectrum of neural network often features a large spike at zero, a right-skewed bulk and some outliers \citep{sagun2016eigenvalues,sagun2017empirical}.\footnote{Some examples of this can be found in later sections on real-life neural network experiments - see Figures \ref{fig:diagggnmcvslanc} and \ref{fig:randomvectors}. } In order to simulate the spectrum of a neural network, we generate a Matrix $\mH \in \mathbb{R}^{1000 \times 1000}$ with $470$ eigenvalues drawn from the uniform distribution from $[0,15]$, $20$ drawn from the uniform $[0,60]$ and $10$ drawn from the uniform $[-10,0]$. The matrix is rotated through a rotation matrix $U$, i.e $\mH = 
\mU\mD\mU^{T}$ where $\mD$ is the diagonal matrix consisting of the eigenvalues and the columns are gaussian random vectors which are orthogonalised using Gram-Schmidt orthogonalisation. The resulting eigenspectrum is given in a histogram in Figure \ref{subfig:madeupmat} and then using the same random vector, successive Lanczos stem plots for different number of iterations $m = [5,30]$ are shown in Figure \ref{fig:madeupmat}. Figure \ref{subfig:madeupmatlanc5}, for a low number of steps, the degeneracy at $\lambda = 0$ is learned, as are the largest and smallest eigenvalues, some information is retained about the bulk density, but some of the outlier eigenvalues around $\lambda \approx 20$ and $ \lambda \approx 30$ are completely missed out, along with all the negative outliers except the largest. 
For $m=30$ even the shape of the bulk is accurately represented, as shown in Figure \ref{subfig:madeupmatlanc30}. Here, we would like to emphasise that learning the outliers is important in the neural network context, as they relate to important properties of the network and the optimisation process \citep{ghorbani2019investigation}.

On the other hand, we note that the diagonal estimate in Figure \ref{subfig:madeupmatdiag} gives absolutely no spectral information, with no outliers shown (maximal and minimal diagonal elements being $5.3$ and $3.3$ respectively and it also gets the spectral mass at $0$ wrong. This builds on section \ref{subsec:diagonalapprox}, as furthering the case against making diagonal approximations in general. In neural networks, the diagonal approximation is similar to positing no correlations between the weights. This is a very harsh assumption and usually a more reasonable assumption is to posit that the correlations between weights in the same layer are larger than between different layers, leading to a block diagonal approximation \citep{Martens2016}, however often when the layers have millions of parameters, full diagonal approximations are still used. \citep{bishop2006pattern,chaudhari2016entropy}.

\section{Neural Network Examples}
\label{sec:neuralnetexamples}
We showcase our spectral learning algorithm and visualization tool on real networks trained on real data-sets and we test on VGG networks \citep{simonyan2014very}. We train our neural networks using stochastic gradient descent with momentum $\rho = 0.9$, using a linearly decaying learning rate schedule. The learning rate at the $t$-th epoch is given by:
\begin{equation}
    \alpha_t = 
    \begin{cases}
      \alpha_0, & \text{if}\ \frac{t}{T} \leq 0.5 \\
      \alpha_0[1 - \frac{(1 - r)(\frac{t}{T} - 0.5)}{0.4}] & \text{if } 0.5 < \frac{t}{T} \leq 0.9 \\
      \alpha_0r, & \text{otherwise}
    \end{cases}
\end{equation}
where $\alpha_0$ is the initial learning rate. $T = 300$ is the total number of epochs budgeted for all experiments. We set $r = 0.01$. We explicitly give an example code run in \ref{sec:dnncoderun}
\begin{figure}
	\centering
	\begin{subfigure}{0.45\linewidth}
	\centering
    \includegraphics[trim=2.5cm 8.1cm 2cm 8.4cm, clip, width=1\linewidth]{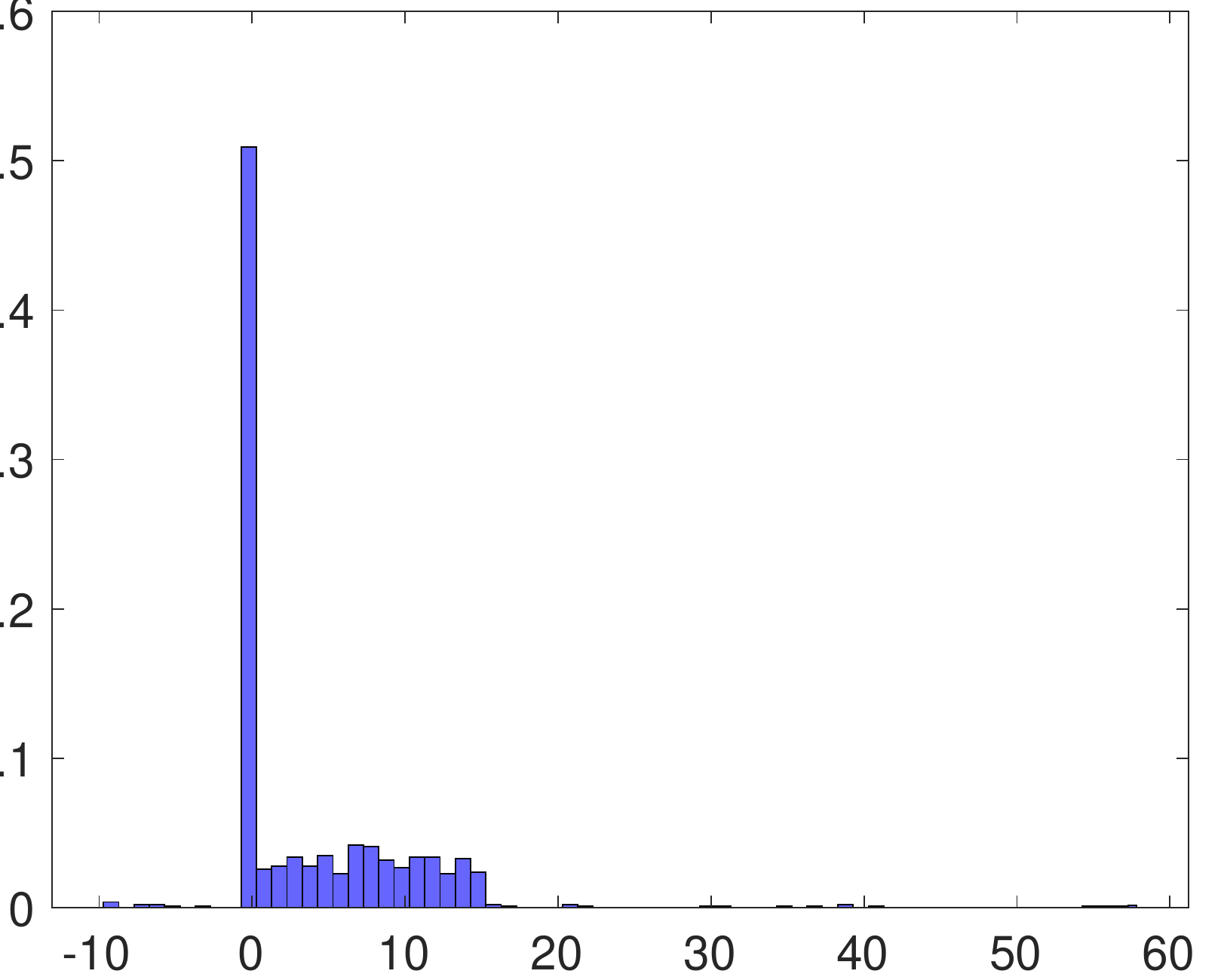}
	\caption{Histogram of $\mH$}
	\label{subfig:madeupmat}	
	\end{subfigure}
	\hspace{5pt}
	\begin{subfigure}{0.45\linewidth}
	\centering
	\includegraphics[trim=2.5cm 8.1cm 2cm 8.4cm, clip, width=1\linewidth]{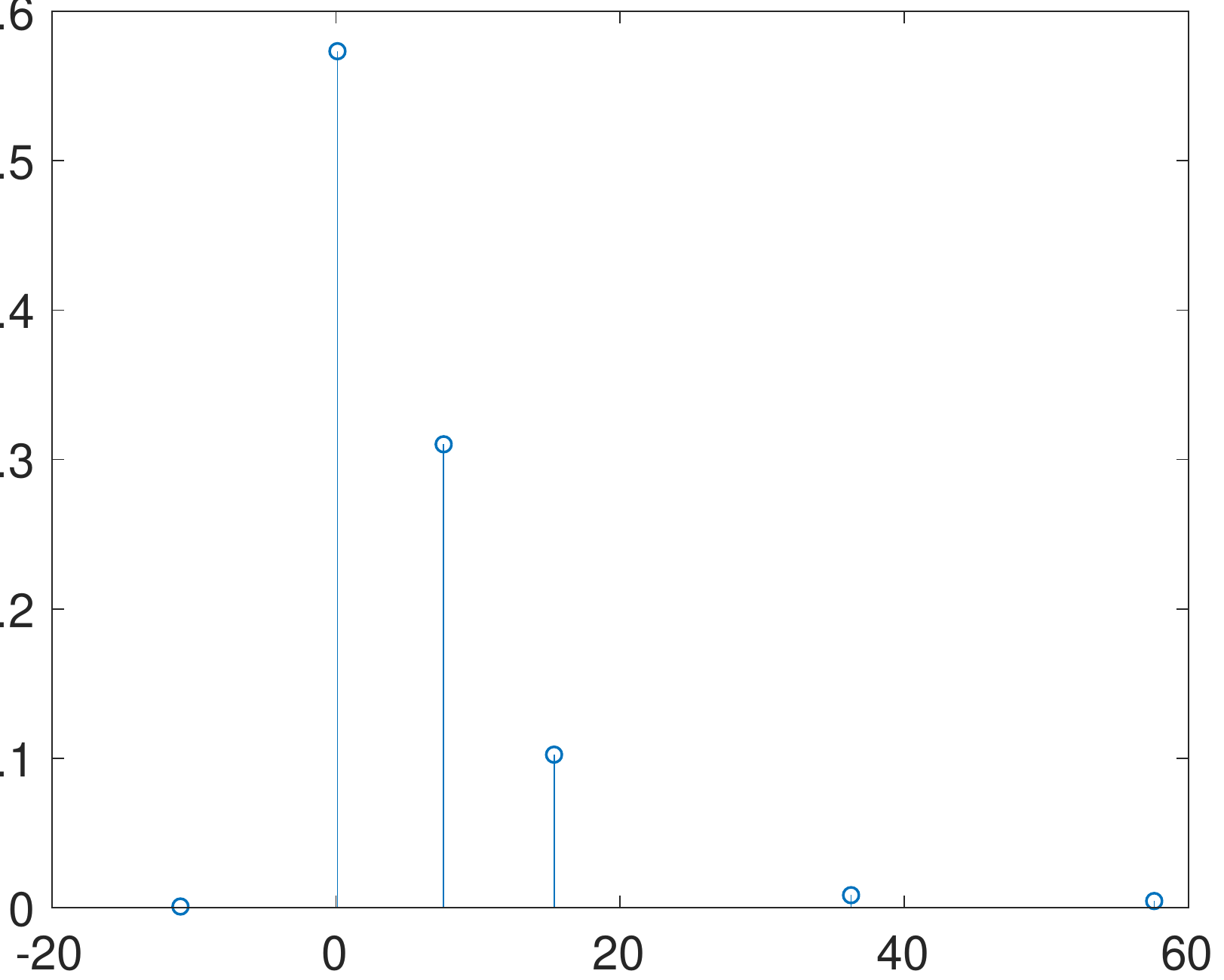}
	\caption{$m=5$ Lanczos Stem}
	\label{subfig:madeupmatlanc5}		
	\end{subfigure}
		\hspace{5pt}
	\begin{subfigure}{0.45\linewidth}
	\centering
	\includegraphics[trim=2.5cm 8.1cm 2cm 8.4cm, clip, width=1\linewidth]{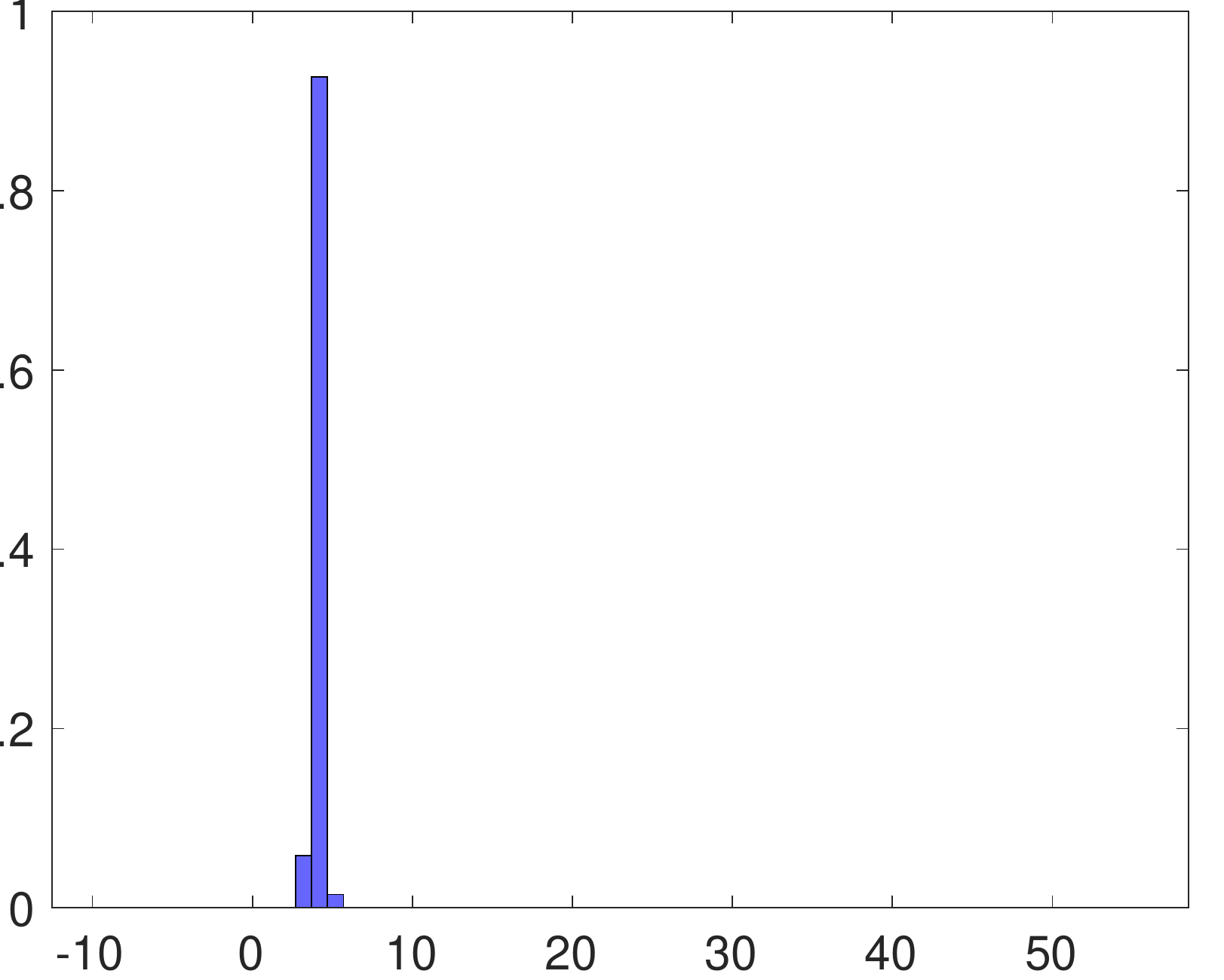}
	\caption{Diagonal of $\mH$}
	\label{subfig:madeupmatdiag}
	\end{subfigure}
		\hspace{5pt}
	\begin{subfigure}{0.45\linewidth}
	\centering
	\includegraphics[trim=2.5cm 8.1cm 2cm 8.4cm, clip, width=1\linewidth]{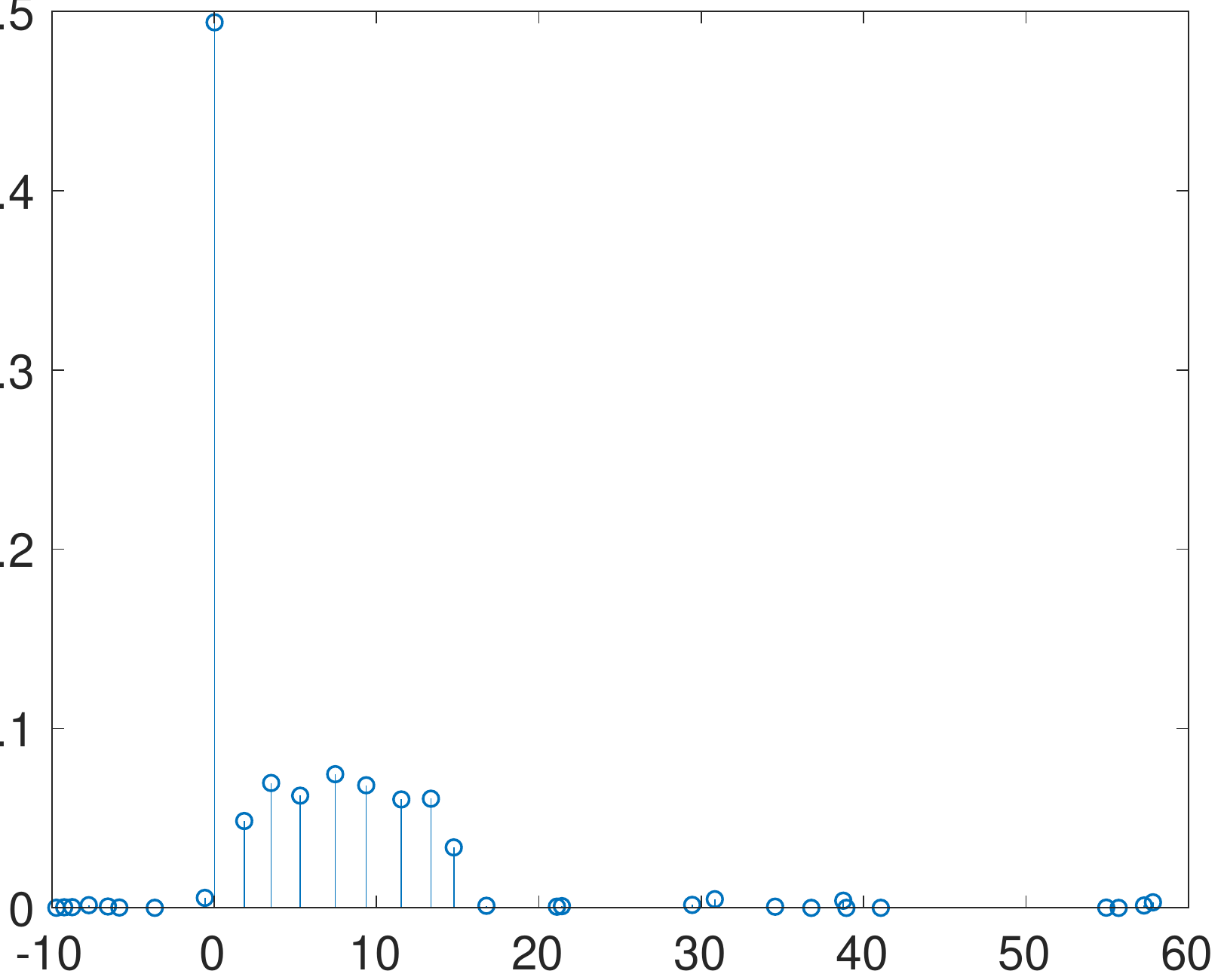}
	\caption{$m=30$ Lanczos Stem}
	\label{subfig:madeupmatlanc30}		
	\end{subfigure}
	\caption{Generated matrices $\mH \in \mathbb{R}^{1000\times 1000}$ with known eigenspectrum and Lanczos stem plots for different values of $m = \{5,15,30\}$}
	\label{fig:madeupmat}
\end{figure}
We compare our method against recently developed open-source tools which calculate on the fly diagonal Hessian and Generalised Gauss-Newton diagonal approximations \citep{dangel2019backpack}.

\subsection{VGG-$16$ CIFAR-100 Dataset}
\label{subsec:vgg16c100}
We train a 16-layer VGG network, comprising of $P=15,291,300$ parameters on the CIFAR-$100$ dataset, using $\alpha_{0}=1$. Even for this relatively small model, the open-source Hessian and GGN exact diagonal computations require over $125$GB of GPU memory and so to avoid re-implementing the library to support multiple GPUs and node communication we use the Monte Carlo approximation to the GGN diagonal against both our GGN-Lanczos and Hessian-Lanczos spectral visualizations. We plot a histogram of the Monte Carlo approximation of the diagonal GGN (Diag-GGN) against both the Lanczos GGN (Lanc-GGN) and Lanczos Hessian (Lanc-Hess) in Figure \ref{fig:diagggnmcvslanc}. Note that as the Lanc-GGN and Lanc-Hess are displayed as stem plots (with the discrete spectral density summing to $1$ as opposed to the histogram area summing to $1$). 
\newline
\begin{figure}
	\centering
	\begin{subfigure}{0.45\linewidth}
	\centering
    \includegraphics[trim=0cm 0cm 0cm 0cm, clip, width=1\linewidth]{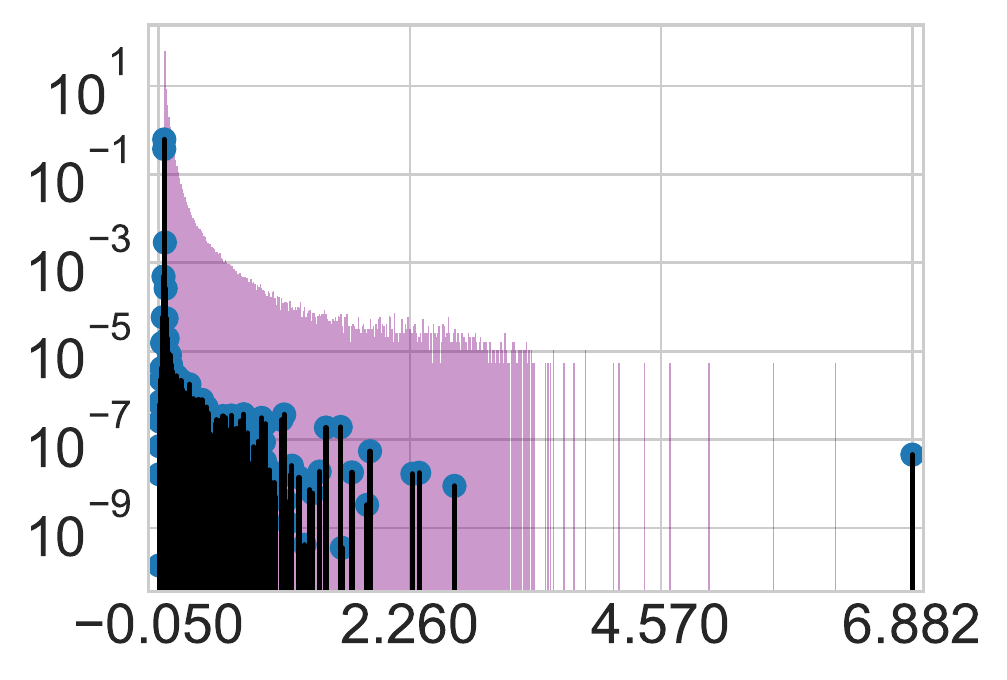}
	\caption{Diag-GGN \& Lanc-GGN}
	\label{subfig:diagggnmcvslancggn}	
	\end{subfigure}
	\hspace{5pt}
	\begin{subfigure}{0.45\linewidth}
	\centering
	\includegraphics[trim=0cm 0cm 0cm 0cm, clip, width=1\linewidth]{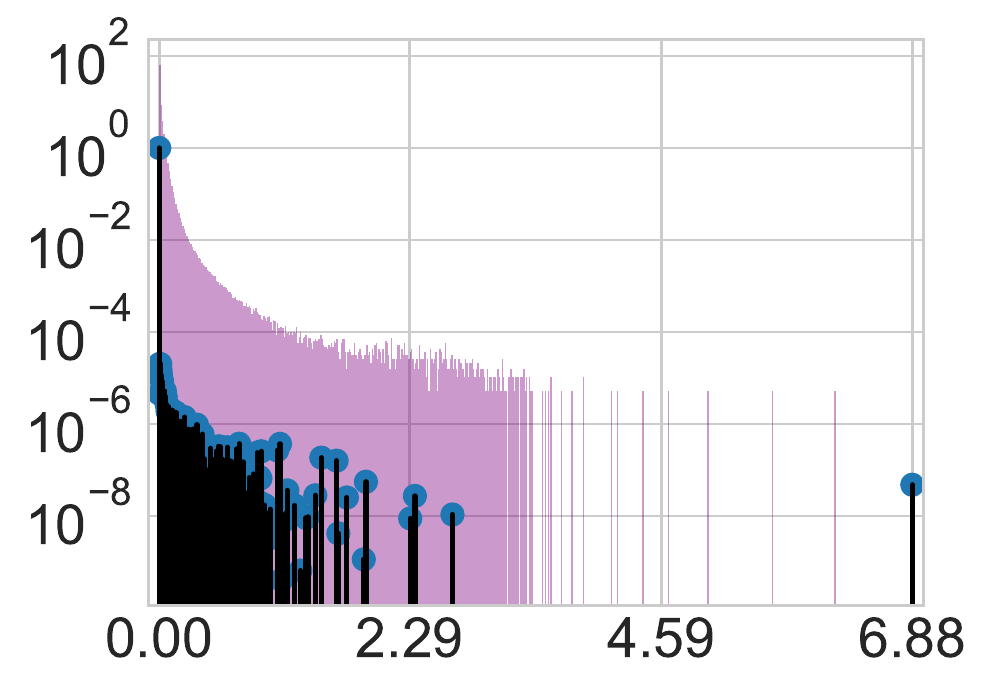}
	\caption{Diag-GGN \& Lanc-Hess}
	\label{subfig:diagggnmcvslanchess}		
	\end{subfigure}
	\caption{Diagonal Generalised Gauss Newton monte carlo approximation (Diag-GGN) against $m=100$ Lanczos using Gauss-Newton vector products (Lanc-GGN) or Hessian vector products (Lanc-Hess)}
	\label{fig:diagggnmcvslanc}
\end{figure}
\label{sec:lanczoshist}

We note that the Gauss-Newton approximation quite closely resembles its Hessian counterpart, capturing the majority of the bulk and the outlier eigenvectors at $\lambda_{1} \approx 6.88$ and the triad near $\lambda_{i} \approx 2.29$. The Hessian does still have significant spectral mass on the negative axis, around $37\%$. However most of this is captured by a Ritz value at $-0.0003$, with this removed, the negative spectral mass is only $0.05\%$. However as expected from our previous section, the Diag-GGN gives a very poor spectral approximation. It vastly overestimates the bulk region, which extends well beyond $\lambda \approx 1$ implied by Lanczos and adds many spurious outliers between $3$ and the misses the largest outlier of $6.88$. 

\paragraph{Computational Cost} Using a single NVIDIA GeForce GTX 1080 Ti GPU, the Gauss-Newton takes an average $26.5$ seconds for each Lanczos iteration with the memory useage $2850$Mb. Using the Hessian takes an average of $27.9$ seconds for each Lanczos iteration with $2450$Mb memory usage.

\section{Effect of Varying Random Vectors}
\label{sec:highdimensionste}
Given that the proofs for the moments of Lanczos matching those of the underlying spectral density, are true over the expectation over the set of random vectors and in practice we only use a Monte Carlo average of random vectors, or in our experiments using stem plots, just a single random vector. We justify this with the following Lemma
\begin{lemma}
Let $\vu \in \mathbb{R}^{P\times 1}$ random vector, where $\vu_{i}$ is zero mean and unit variance and finite $4$'th moment $\mathbb{E}[\vu_{i}^{4}] = m_{4}$. Then for $\mH \in \mathbb{R}^{P\times P}$, then \newline
\begin{equation}
\nonumber
    \begin{aligned}
        & i) \mathbb{E}[\vu^{T}\mH\vu] = \Tr \mH \\
        & ii) \Var [\vu^{T}\mH\vu] \leq (2+m_{4}) \Tr (\mH^{T}\mH) \\
    \end{aligned}
\end{equation}

\end{lemma}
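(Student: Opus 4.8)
The plan is to prove both identities by a direct second-moment computation, expanding the quadratic form as $\vu^{T}\mH\vu = \sum_{i,j}\vu_{i}\mH_{ij}\vu_{j}$ and taking expectations term by term, using that the coordinates $\vu_{i}$ are independent, zero mean, unit variance, with $\mathbb{E}[\vu_{i}^{4}]=m_{4}$ (and hence all odd moments vanishing). For $(i)$, linearity of expectation gives $\mathbb{E}[\vu^{T}\mH\vu]=\sum_{i,j}\mH_{ij}\,\mathbb{E}[\vu_{i}\vu_{j}]$, and $\mathbb{E}[\vu_{i}\vu_{j}]=\delta_{ij}$ collapses this to $\sum_{i}\mH_{ii}=\Tr\mH$.

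For $(ii)$, write $\Var[\vu^{T}\mH\vu]=\mathbb{E}[(\vu^{T}\mH\vu)^{2}]-(\Tr\mH)^{2}$ using part $(i)$, and expand $\mathbb{E}[(\vu^{T}\mH\vu)^{2}]=\sum_{i,j,k,l}\mH_{ij}\mH_{kl}\,\mathbb{E}[\vu_{i}\vu_{j}\vu_{k}\vu_{l}]$. The key step is to classify the mixed fourth moment $\mathbb{E}[\vu_{i}\vu_{j}\vu_{k}\vu_{l}]$ by the coincidence pattern of its indices: by independence and vanishing odd moments it is $0$ unless the four indices split into two (possibly coinciding) pairs; it equals $m_{4}$ when all four agree, and equals $1$ in each of the three configurations with two distinct pairs, namely $\{i=j\neq k=l\}$, $\{i=k\neq j=l\}$, $\{i=l\neq j=k\}$. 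Summing each configuration over the off-diagonal index set and adding the all-equal contribution — taking care to subtract the diagonal terms that would otherwise be double counted — yields, after collecting,
\begin{equation}
\nonumber
\Var[\vu^{T}\mH\vu] \;=\; \Tr(\mH^{T}\mH) \;+\; \Tr(\mH^{2}) \;+\; (m_{4}-3)\sum_{i=1}^{P}\mH_{ii}^{2}.
\end{equation}

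It then remains to dominate each of the three terms by $\Tr(\mH^{T}\mH)=\sum_{i,j}\mH_{ij}^{2}$. By Cauchy--Schwarz, $\Tr(\mH^{2})=\sum_{i,j}\mH_{ij}\mH_{ji}\leq\Tr(\mH^{T}\mH)$ (with equality in the symmetric case relevant to Hessians and Gauss--Newton matrices, which needs no separate argument); trivially $\sum_{i}\mH_{ii}^{2}\leq\sum_{i,j}\mH_{ij}^{2}=\Tr(\mH^{T}\mH)$; and since Jensen's inequality forces $m_{4}=\mathbb{E}[\vu_{i}^{4}]\geq(\mathbb{E}[\vu_{i}^{2}])^{2}=1>0$, we have $\max(m_{4}-3,0)\leq m_{4}$, so $(m_{4}-3)\sum_{i}\mH_{ii}^{2}\leq m_{4}\Tr(\mH^{T}\mH)$. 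Adding the three bounds gives $\Var[\vu^{T}\mH\vu]\leq(2+m_{4})\Tr(\mH^{T}\mH)$. The only genuinely delicate point is the combinatorial bookkeeping in the fourth-moment expansion — enumerating the pairing patterns and correctly removing the over-counted diagonal contributions so that the coefficient $(m_{4}-3)$ of $\sum_{i}\mH_{ii}^{2}$ comes out right; everything else is elementary.
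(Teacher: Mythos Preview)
Your proof is correct and follows essentially the same approach as the paper: expand $(\vu^{T}\mH\vu)^{2}$ as a quadruple sum, evaluate $\mathbb{E}[\vu_{i}\vu_{j}\vu_{k}\vu_{l}]$ by independence and the moment assumptions, and collect terms. Your version is in fact more careful than the paper's---you track the $(m_{4}-3)\sum_{i}\mH_{ii}^{2}$ correction and the distinction between $\Tr(\mH^{2})$ and $\Tr(\mH^{T}\mH)$ explicitly, whereas the paper effectively writes the upper bound $\mathbb{E}[\vu_{i}\vu_{j}\vu_{k}\vu_{l}]\leq \delta_{ij}\delta_{kl}+\delta_{il}\delta_{jk}+\delta_{ik}\delta_{jl}+m_{4}\delta_{ijkl}$ as if it were an identity and tacitly assumes symmetry of $\mH$.
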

\begin{proof}
\begin{equation}
\mathbb{E}[\vu^{T}\mH\vu] = \sum_{i,j=1}^{P}\mH_{i,j}\mathbb{E}[\vu_{i}\vv_{j}]= \sum_{i=1}^{P}\mH_{i,i} = \Tr \mH
\end{equation}
\begin{equation}
\begin{aligned}
   &  \mathbb{E}[||\vu^{T}\mH\vu||^{2}]   = \sum_{i,j}\sum_{k,l}\mH_{i,j}\mH_{k,l}^{T}\mathbb{E}[\vu_{i}\vu_{j}^{T}\vu_{k}\vu_{l}^{T}] \\
     &  \sum_{i,j}\sum_{k,l}\mH_{i,j}\mH_{k,l}^{T}[\delta_{i,j}\delta_{k,l}+\delta_{i,l}\delta_{j,k}+\delta_{i,k}\delta_{j,l}+m_{4}\delta_{i,j,k,l}]  \\
    & = (\Tr \mH)^{2}+(2+m_{4})\Tr(\mH^{2}) \\
\end{aligned}
\end{equation}
\end{proof}
\begin{remark}
Let us consider the signal to noise ratio for some positive definite $\mH \succ c\mI$
\begin{equation}
    \begin{aligned}
         & \bigg(\frac{\sqrt{\Var [\vu^{T}\mH\vu]}}{\mathbb{E}[\vu^{T}\mH\vu]}\bigg)^{2} \propto \frac{1}{1+\frac{\sum_{i\neq j}^{P}\lambda_{i}\lambda_{j}}{\sum_{k}^{P}\lambda_{k}^{2}}} = \frac{1}{1+\frac{P-1\langle\lambda_{i}\lambda_{j}\rangle}{\langle\lambda_{k}^{2}\rangle}} \\
         & \leq \frac{1}{1+\frac{P-1}{\kappa^{2}}}
    \end{aligned}
\end{equation}
where $\langle .. \rangle$ denotes the arithmetic average. For the extreme case of all eigenvalues being identical, the condition number $\kappa = 1$ and hence this reduces to $1/P \rightarrow 0$ in the $P \rightarrow \infty$ limit, whereas for a rank-$1$ matrix, this ratio remains $1$. For the MP density, which well models neural network spectra, $\kappa$ is not a function of $P$ as $P \rightarrow \infty$ and hence we also expect this benign dimensional scaling to apply.
\end{remark}
We verify this high dimensional result experimentally, by running the same spectral visualisation as in Section \ref{subsec:vgg16c100} but using two different random vectors. We plot the results in Figure \ref{fig:randomvectors}. We find both figures \ref{subfig:vgg16vec1} \& \ref{subfig:vgg16vec2} to be close to visually indistinguishable. There are minimal differences in the extremal eigenvalues, with former giving $\{\lambda_{1},\lambda_{n}\} =$
$ \{6.8885,-0.0455\}$ and the latter $ \{6.8891,-0.0456\}$, but the degeneracy at $0$, bulk, triplet of outliers at $2.27$ and the large outlier at $6.89$ is unchanged. We include the code to run in \ref{subsec:example}

\begin{figure}[h!]
	\centering
	\begin{subfigure}{0.45\linewidth}
	\centering
    \includegraphics[trim=0cm 0cm 0cm 0cm, clip, width=1\linewidth]{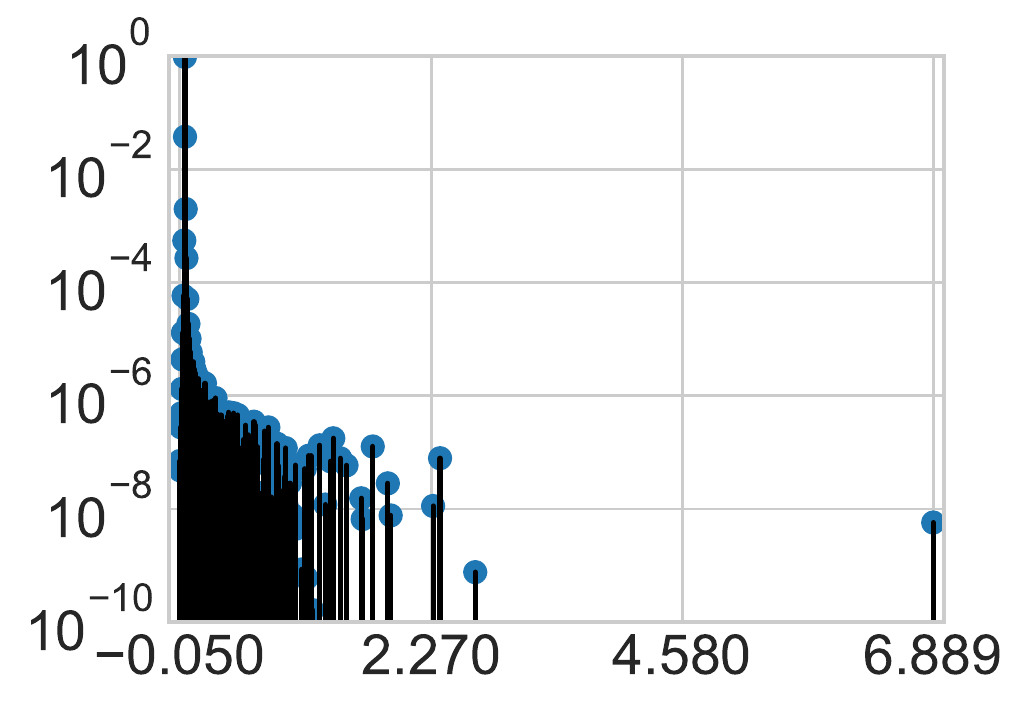}
	\caption{Vector $2$}
	\label{subfig:vgg16vec1}	
	\end{subfigure}
	\hspace{5pt}
	\begin{subfigure}{0.45\linewidth}
	\centering
	\includegraphics[trim=0cm 0cm 0cm 0cm, clip, width=1\linewidth]{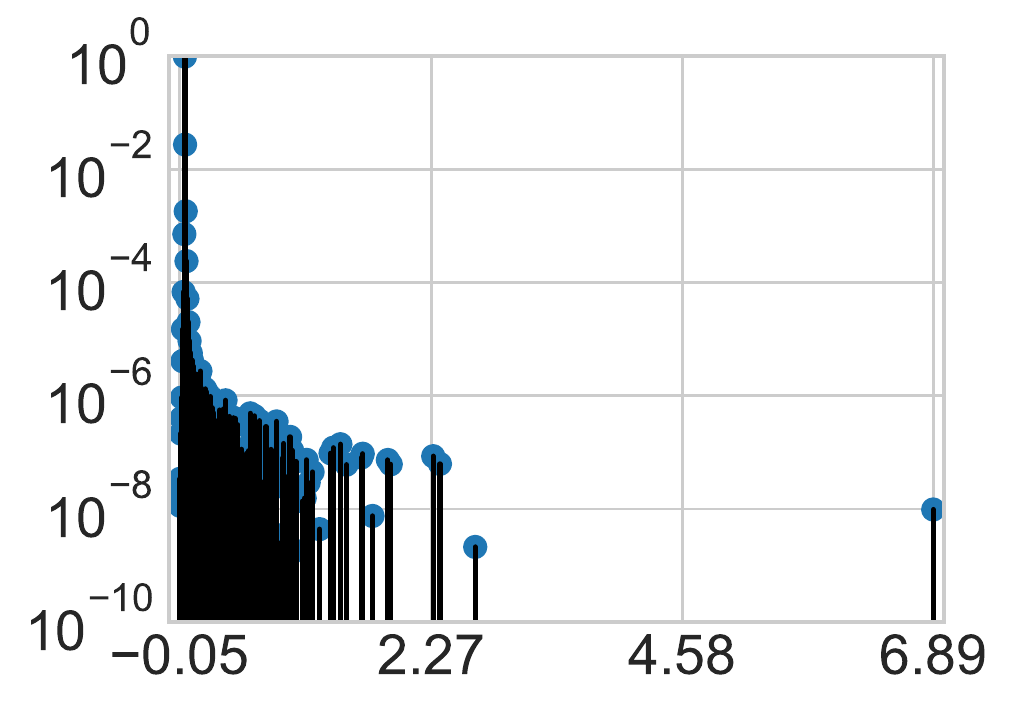}
	\caption{Vector $3$}
	\label{subfig:vgg16vec2}		
	\end{subfigure}
	\caption{VGG$16$ Epoch $300$ end of training Lanczos stem plot for different random vectors}
	\label{fig:randomvectors}
\end{figure}
\label{sec:lanczoshist}

\subsection{Why we don't kernel smooth}
\label{subsec:kernelsmoothingbad}
Concurrent work, which has also used Lanczos with the Pearlmutter trick to learn the Hessian \citep{yao2018hessian,ghorbani2019investigation}, typically uses $n_{v}$ random vectors and then uses kernel smoothing, to give a final density. In this section we argue that beyond costing a factor of $n_{v}$ more computationally, that the extra compute extended in order to get more accurate moment estimates, which we already argued in Section \ref{sec:highdimensionste} are asymptotically error free, is wasted due to the kernel smoothing \citep{granziol2019meme}.
The smoothed spectral density takes the form:
\begin{equation} 
\begin{aligned}
   & \tilde{p}(\lambda) = \int k_\sigma(\lambda-\lambda') p(\lambda') d\lambda'  = \sum_{i=1}^{n}w_{i}k_\sigma(\lambda-\lambda_{i})
\end{aligned}
 \label{eq:smootheddensity}
\end{equation}
We make some assumptions regarding the nature of the kernel function, $k_\sigma(\lambda-\lambda_{i})$, in order to prove our main theoretical result about the effect of kernel smoothing on the moments of the underlying spectral density. Both of our assumptions are met by (the commonly employed) Gaussian kernel.

\begin{assumption}
\label{ass:infdomain}
The kernel function $k_\sigma(\lambda-\lambda_{i})$ is supported on the real line $[-\infty,\infty]$.
\end{assumption}
\begin{assumption}
\label{ass:sym}
The kernel function $k_\sigma(\lambda-\lambda_{i})$ is symmetric and permits all moments.
\end{assumption}
\begin{theorem}
The $m$-th moment of a Dirac mixture $\sum_{i=1}^{n}w_{i}\delta(\lambda-\lambda_{i})$, which is smoothed by a kernel $k_{\sigma}$ satisfying assumptions \ref{ass:infdomain} and \& \ref{ass:sym}, is perturbed from its unsmoothed counterpart by an amount $\sum_{i=1}^{n}w_{i} \sum_{j=1}^{r/2} {r \choose 2j}\mathbb{E}_{k_\sigma(\lambda)}(\lambda^{2j})\lambda_{i}^{m-2j}$, where $r=m$ if $m$ is even and $m-1$ otherwise. $\mathbb{E}_{k_\sigma(\lambda)}(\lambda^{2j})$ denotes the $2j$-th central moment of the kernel function $k_\sigma(\lambda)$.
\end{theorem}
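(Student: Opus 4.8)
The plan is to prove the claim by a direct computation of the $m$-th moment of the smoothed density and a term-by-term comparison with the unsmoothed Dirac mixture. Starting from the smoothed density of Equation~\ref{eq:smootheddensity}, $\tilde p(\lambda) = \sum_{i=1}^{n} w_i k_\sigma(\lambda - \lambda_i)$, I would write
\begin{equation}
\int_{-\infty}^{\infty} \lambda^{m}\,\tilde p(\lambda)\,d\lambda \;=\; \sum_{i=1}^{n} w_i \int_{-\infty}^{\infty} \lambda^{m}\, k_\sigma(\lambda - \lambda_i)\,d\lambda ,
\end{equation}
where exchanging the finite sum over $i$ with the integral is immediate, and each integral is finite and well defined because Assumption~\ref{ass:infdomain} places $k_\sigma$ on all of $\mathbb{R}$ and Assumption~\ref{ass:sym} guarantees that $k_\sigma$ permits all moments.

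The central step is a change of variables and a binomial expansion inside each integral. Setting $u = \lambda - \lambda_i$, one has
\begin{equation}
\int_{-\infty}^{\infty} (u + \lambda_i)^{m}\, k_\sigma(u)\,du \;=\; \sum_{t=0}^{m} \binom{m}{t} \lambda_i^{\,m-t} \int_{-\infty}^{\infty} u^{t}\, k_\sigma(u)\,du .
\end{equation}
Here Assumption~\ref{ass:sym} does the real work: symmetry of $k_\sigma$ about the origin forces $\int u^{t} k_\sigma(u)\,du = 0$ for every odd $t$, while normalisation gives the $t=0$ term as $\lambda_i^{m}$. Writing $r = m$ when $m$ is even and $r = m-1$ when $m$ is odd, so that the surviving even indices are $t = 2j$ with $1 \le j \le r/2$ (together with $t=0$), the per-centre integral collapses to $\lambda_i^{m} + \sum_{j=1}^{r/2} \binom{m}{2j}\, \mathbb{E}_{k_\sigma(\lambda)}(\lambda^{2j})\, \lambda_i^{\,m-2j}$, where $\mathbb{E}_{k_\sigma(\lambda)}(\lambda^{2j})$ is the $2j$-th central moment of the kernel.

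Finally, multiplying by $w_i$ and summing over $i$, the $\lambda_i^{m}$ contributions reassemble precisely the unsmoothed $m$-th moment $\sum_{i=1}^{n} w_i \lambda_i^{m}$, and what is left over is exactly the stated perturbation $\sum_{i=1}^{n} w_i \sum_{j=1}^{r/2} \binom{r}{2j}\, \mathbb{E}_{k_\sigma(\lambda)}(\lambda^{2j})\, \lambda_i^{\,m-2j}$ (the coefficient agreeing with $\binom{r}{2j}$ in the range that contributes). I do not expect a genuine analytic obstacle: the argument is elementary once the assumptions are invoked. The only points demanding care are (i) confirming that Assumptions~\ref{ass:infdomain} and~\ref{ass:sym} really do license the term-by-term integration and the vanishing of all odd kernel moments, and (ii) the bookkeeping of the parity of $m$ so that the upper summation limit is correctly $r/2$; these are the places where a hurried write-up would slip.
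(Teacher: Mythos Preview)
Your proposal is correct and follows essentially the same route as the paper: write the smoothed $m$-th moment as $\sum_i w_i \int k_\sigma(\lambda-\lambda_i)\lambda^m\,d\lambda$, shift variables, binomially expand $(\lambda'+\lambda_i)^m$, and invoke symmetry of $k_\sigma$ to kill the odd-order kernel moments. Your write-up is in fact somewhat more explicit than the paper's about the roles of Assumptions~\ref{ass:infdomain} and~\ref{ass:sym} and about the parity bookkeeping; the only wrinkle is your parenthetical claim that $\binom{m}{2j}$ agrees with $\binom{r}{2j}$ ``in the range that contributes,'' which is not literally true for odd $m$ (e.g.\ $m=3$, $j=1$), though this reflects a quirk of the theorem statement rather than a flaw in your argument.
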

\begin{proof}
The moments of the Dirac mixture are given as, 
\begin{equation}
    \langle \lambda^{m} \rangle = \sum_{i=1}^{n}w_{i}\int \delta(\lambda-\lambda_{i})\lambda^{m}d\lambda = \sum_{i=1}^{n}w_{i}\lambda_{i}^{m}.
\end{equation}
The moments of the modified smooth function (Equation \eqref{eq:smootheddensity}) are
\begin{equation}
    \begin{aligned}
    \langle \tilde{\lambda}^{m} \rangle & =\sum_{i=1}^nw_{i}\int k_\sigma(\lambda-\lambda_{i})\lambda^{m}d\lambda \\& 
    = \sum_{i=1}^nw_{i}\int k_\sigma(\lambda')(\lambda'+\lambda_{i})^{m}d\lambda' \\ & = \langle \lambda^{m} \rangle + \sum_{i=1}^{n}w_{i} \sum_{j=1}^{r/2} {r \choose 2j}\mathbb{E}_{k_\sigma(\lambda)}(\lambda^{2j})\lambda_{i}^{m-2j}.
    \end{aligned}
\end{equation}
We have used the binomial expansion and the fact that the infinite domain is invariant under shift reparametarization and the odd moments of a symmetric distribution are $0$. 
\end{proof}

\begin{remark}
The above proves that kernel smoothing alters moment information, and that this process becomes more pronounced for higher moments. Furthermore, given that $w_{i} > 0$, $\mathbb{E}_{k_\sigma(\lambda)}(\lambda^{2j}) > 0$ and (for the GGN $lambda_{i} > 0$, the corrective term is manifestly positive, so the smoothed moment estimates are biased.
\end{remark}
\section{Local loss landscape} The Lanczos algorithm with enforced orthogonality initialised with a random vector gives a moment matched discrete approximation to the Hessian spectrum. However this information is local to the point in weight space $\vw$ and the quadratic approximation may break down within the near vicinity. To investigate this, we use the \textbf{loss landscape visualisation} function of our package: We display this for the VGG-$16$ on CIFAR-$100$ in Figure \ref{fig:vgg16c100losssurface}. We see for the training loss \ref{subfig:vgg16c100trainlosssurface} that the eigenvector corresponding to the largest eigenvalue $\lambda = 6.88$ only very locally corresponds to the sharpest increase in loss for the training, with other extremal eigenvectors, corresponding to the eigenvalues $\lambda = \{2.67,2.35\}$ overtaking it in loss change relatively rapidly. Interestingly for the testing loss, all the extremal eigenvectors change the loss much more rapidly, contradicting previous assertions that the test loss is a ''shifted" version of the training loss \citep{he2019asymmetric,izmailov2018averaging}. We do however note some small asymettry between the changes in loss along the opposite ends of the eigenvectors. The flat directions remain flat locally and some of the eigen-vectors corresponding to negative values correspond to decreases in test loss. We include the code in \ref{subsec:example2}

\begin{figure}[h!]
	\centering
	\begin{subfigure}{0.34\linewidth}
	\centering
    \includegraphics[trim=0cm 0cm 0cm 0cm, clip, width=1\linewidth]{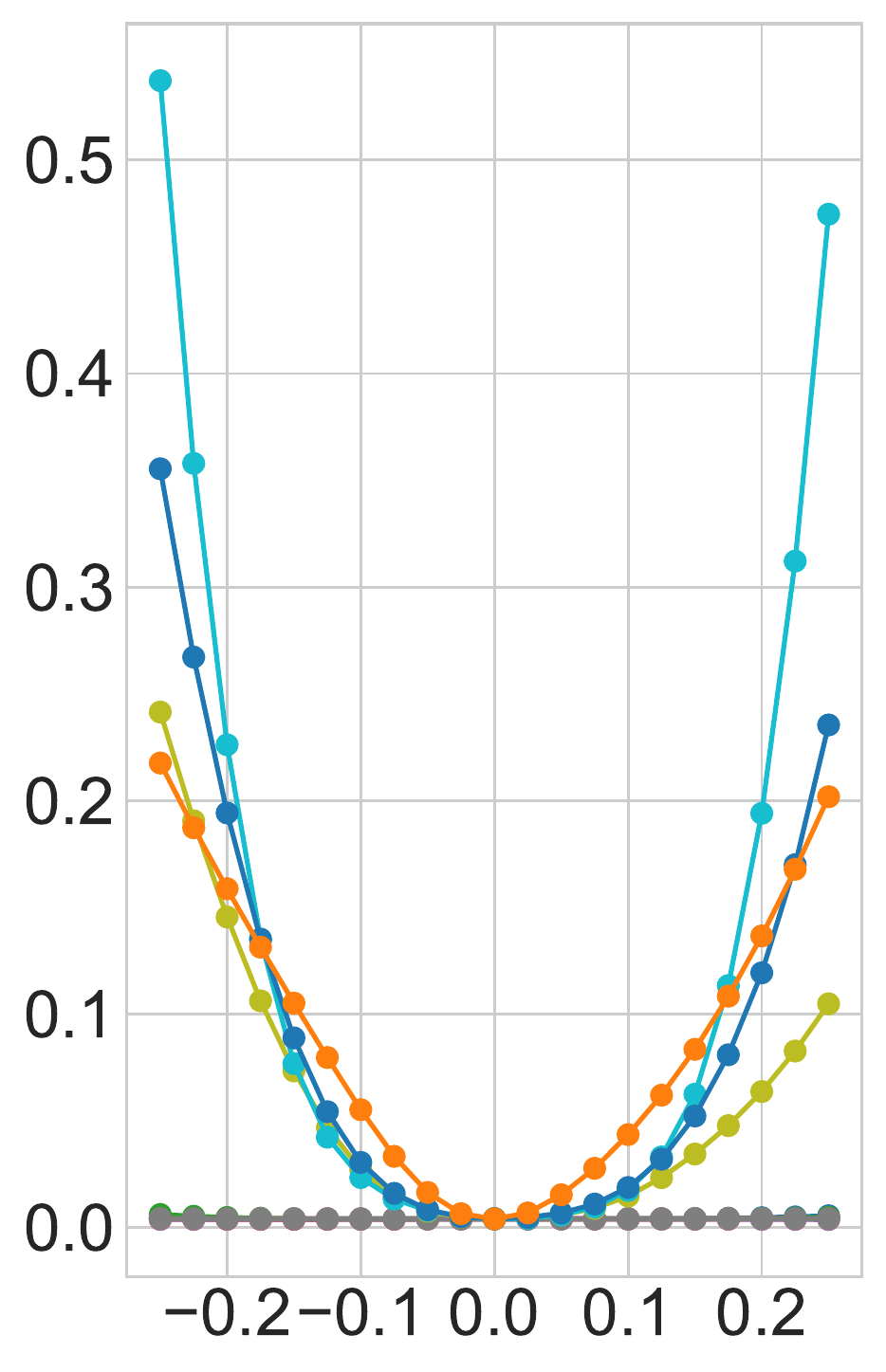}
	\caption{Training Loss}
	\label{subfig:vgg16c100trainlosssurface}	
	\end{subfigure}
	\hspace{5pt}
	\begin{subfigure}{0.56\linewidth}
	\centering
	\includegraphics[trim=0cm 0cm 0cm 0cm, clip, width=1\linewidth]{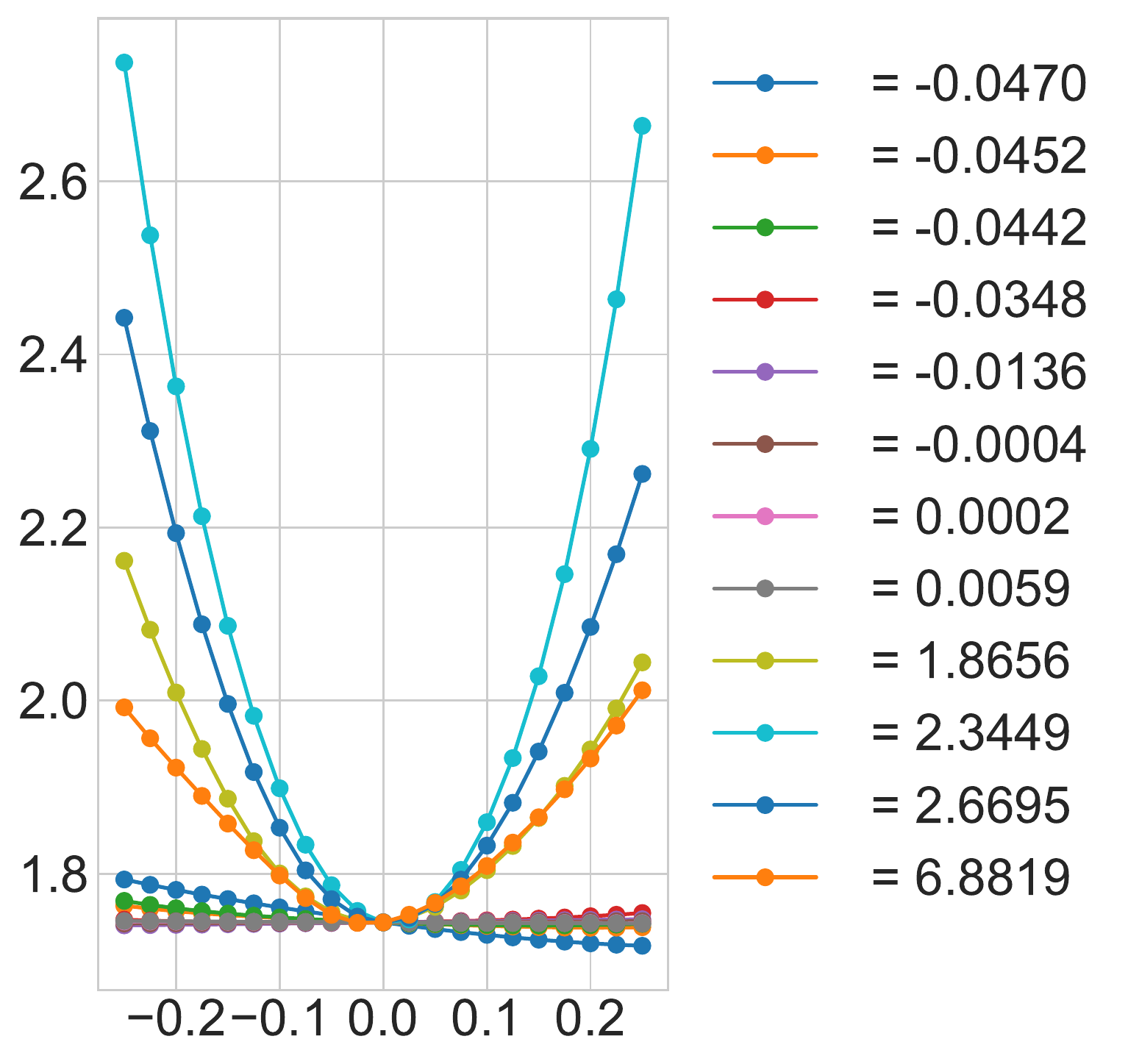}
	\caption{Testing Loss}
	\label{subfig:vgg16c100testlosssurface}		
	\end{subfigure}
	\caption{VGG-$16$ CIFAR-$100$ Loss surface visualised along $6$ negative and position eigenvalues}
	\label{fig:vgg16c100losssurface}
\end{figure}

\begin{figure}[h!]
	\centering
	\begin{subfigure}{0.33\linewidth}
	\centering
    \includegraphics[trim=0cm 0cm 0cm 0cm, clip, width=1\linewidth]{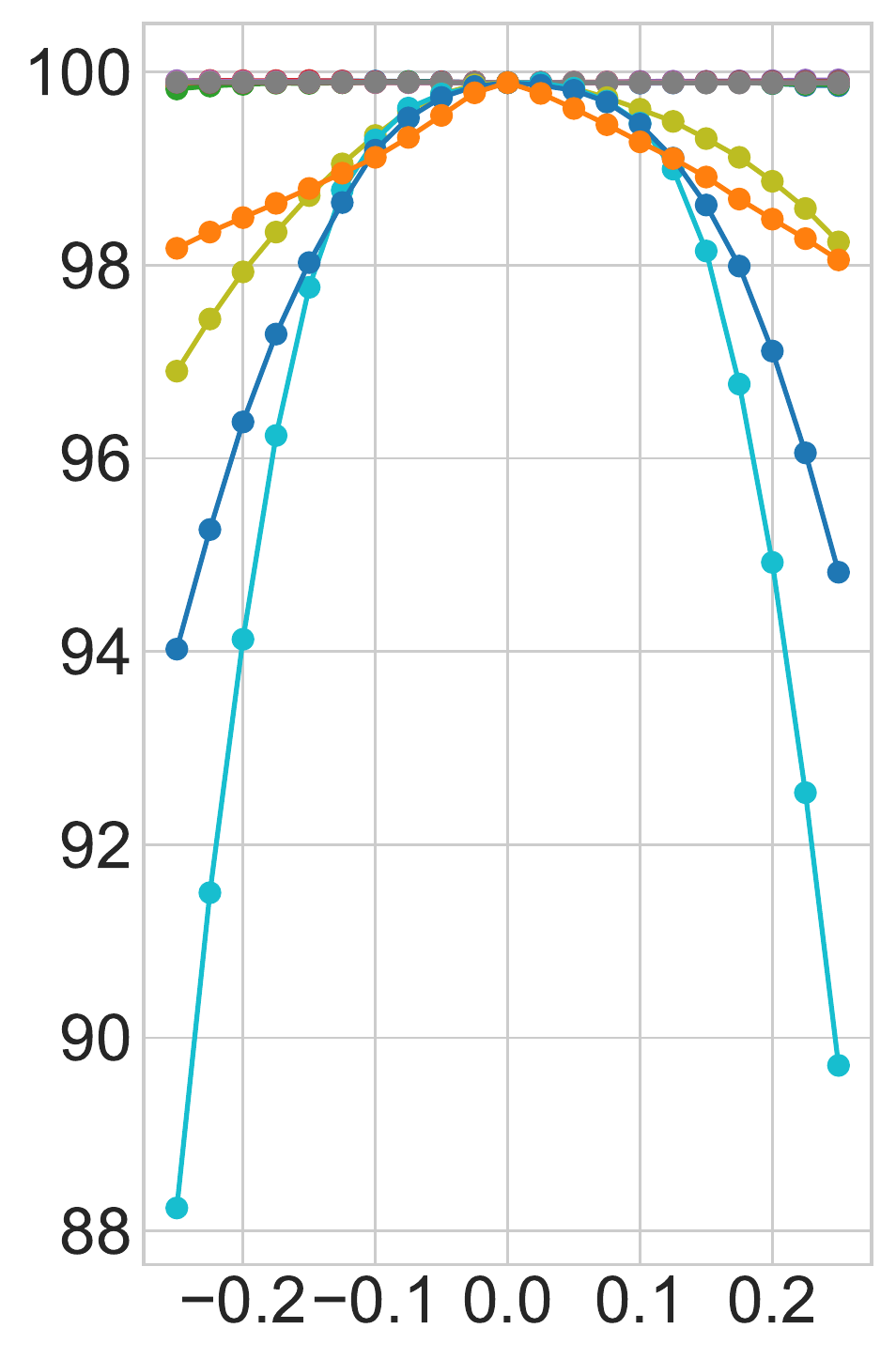}
	\caption{Training Accuracy}
	\label{subfig:vgg16c100trainaccsurface}	
	\end{subfigure}
	\hspace{5pt}
	\begin{subfigure}{0.57\linewidth}
	\centering
	\includegraphics[trim=0cm 0cm 0cm 0cm, clip, width=1\linewidth]{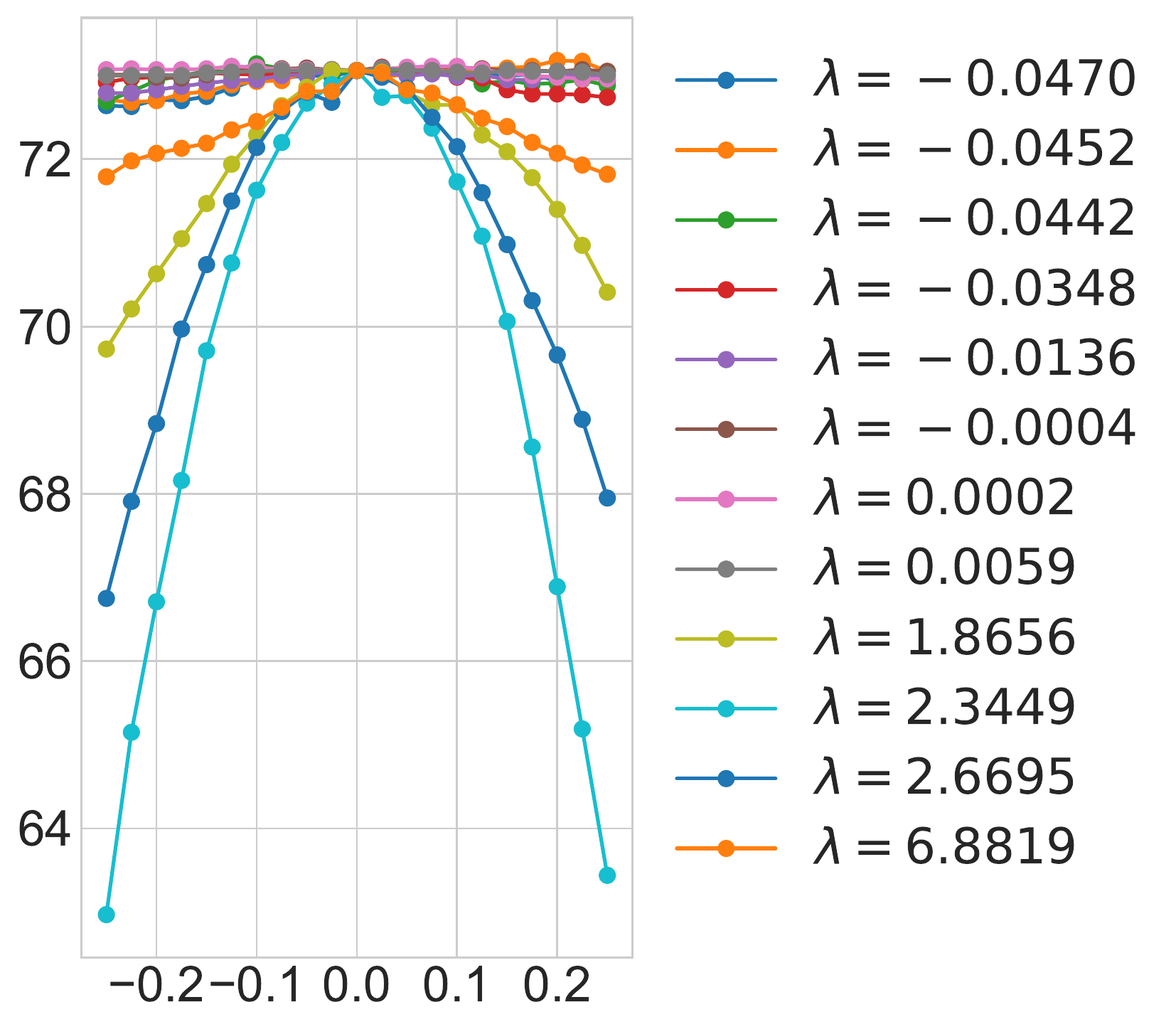}
	\caption{Testing Accuracy}
	\label{subfig:vgg16c100testaccsurface}		
	\end{subfigure}
	\caption{VGG-$16$ CIFAR-$100$ Accuracy surface visualised along $6$ negative and position eigenvalues}
	\label{fig:vgg16c100accsurface}
\end{figure}

\subsubsection*{CIFAR-$10$ Dataset}
\label{subsec:vgg16c10}
\begin{figure}[h!]
	\centering
	\begin{subfigure}{0.34\linewidth}
	\centering
    \includegraphics[trim=0cm 0cm 0cm 0cm, clip, width=1\linewidth]{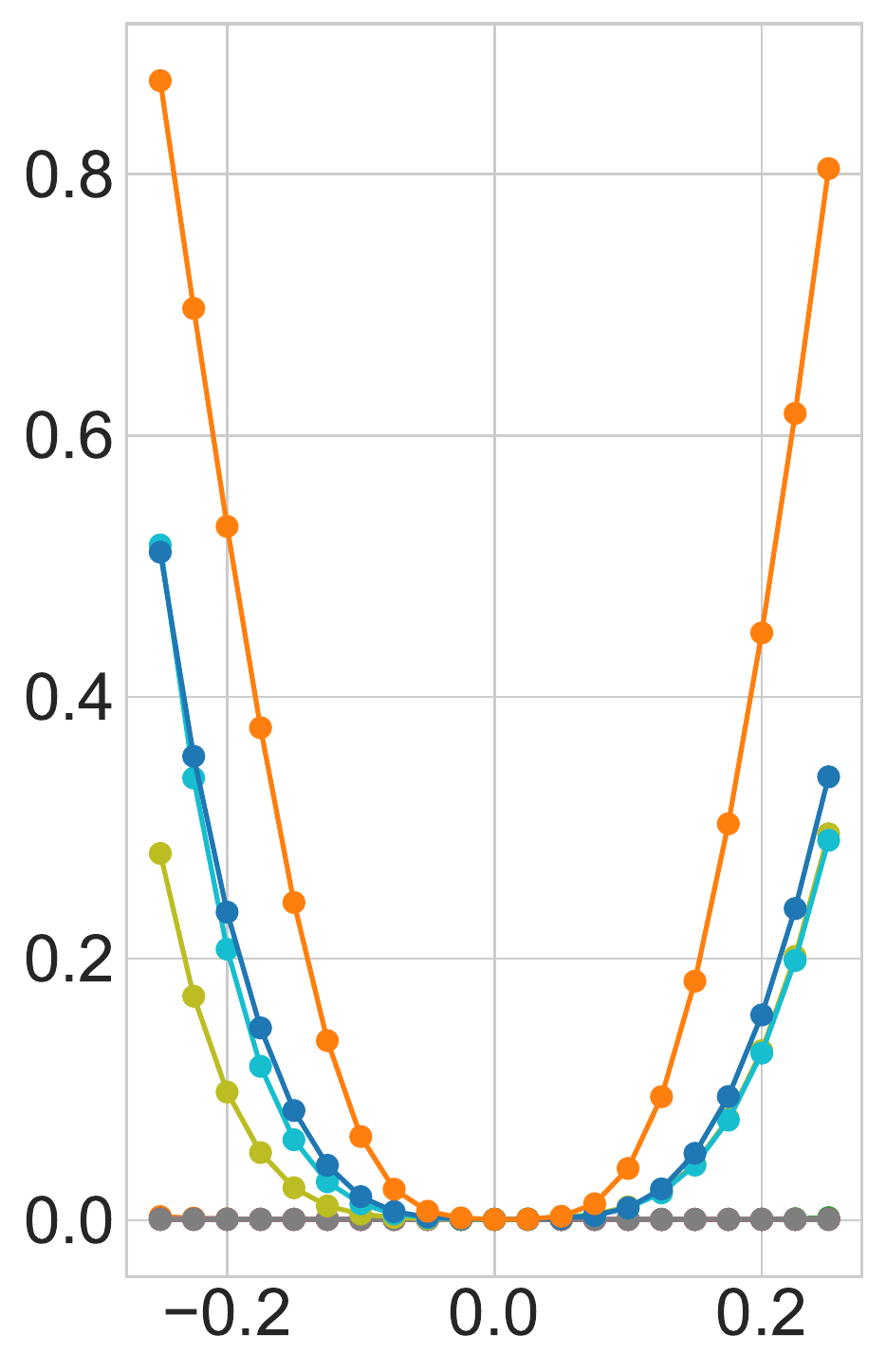}
	\caption{Training Loss}
	\label{subfig:vgg16c10trainlosssurface}	
	\end{subfigure}
	\hspace{5pt}
	\begin{subfigure}{0.56\linewidth}
	\centering
	\includegraphics[trim=0cm 0cm 0cm 0cm, clip, width=1\linewidth]{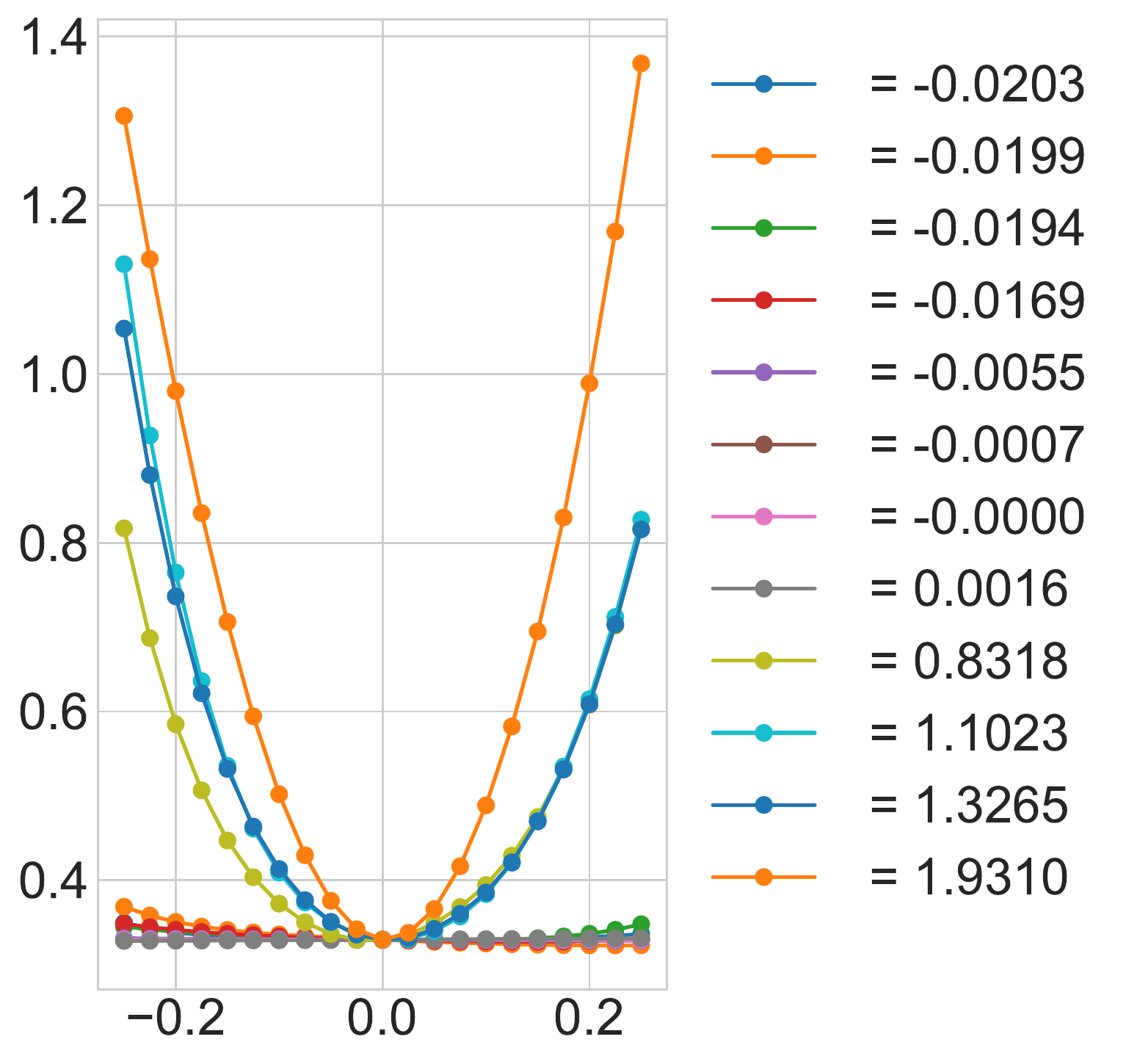}
	\caption{Testing Loss}
	\label{subfig:vgg16c10testlosssurface}		
	\end{subfigure}
	\caption{VGG-$16$ CIFAR-$10$ Loss surface visualised along $6$ negative and position eigenvalues}
	\label{fig:vgg16c100losssurface}
\end{figure}

To showcase the ability of our software to handle multiple datasets we display the Hessian of the VGG-$16$ trained in an identical fashion as its CIFAR-$100$ counterpart of CIFAR-$10$ in Figure \ref{fig:cifar10hessian}, along with the a plot of a selection of Ritz vectors traversing the training loss surface in Figure \ref{subfig:vgg16c10trainlosssurface} and testing loss surface in Figure \ref{subfig:vgg16c10testlosssurface} along with also the training accuracy surface (Figure \ref{subfig:vgg16c10trainaccsurface}) and testing accuracy surface (Figure \ref{subfig:vgg16c10testaccsurface}).

\begin{figure}[h!]
	\centering
	\begin{subfigure}{0.33\linewidth}
	\centering
    \includegraphics[trim=0cm 0cm 0cm 0cm, clip, width=1\linewidth]{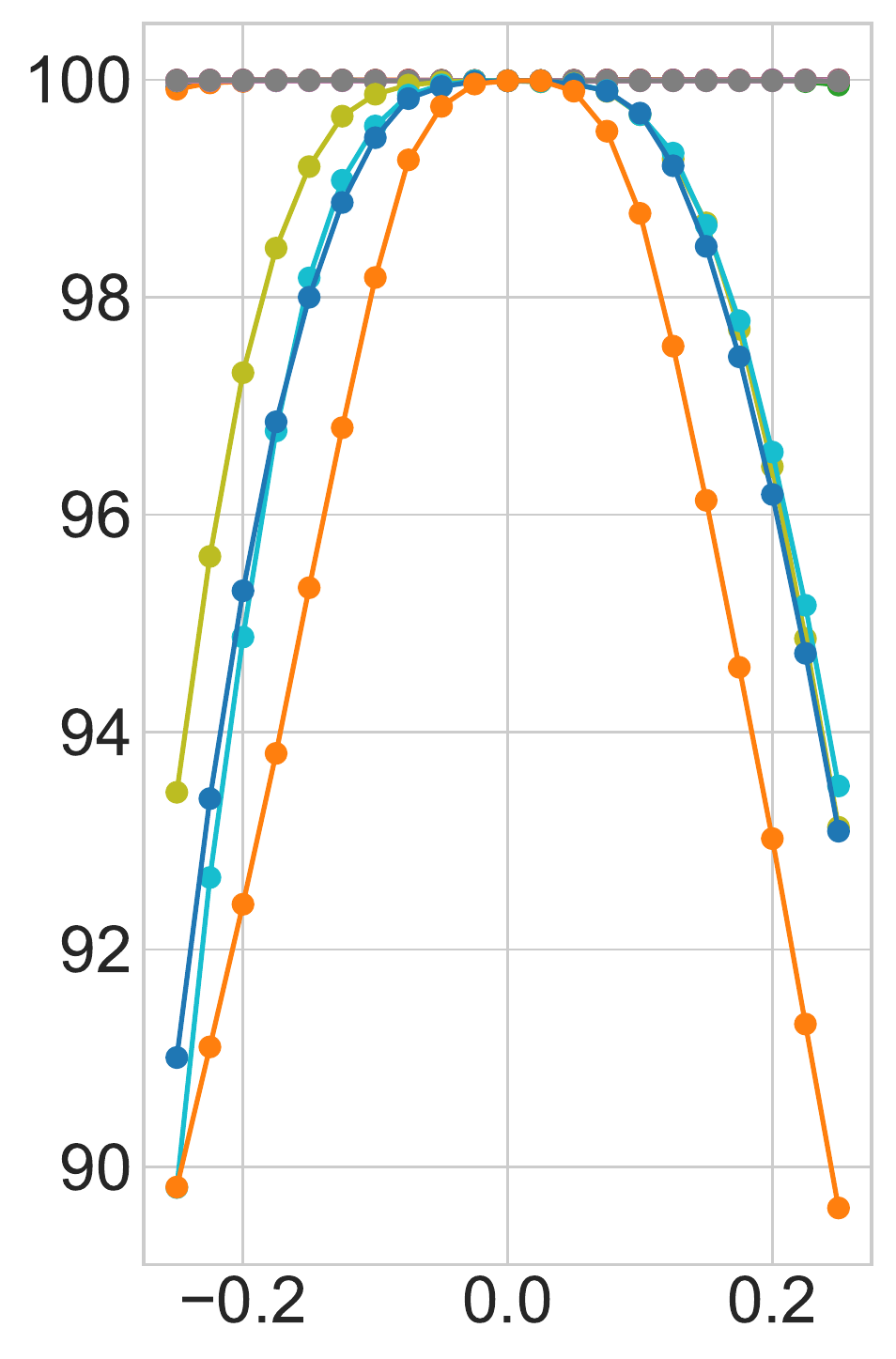}
	\caption{Training Accuracy}
	\label{subfig:vgg16c10trainaccsurface}	
	\end{subfigure}
	\hspace{5pt}
	\begin{subfigure}{0.57\linewidth}
	\centering
	\includegraphics[trim=0cm 0cm 0cm 0cm, clip, width=1\linewidth]{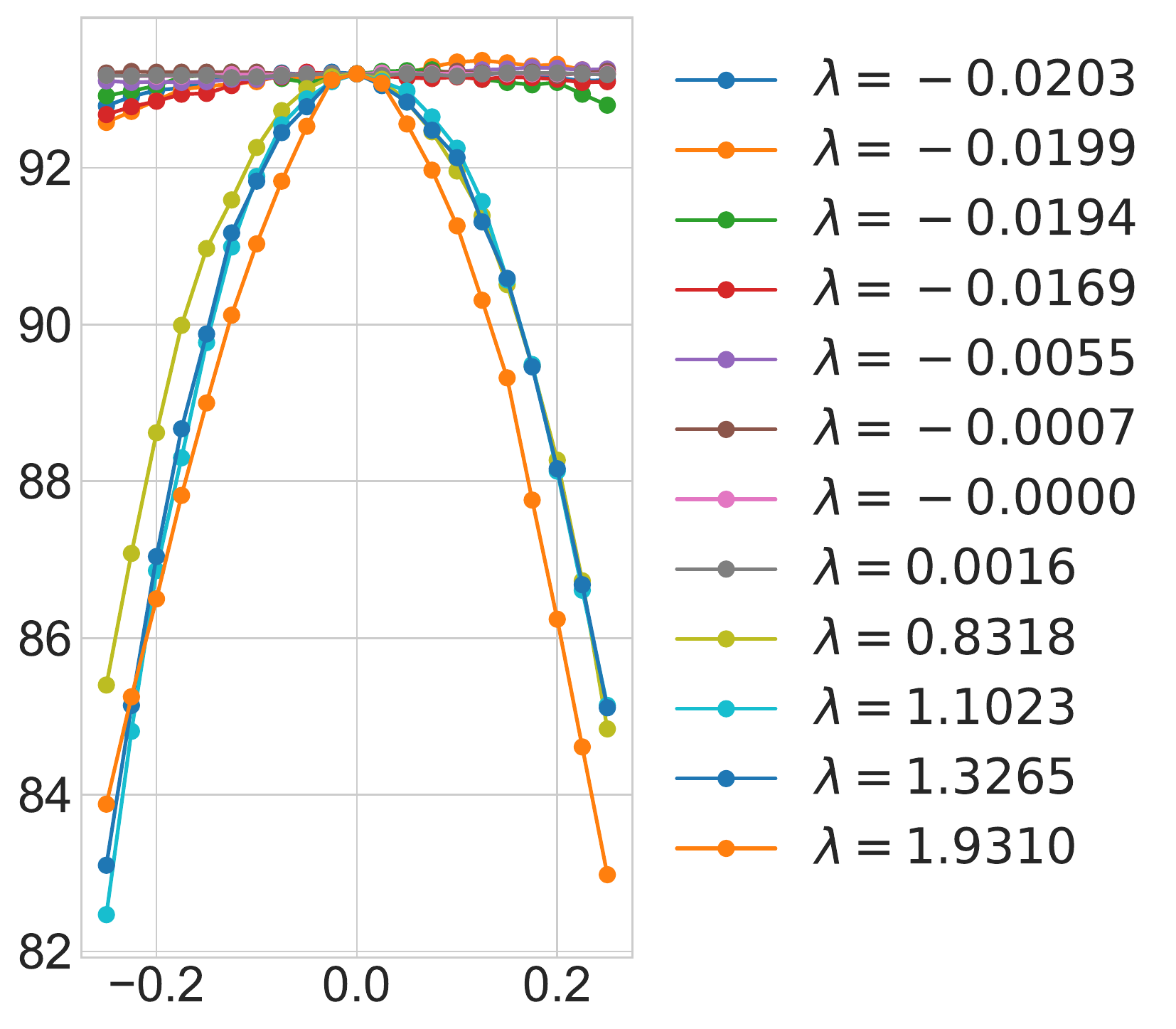}
	\caption{Testing Accuracy}
	\label{subfig:vgg16c10testaccsurface}		
	\end{subfigure}
	\caption{VGG-$16$ CIFAR-$10$ Accuracy surface visualised along negative and positive eigenvalues}
	\label{fig:vgg16c10accsurface}
\end{figure}
\begin{figure}[h!]
    \centering
    \includegraphics[trim={0 0cm 0 0cm},clip, width=0.5\textwidth]{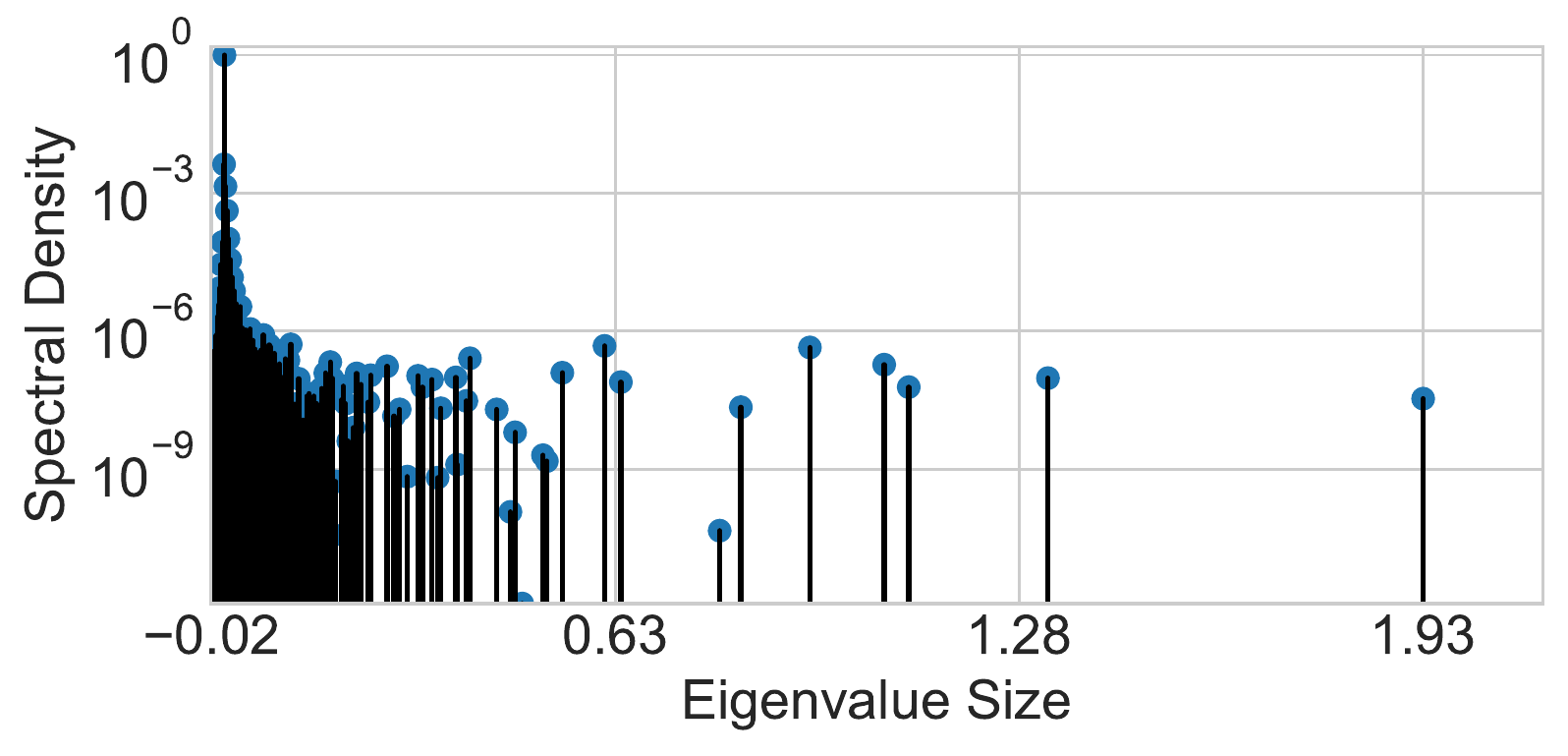}
    \caption{VGG-$16$ CIFAR-$10$ epoch $300$ Hessian}
    \label{fig:cifar10hessian}
\end{figure}

\section{Conclusion}
We introduce the \textbf{Deep Curvature} suite in \textbf{PyTorch} framework, based on the Lanczos algorithm implemented in GPyTorch \citep{gardner2018gpytorch}, that allows deep learning practitioners to learn spectral density information as well as eigenvalue/eigenvector pairs of the curvature matrices at specific points in weight space. Together with the software, we also include a succinct summary of the linear algebra, iterative method theory including proofs of convergence and misconceptions and stochastic trace estimation that form the theoretical underpinnings of our work. Finally, we also included various examples of our package of analysis of both synthetic data and real data with modern neural network architectures.

\bibliography{example_paper}
\bibliographystyle{icml2019}

\newpage
\appendix
\section{Mathematical Definitions}
\begin{definition}{}
\label{def:wigner}
Let $\{Y_{i}\}$ and $\{Z_{ij}\}_{1\leq i\leq j}$ be two real-valued families of zero mean, i.i.d random variables, Furthermore suppose that $\mathbb{E}Z_{12}^{2}=1$ and for each $k \in \mathbb{N}$
\begin{equation}
    max(E|Z_{12}^{k},E|Y_{1}|^{k})< \infty
\end{equation}
Consider a $P \times P$ symmetric matrix $M_{P}$, whose entries are given by
\begin{equation}
  \begin{cases}
    M_{P}(i,i) = Y_{i}\\
    M_{P}(i,j) = Z_{ij} = M_{P}(j,i) ,& \text{if } x\geq 1 
\end{cases} 
\end{equation}
The Matrix $M_{P}$ is known as a real symmetric Wigner matrix.
\end{definition}
\section{Lanczos Algorithm Primer}
\label{sec:whateveryoneshouldknowlanczos}
\subsection{Why does anyone use Power Iterations?}
The Lanczos method be be explicitly derived by considering the optimization of the Rayleigh quotient \citep{golub2012matrix}
\begin{equation}
    r(\vv) = \frac{\vv^{T}\mH \vv}{\vv^{T}\vv}
\end{equation}
over the entire Krylov subspace $\mathcal{K}_{m}(\mH,\vv)$ as opposed to power iteration which is a particular vector in the Krylov subspace $\vu = \mH^{m}\vv$. Despite this, practitioners looking to learn the leading eigenvalue, very often resort to the power iteration, likely due to its implementational simplicity. We showcase the power iterations relative inferiority to Lanczos with the following convergence theorems
\begin{theorem}
	\label{theorem:lanczoseigenvalues}
	Let $H^{P\times P}$ be a symmetric matrix with eigenvalues $\lambda_{1}\geq .. \geq \lambda_{P}$ and corresponding orthonormal eigenvectors $z_{1},..z_{P}$. If $\theta_{1}\geq .. \geq \theta_{m}$ are the eigenvalues of the matrix $T_{m}$ obtained after $m$ Lanczos steps and $q_{1},...q_{m}$ the corresponding Ritz eigenvectors then
	\begin{equation}
	\label{eq:lanczosconv}
	\begin{aligned}
	& \lambda_{1} \geq \theta_{1} \geq \lambda_{1} - \frac{(\lambda_{1}-\lambda_{n})\tan^{2}(\theta_{1})}{(c_{m-1}(1+2\rho_{1}))^{2}}\\
	& \lambda_{P} \leq \theta_{k} \leq \lambda_{m} + \frac{(\lambda_{1}-\lambda_{n})\tan^{2}(\theta_{1})}{(c_{m-1}(1+2\rho_{1}))^{2}} \\
	\end{aligned}	
	\end{equation}
	
	where $c_{m}$ is the Chebyshev polyomial of order $k$. $\cos{\theta_{1}} = |\vq_{1}^{T}\vz_{1}|$ \& $\rho_{1}= (\lambda_{1}-\lambda_{2})/(\lambda_{2}-\lambda_{n})$
\end{theorem}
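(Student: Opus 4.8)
The plan is to run the classical Kaniel--Paige--Saad argument, whose two ingredients are (i) the polynomial description of the Krylov subspace together with the Courant--Fischer min-max principle, and (ii) the extremal growth property of Chebyshev polynomials. First I would recall that after $m$ Lanczos steps the Ritz values $\theta_1 \ge \dots \ge \theta_m$ are the eigenvalues of the orthogonal projection of $H$ onto $\mathcal{K}_m(H,q_1) = \mathrm{span}\{q_1, Hq_1, \dots, H^{m-1}q_1\}$, so by min-max the largest one obeys
\begin{equation}
\theta_1 = \max_{0 \neq w \in \mathcal{K}_m(H,q_1)} \frac{w^T H w}{w^T w} = \max_{p \in \mathcal{P}_{m-1},\, p(H)q_1 \neq 0} \frac{(p(H)q_1)^T H\, p(H)q_1}{(p(H)q_1)^T p(H)q_1},
\end{equation}
since every vector of $\mathcal{K}_m(H,q_1)$ has the form $p(H)q_1$ with $\deg p \le m-1$, and trivially $\theta_1 \le \lambda_1$ because a Rayleigh quotient never exceeds $\lambda_1$.

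Next I would expand the starting vector in the orthonormal eigenbasis, $q_1 = \sum_{i=1}^P c_i z_i$ with $\sum_i c_i^2 = 1$ and $c_1 = \cos\theta_1$ where $\theta_1 = \angle(q_1, z_1)$, so that the quotient above equals $\bigl(\sum_i \lambda_i c_i^2 p(\lambda_i)^2\bigr)/\bigl(\sum_i c_i^2 p(\lambda_i)^2\bigr)$. Subtracting from $\lambda_1$ kills the $i=1$ term, and bounding $\lambda_1-\lambda_i \le \lambda_1-\lambda_P$ in the numerator and dropping all but the $i=1$ term in the denominator gives, for every admissible $p$,
\begin{equation}
\lambda_1 - \theta_1 \le \frac{\sum_{i=2}^P (\lambda_1-\lambda_i) c_i^2 p(\lambda_i)^2}{\sum_{i=1}^P c_i^2 p(\lambda_i)^2} \le (\lambda_1 - \lambda_P)\, \frac{\sum_{i=2}^P c_i^2 p(\lambda_i)^2}{c_1^2\, p(\lambda_1)^2}.
\end{equation}

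The crux is choosing $p$ to make $p(\lambda_1)$ huge relative to $p(\lambda_i)$ for $i\ge 2$. I would take $p(\lambda) = c_{m-1}\!\left(1 + 2\tfrac{\lambda - \lambda_2}{\lambda_2 - \lambda_P}\right)$, the degree-$(m-1)$ Chebyshev polynomial composed with the affine map carrying $[\lambda_P,\lambda_2]$ onto $[-1,1]$. Then $p(\lambda_i)^2 \le 1$ for every $i \ge 2$ (argument in $[-1,1]$, where $|c_{m-1}|\le 1$), while at $\lambda_1$ the argument equals $1 + 2\rho_1$ with $\rho_1 = (\lambda_1-\lambda_2)/(\lambda_2-\lambda_P)$, so $p(\lambda_1) = c_{m-1}(1+2\rho_1)$. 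Plugging in, using $\sum_{i\ge2} c_i^2 = 1 - c_1^2 = \sin^2\theta_1$ and $c_1^2 = \cos^2\theta_1$, yields $\lambda_1 - \theta_1 \le (\lambda_1-\lambda_P)\tan^2\theta_1/(c_{m-1}(1+2\rho_1))^2$, which together with $\theta_1 \le \lambda_1$ is the first line of \eqref{eq:lanczosconv}. The second line follows by applying the identical argument to $-H$, whose Lanczos iteration is the same up to sign, so the smallest Ritz value satisfies the mirror-image estimate around $\lambda_P$ (with the obvious relabelling of indices). The main obstacle is purely conceptual rather than computational: one must invoke the correct extremal property --- among degree-$(m-1)$ polynomials bounded by $1$ on an interval, the Chebyshev polynomial grows fastest immediately outside it --- and verify that the chosen affine reparametrisation sends $\lambda_1$ to exactly $1+2\rho_1$; the remainder is bookkeeping with the eigen-expansion.
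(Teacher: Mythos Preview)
Your proposal is correct: it is precisely the Kaniel--Paige--Saad argument, which is the proof given in \citet{golub2012matrix}. The paper itself does not supply an independent proof of this theorem but simply cites that reference, so your write-up is in fact more detailed than what appears in the paper and follows the same route as the cited source.
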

\begin{proof}
see \citep{golub2012matrix}.
\end{proof}
\begin{theorem}
	\label{theorem:powereigenvalues}
    Assuming the same notation as in Theorem \ref{theorem:lanczoseigenvalues}, after $m$ power iteration steps the corresponding extremal eigenvalue estimate is lower bounded by
	\begin{equation}
	\label{eq:powerconv}
	\begin{aligned}
	& \lambda_{1} \geq \theta_{1} \geq \lambda_{1} - (\lambda_{1}-\lambda_{n})\tan^{2}(\theta_{1})\bigg(\frac{\lambda_{2}}{\lambda_{1}}\bigg)^{2m-1}
	\end{aligned}
	\end{equation}
	
\end{theorem}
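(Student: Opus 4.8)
The plan is to carry out the classical spectral-expansion analysis of one-vector power iteration and read off its geometric convergence rate, so that it sits alongside the Chebyshev-accelerated rate of Theorem~\ref{theorem:lanczoseigenvalues} for contrast. First I would fix an orthonormal eigenbasis $z_{1},\dots,z_{P}$ with $\mH z_{i}=\lambda_{i}z_{i}$ and expand the normalized starting vector $q_{1}$ (so that, consistently with Theorem~\ref{theorem:lanczoseigenvalues}, $\cos\theta_{1}=|q_{1}^{T}z_{1}|$) as $q_{1}=\sum_{i=1}^{P}\gamma_{i}z_{i}$ with $\sum_{i}\gamma_{i}^{2}=1$. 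Then $\gamma_{1}^{2}=\cos^{2}\theta_{1}$ and $\sum_{i\ge 2}\gamma_{i}^{2}=\sin^{2}\theta_{1}$, hence $(\sum_{i\ge 2}\gamma_{i}^{2})/\gamma_{1}^{2}=\tan^{2}\theta_{1}$, which is exactly the prefactor that must appear in the claimed bound.

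Next I would take the eigenvalue estimate after $m$ power iteration steps to be the Rayleigh quotient of the iterate $\mH^{m}q_{1}=\sum_{i}\gamma_{i}\lambda_{i}^{m}z_{i}$, namely
\begin{equation}
\theta_{1}=\frac{(\mH^{m}q_{1})^{T}\mH(\mH^{m}q_{1})}{(\mH^{m}q_{1})^{T}(\mH^{m}q_{1})}=\frac{\sum_{i=1}^{P}\gamma_{i}^{2}\lambda_{i}^{2m+1}}{\sum_{i=1}^{P}\gamma_{i}^{2}\lambda_{i}^{2m}}.
\end{equation}
The upper bound $\theta_{1}\le\lambda_{1}$ is then immediate from the Rayleigh--Ritz variational principle, since $\theta_{1}$ is a Rayleigh quotient of a nonzero vector. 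For the lower bound I would write
\begin{equation}
\lambda_{1}-\theta_{1}=\frac{\sum_{i\ge 2}\gamma_{i}^{2}\lambda_{i}^{2m}(\lambda_{1}-\lambda_{i})}{\sum_{i=1}^{P}\gamma_{i}^{2}\lambda_{i}^{2m}},
\end{equation}
where the $i=1$ term drops out of the numerator, then bound the numerator via $\lambda_{1}-\lambda_{i}\le\lambda_{1}-\lambda_{n}$ and $\lambda_{i}^{2m}\le\lambda_{2}^{2m}$ for $i\ge 2$ (under the standard hypothesis that $\lambda_{1}$ is the strictly dominant eigenvalue, i.e.\ $\lambda_{2}=\max_{i\ne 1}|\lambda_{i}|$), and bound the denominator from below by its $i=1$ term $\gamma_{1}^{2}\lambda_{1}^{2m}$. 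This gives $\lambda_{1}-\theta_{1}\le(\lambda_{1}-\lambda_{n})\tan^{2}\theta_{1}\,(\lambda_{2}/\lambda_{1})^{2m}$, and a final monotonicity step $(\lambda_{2}/\lambda_{1})^{2m}\le(\lambda_{2}/\lambda_{1})^{2m-1}$ (valid since $0\le\lambda_{2}/\lambda_{1}\le 1$) yields the stated exponent $2m-1$.

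I do not expect a genuine obstacle: this is the textbook power-iteration convergence estimate, not a hard argument. The only care needed is bookkeeping — pinning down precisely which scalar counts as ``the eigenvalue estimate after $m$ steps'' (Rayleigh quotient versus a ratio of successive iterate norms, which shifts the exponent by one), and making the dominance assumption $\lambda_{2}=\max_{i\ne 1}|\lambda_{i}|$ explicit so that $\lambda_{i}^{2m}\le\lambda_{2}^{2m}$ is legitimate even when some $\lambda_{i}$ are negative. I would close with a remark comparing the two theorems: the power-iteration error decays only like the geometric factor $(\lambda_{2}/\lambda_{1})^{2m-1}$, whereas the Lanczos error in Theorem~\ref{theorem:lanczoseigenvalues} carries the inverse squared Chebyshev factor $c_{m-1}(1+2\rho_{1})^{-2}$, which is exponentially smaller in $m$ when the gap ratio $\rho_{1}$ is small — the point of stating both bounds side by side.
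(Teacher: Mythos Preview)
Your argument is the standard spectral-expansion proof and is correct; the only cosmetic point is that you actually obtain the sharper exponent $2m$ and then throw a factor away to match the stated $2m-1$, which is fine but worth flagging as a convention/indexing artifact rather than a genuine loss. Note, however, that the paper does not give its own proof of this theorem at all: it is stated immediately after Theorem~\ref{theorem:lanczoseigenvalues} (whose proof is just ``see \citep{golub2012matrix}'') and the text moves straight on to the numerical comparison in Table~\ref{table:lanczosvspower}, so the intended provenance is the same Golub--Van Loan reference, where exactly the Rayleigh-quotient computation you wrote appears.
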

From the rapid growth of orthogonal polynomials such as Chebyshev, we expect Lanczos superiority to significantly emerge for larger spectral gap and iteration number. To verify this experimentally, we collect the non identical terms in the equations \ref{eq:lanczosconv} and \ref{eq:powerconv} of the lower bounds for $\lambda_{1}$ derived by Lanczos and Power iteration and denote them $L_{k-1}$ and $R_{k-1}$ respectively. For different values of $\lambda_{1}/\lambda_{2}$ and iteration number $m$ we give the ratio of these two quatities in Table \ref{table:lanczosvspower}. As can be clearly seen, the Lanczos lower bound is always closer to the true value, this improves with the iteration number $m$ and its relative edge is reduced if the spectral gap is decreased. 
\begin{table}[]
\begin{tabular}{@{}lllll@{}}
\toprule
\textbf{$\lambda_{1}/\lambda_{2}$} & \textbf{$m=5$} & \textbf{$m=10$} & \textbf{$m=15$} & \textbf{$m=20$}  \\ \midrule
 $1.5$ & $\frac{1.1\times 10^{-4}}{3.9 \times 10^{-2}}$ & $\frac{2\times 10^{-10}}{6.8 \times 10^{-4}}$ & $\frac{3.9\times 10^{-16}}{1.2 \times 10^{-5}}$ & $\frac{7.4\times 10^{-22}}{2.0 \times 10^{-7}}$   \\
 $1.1$ &  $\frac{2.7\times 10^{-2}}{4.7 \times 10^{-1}}$ &  $\frac{5.5\times 10^{-5}}{1.8 \times 10^{-1}}$ & $\frac{1.1\times 10^{-7}}{6.9 \times 10^{-2}}$ & $\frac{2.1\times 10^{-10}}{2.7 \times 10^{-2}}$  \\
 $1.01$ & $\frac{5.6\times 10^{-1}}{9.2 \times 10^{-1}}$ & $\frac{1.0\times 10^{-1}}{8.4 \times 10^{-1}}$  & $\frac{1.5\times 10^{-2}}{7.6 \times 10^{-1}}$ & $\frac{2.0\times 10^{-3}}{6.9 \times 10^{-1}}$   \\ \bottomrule
\end{tabular}
\caption{$L_{k-1}/R_{k-1}$ For different values of spectral gap $\lambda_{1}/\lambda_{2}$ and iteration number $m$, Table from \citep{golub2012matrix}}
\label{table:lanczosvspower}
\end{table}
\subsection{The Problem of Moments: Spectral Density Estimation Using Lanczos}
In this section we show that that the Lanczos Tri-Diagonal matrix corresponds to an orthogonal polynomial basis which matches the moments of $\vv^{T}\mH^{m}\vv$ and that when $\vv$ is a zero mean random vector with unit variance, this corresponds to the moment of the underlying spectral density.
\paragraph{Stochastic trace estimation}
\label{subsec:stochastictrace}
Using the expectation of quadratic forms, for zero mean, unit variance random vectors
\begin{equation}
\begin{aligned}
\label{eq:stochtrace}
    \mathbb{E}_{\vv}\text{Tr}(\vv^{T}\mH^{m}\vv) & =  \text{Tr}\mathbb{E}_{\vv}(\vv\vv^{T}\mH^{m}) = \text{Tr}(\mH^{m}) \\
    & = \sum_{i=1}^{P}\lambda_{i}^{m} = P \int_{\lambda \in \mathcal{D}} \lambda^{m} d\mu(\lambda) \\
\end{aligned}
\end{equation}
where we have used the linearity of trace and expectation. Hence in expectation over the set of random vectors, the trace of the inner product of $\vv$ and $\mH^{m}\vv$ is equal to the $m$'th moment of the spectral density of $\mH$.
\paragraph{Lanczos-Stieltjes}
\label{subsec:lanczos-stieltjes}
The Lanczos tri-diagonal matrix $\mT$ can be derived from the Moment matrix $\mM$, corresponding to the discrete measure $d\alpha(\lambda)$ satisfying the moments $\mu_{i} = \vv^{T}\mH^{i}\vv = \int \lambda^{i}d\alpha(\lambda)$ \citep{golub1994matrices}
\[
\mM = \begin{bmatrix}
1 & \vv^{T}\mH\vv  &\dots  & \vv^{T}\mH^{m-1}\vv\\
\vv^{T}\mH\vv & \vv^{T}\mH^{2}\vv & \ddots  & \vdots\\
\vdots & \ddots & \ddots & \vdots\\
\vv^{T}\mH^{m-1}\vv & \dots & \dots & \vv^{T}\mH^{2m-2}\vv
\end{bmatrix}
\]
and hence for a zero mean unit variance initial seed vector, the eigenvector/eigenvalue pairs of $\mT$ contain information about the spectral density of $\mH$ as shown in section \ref{subsec:stochastictrace}. This is given by the following Theorem
\begin{theorem}
	\label{theorem:lanczosspectrum}
	The eigenvalues of $T_{k}$ are the nodes $t_{j}$ of the Gauss quadrature rule, the weights $w_{j}$ are the squares of the first elements of the normalized eigenvectors of $T_{k}$
\end{theorem}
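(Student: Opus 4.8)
The plan is to exploit the classical identification of the Lanczos tridiagonalization with a family of orthogonal polynomials, following \citet{golub1994matrices}. First I would show that, for a unit-norm seed $\vv$ and assuming the Lanczos process does not break down before step $k$, the iterates of Algorithm~\ref{alg:lanczos} can be written $\vv_j = p_{j-1}(\mH)\vv$ with $p_{j-1}$ a polynomial of exact degree $j-1$; the Lanczos recurrence
\[
\beta_{j+1}\vv_{j+1} = \mH\vv_j - \alpha_j\vv_j - \beta_j\vv_{j-1}
\]
then becomes the three-term recurrence $\beta_{j+1}p_j(\lambda) = (\lambda-\alpha_j)p_{j-1}(\lambda) - \beta_j p_{j-2}(\lambda)$, and orthonormality of the $\vv_j$ is exactly orthonormality of the $p_j$ with respect to $\langle f,g\rangle_\alpha = \vv^T f(\mH)g(\mH)\vv = \int fg\,d\alpha$, where $\alpha$ is the atomic spectral measure $d\alpha(\lambda) = \sum_i (\vv^T\vz_i)^2\,\delta(\lambda-\lambda_i)$, a probability measure since $\|\vv\| = 1$. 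Hence $T_k$ is the Jacobi matrix of this orthonormal family and its characteristic polynomial is a scalar multiple of $p_k$.

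Next I would invoke the standard Gauss-quadrature theorem: for a measure with at least $k$ support points there is a unique $k$-node rule exact on all polynomials of degree $\le 2k-1$, its nodes are the zeros of $p_k$ — i.e. the eigenvalues $t_j$ of $T_k$ — and its weights (the Christoffel numbers) are strictly positive. This settles the nodes. For the weights, diagonalize $T_k = Y\Lambda Y^T$ with orthonormal columns $\vy_1,\dots,\vy_k$ and $\Lambda = \mathrm{diag}(t_1,\dots,t_k)$; the crucial observation is that the (unnormalized) eigenvector belonging to $t_j$ has $\ell$-th entry $p_{\ell-1}(t_j)$, which one reads straight off the recurrence interpreted as $T_k\vy = t_j\vy$. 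After normalization, $(\ve_1^T\vy_j)^2 = p_0(t_j)^2\big/\sum_{m=0}^{k-1}p_m(t_j)^2 = 1\big/\sum_{m=0}^{k-1}p_m(t_j)^2$ since $p_0\equiv 1$, and by the Christoffel–Darboux identity this is precisely the $j$-th Christoffel number $w_j$.

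The version I would actually write up bypasses Christoffel–Darboux: take exactness on degree $\le 2k-1$ as the defining property of the $k$-node Gauss rule and simply check that $t_j$ (the eigenvalues of $T_k$) and $w_j := (\ve_1^T\vy_j)^2$ satisfy it. Indeed $T_k = Y\Lambda Y^T$ gives $\ve_1^T T_k^m\ve_1 = \sum_j (\ve_1^T\vy_j)^2 t_j^m = \sum_j w_j t_j^m$, while the Lanczos/moment-matrix construction recalled in Section~\ref{subsec:lanczos-stieltjes} together with the stochastic-trace identity of Section~\ref{subsec:stochastictrace} gives $\ve_1^T T_k^m\ve_1 = \vv^T\mH^m\vv = \int\lambda^m\,d\alpha$ for every $m \le 2k-1$; hence $\sum_j w_j t_j^m = \int\lambda^m\,d\alpha$ for those $m$, and uniqueness of the Gauss rule finishes the proof. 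The main obstacle is precisely this moment identity $\ve_1^T T_k^m\ve_1 = \vv^T\mH^m\vv$ up to degree $2k-1$: for $m\le k-1$ it is immediate from $\mH^m\vv = V_k T_k^m\ve_1$, for $m\le 2k-2$ it follows by splitting the exponent and using $V_k^T V_k = I$, but reaching the top degree $2k-1$ requires the symmetric split $2k-1 = (k-1)+1+(k-1)$ and the projection identity $V_k^T\mH V_k = T_k$ — this last degree is exactly what makes the rule Gaussian rather than merely interpolatory. Everything else is bookkeeping.
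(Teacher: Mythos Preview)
Your proposal is correct and is precisely the classical Golub--Meurant argument; the paper does not give its own proof but simply cites \citet{golub1994matrices}, so you have in fact written out the proof the paper defers to. Your care with the top moment $2k-1$ via the split $(k-1)+1+(k-1)$ and $V_k^T\mH V_k = T_k$ is the right detail to flag, and the two variants you describe (Christoffel--Darboux versus direct moment matching plus uniqueness) are both standard and both appear in that reference.
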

\begin{proof}
See \citet{golub1994matrices}
\end{proof}
A quadrature rule is a relation of the form,
	\begin{equation}
		\label{eq:quadraturerule}
		\int_{a}^{b}f(\lambda)d\mu(\lambda) = \sum_{j=1}^{M}\rho_{j}f(t_{j})+R[f]
	\end{equation}
	for a function $f$, such that its Riemann-Stieltjes integral and all the moments exist on the measure $d\mu(\lambda)$, on the interval $[a,b]$ and where $R[f]$ denotes the unknown remainder. The first term on the RHS of \eqref{eq:quadraturerule} using Theorem \ref{theorem:lanczosspectrum} can be seen as a discrete approximation to the spectral density matching the first $m$ moments $v^{T}H^{m}v$ \citep{golub1994matrices,golub2012matrix}

For $n_{v}$ starting vectors, the corresponding discrete spectral density is given as
\begin{equation}
\label{eq:spectraldensitymanyseeds}
p(\lambda) = \frac{1}{n_{v}}\sum_{l=1}^{n_{v}}\biggl(\sum_{k=1}^{m}(\tau_{k}^{(l)})^{2}\delta(\lambda-\lambda_{k}^{(l)})\biggr),
\end{equation}
where $\tau_{k}^{(l)}$ corresponds to the first entry of the eigenvector of the $k$-th eigenvalue, $\lambda_{k}$, of the Lanczos tri-diagonal matrix, $\mT$, for the $l$-th starting vector \citep{ubaruapplications, lin2016approximating}. 

\subsection{Computational Complexity}
For large matrices, the computational complexity of the algorithm depends on the Hessian vector product, which for neural networks is $\mathcal{O}(mNP)$ where $P$ denotes the number of parameters in the network, $m$ is the number of Lanczos iterations and $N$ is the number of data-points. The full re-orthogonalisation adds two matrix vector products, which is of cost $\mathcal{O}(m^{2}P)$, where typically $m^{2}\ll N$. Each random vector used can be seen as another full run of the Lanczos algorithm, so for $d$ random vectors the total complexity is $\mathcal{O}(dmP(N+m))$

\paragraph{Importance of keeping orthogonality}
The update equations of the Lanczos algorithm lead to a tri-diagonal matrix $\mT = \R^{m\times m}$, whose eigenvalues represent the approximated eigenvalues of the matrix $\mH$ and whose eigenvectors, when projected back into the the Krylov-subspace, $\mathscr{K}(\mH,\vv)$, give the approximated eigenvectors of $\mH$. In finite precision, it is known \citep{meurant2006lanczos} that the Lanczos algorithm fails to maintain orthogonality between its Ritz vectors, with corresponding convergence failure. In order to remedy this, we re-orthonormalise at each step \citep{bai1996some} (as shown in line $9$ of Algorithm \ref{alg:lanczos}) and observe a high degree of orthonormality between the Ritz eigenvectors. Orthonormality is also essential for achieving accurate spectral resolution as the Ritz value weights are given by the squares of the first elements of the normalised eigenvectors. For the practitioner wishing to reduce the computational cost of maintaining orthogonality, there exist more elaborate schemes \citep{meurant2006lanczos,golub1994matrices}.
\begin{algorithm}[tb]
	\caption{Lanczos Algorithm}
	\label{alg:lanczos}
	
	\begin{algorithmic}[1]
		\STATE {\bfseries Input:} Hessian vector product $\{\mH\vv\}$, number of steps $m$
		\STATE {\bfseries Output:} Ritz eigenvalue/eigenvector pairs $\{\lambda_{i},\vu_{i}\}$ \& quadrature weights $\tau_{i}$
		\STATE Set $\vv := \vv/\sqrt{(\vv^{T}\vv)}$
		\STATE Set $\beta := 0$, $\vv_{old}: = \vv$
		\STATE Set $\mV(:,1) := \vv$
		\FOR {$j$ in $1,..,m$}
		\STATE $\vw = \mH\vv-\beta\vv_{old}$
		\STATE $\mT(j,j) = \alpha = \vw^{T}\vv$
		\STATE $w = \vw -\alpha\vw -\mV \mV^{T}\vw$
		\STATE $\beta = \sqrt{\vw^{T}\vw}$
		\STATE $\vv_{old} = \vv$
		\STATE $\vv = \vw/\beta$
		\STATE $\mV(:,j+1) = \vv$
		\STATE $\mT(j,j+1) = \mT(j+1,1) = \beta$
		\ENDFOR
		\STATE $\{\lambda_{i},\ve{_{i}}\} = eig(\mT)$ 
		\STATE $\vu_{i} = \mV \ve_{i}$
		\STATE $\tau_{i} = (\ve_{i}^{T}[1,0,0...0])^{2}$
	\end{algorithmic}
\end{algorithm}

\section{Code Run}
\label{sec:dnncoderun}
In our interface, to train the network, we call:

\begin{lstlisting}[language=Python, basicstyle=\small, breaklines=true]
train_network(
    dir='result/VGG16-CIFAR100/',
    dataset='CIFAR100',
    # dataset='CIFAR10', if testing on CIFAR-10 data-set.
    data_path='data/'
    data_path='data/',
    epochs=300,
    model='VGG16' 
    # model='PreResNet110', if training with Preactivated ResNet with 110 layers instead.
    optimizer='SGD',
    schedule='linear',
    # This will direct the learning rate schedule to be the linear schedule defined in Equation 19.
    optimizer_kwargs={
        'lr': 0.05,
        'momentum': 0.9,
        'weight_decay': 5e-4})
\end{lstlisting}
\subsection{Running the Example C100}
\label{subsec:example}

To replicate this example, use the following command:

\begin{lstlisting}[language=Python, basicstyle=\small, breaklines=true]
compute_eigenspectrum(
    dataset='CIFAR100',
    # dataset='CIFAR10', if testing on CIFAR-10 data-set.
    data_path='data/'
    data_path='data/',
    model='VGG16',
    checkpoint_path='result/VGG16-CIFAR100/checkpoint-00300.pt',
    save_spectrum_path='result/VGG16-CIFAR100/spectra/spectrum-00300-ggn_lanczos',
    # change accordingly, if considering the Hessian matrix
    save_eigvec=True,
    lanczos_iters=100,
    curvature_matrix='ggn_lanczos',
    # curvature_matrix='hessian_lanczos', if considering the Hessian matrix instead
)
\end{lstlisting}
\subsection{Running the Example C10}
\label{subsec:example2}
To replicate this result, use the following command:

\begin{lstlisting}[language=Python, basicstyle=\small, breaklines=true]
build_loss_landscape(
    dataset='CIFAR100',
    # dataset='CIFAR10', if testing on CIFAR-10 data-set.
    data_path='data/',
    model='VGG16',
    # Change accordingly for other architectures such as PreResNet110
    dist=0.25,
    n_points=21,
    spectrum_path='result/VGG16-CIFAR100/spectra/spectrum-00300-ggn_lanczos',
    # change accordingly for the Hessian result
    checkpoint_path='result/VGG16-CIFAR100/checkpoint-00300.pt',
    save_path='result/VGG16-CIFAR100/losslandscape-00300.npz'
)

plot_loss_landscape('result/VGG16-CIFAR100/losslandscape-00300.npz')
plt.show()
\end{lstlisting}

\section{An Illustrated Example}
\label{appendix:example}
We give an illustration on an example of using the MLRG-DeepCurvature package and more details, including detailed documentation of each user function and the output of this particular example, can be found at our open-source repository. We begin by importing the necessary functions and packages:
\begin{lstlisting}[language=Python, basicstyle=\small]
from core import *
from visualise import *
import matplotlib.pyplot as plt
\end{lstlisting}

In this example, we train a VGG16 network on CIFAR-100 for 100 epochs. In a test computer with AMD Ryzen 3700X CPU and NVIDIA GeForce RTX 2080 Ti GPU, each epoch of training takes less than 10 seconds:

\begin{lstlisting}[language=Python, basicstyle=\small]
train_network(
    dir='result/VGG16-CIFAR100/',
    dataset='CIFAR100',
    data_path='data/',
    epochs=100,
    model='VGG16',
    optimizer='SGD',
    optimizer_kwargs={
        'lr': 0.03,
        'momentum': 0.9,
        'weight_decay': 5e-4
    }
)
\end{lstlisting}

This step generates a number of training statistics files (starting with \texttt{stats-}) and checkpoint files (\texttt{checkpoint-00XXX.pt}, where \texttt{XXX} is the epoch number) that contain the \texttt{state\_dict} of the optimizer and the model. We may additionally visualise the training processes by looking at the basic statistics by calling \texttt{plot\_training} function. With the checkpoints generated, we may now compute analyse the eigenspectrum of the curvature matrix evaluated at the desired point of training. For example, if we would like to evaluate the Generalised Gauss-Newton (GGN) matrix at the end of the training with 20 Lanczos iterations, we call:

\begin{lstlisting}[language=Python, basicstyle=\small, breaklines=true]
compute_eigenspectrum(
    dataset='CIFAR100',
    data_path='data/',
    model='VGG16',
    checkpoint_path='result/VGG16-CIFAR100/checkpoint-00100.pt',
    save_spectrum_path='result/VGG16-CIFAR100/spectra/spectrum-00100-ggn_lanczos',
    save_eigvec=True,
    lanczos_iters=20,
    curvature_matrix='ggn_lanczos',
)
\end{lstlisting}
This function call saves the spectrum results (including eigenvalues, eigenvectors and other related statistics) in the \texttt{save\_spectrum\_path} path string defined. To visualise the spectrum as stem plot similar to Figure \ref{fig:diagggnmcvslanc}, we simply call:

\begin{lstlisting}[language=Python, basicstyle=\small, breaklines=true]
plot_spectrum('lanczos', path='result/VGG16-CIFAR100/spectra/spectrum-00100-ggn_lanczos.npz')
plt.show()
\end{lstlisting}

Finally, with the eigenvalues and eigenvectors computed, we might be interested in knowing how sensitive the network is to perturbation along these directions. To achieve this, we first construct a loss landscape by setting the number of query points and maximum perturbation to apply. To achieve that, we call:

\begin{lstlisting}[language=Python, basicstyle=\small, breaklines=true]
build_loss_landscape(
    dataset='CIFAR100',
    data_path='data/',
    model='VGG16',
    dist=1.,
    n_points=21,
    spectrum_path='result/VGG16-CIFAR100/spectra/spectrum-00100-ggn_lanczos',
    checkpoint_path='result/VGG16-CIFAR100/checkpoint-00100.pt',
    save_path='result/VGG16-CIFAR100/losslandscape-00100.npz'
)

plot_loss_landscape('result/VGG16-CIFAR100/losslandscape-00100.npz')
plt.show()
\end{lstlisting}

where in this example, we set the maximum perturbation to be 1 (\texttt{dist} argument) and number of query points along each direction to be 21 (\texttt{n\_points} argument). This will produce diagrams similar to Figure \ref{fig:vgg16c100losssurface} that show the effect of perturbation in the loss and accuracy for both training and testing.

\end{document}